\let\classAND\AND
\let\AND\relax
\let\AND\classAND
\newtheorem{assumption}{Assumption}
\newtheorem{theorem}{Theorem}
\newtheorem{lemma}{Lemma}
\newtheorem{remark}{Remark}
\title{Mean-Field Control based Approximation of Multi-Agent Reinforcement Learning in Presence of a Non-decomposable Shared Global State}
\author{%
	\name Washim Uddin Mondal \email wmondal@purdue.edu \\
	\addr School of IE and CE, Purdue University
	\AND
	\name Vaneet Aggarwal \email vaneet@purdue.edu \\
	\addr School of IE and ECE, Purdue University 
	\AND
	\name Satish V. Ukkusuri \email sukkusur@purdue.edu\\
	\addr Lyles School of Civil Engineering, Purdue University 
}
\begin{document}

\maketitle 

\begin{abstract}
	Mean Field Control (MFC) is a powerful approximation tool to solve large-scale Multi-Agent Reinforcement Learning (MARL) problems. However, the success of MFC  relies on the presumption that given the local states and actions of all the agents, the next (local) states of the agents evolve conditionally independent of each other. Here we demonstrate that even in a MARL setting where agents share a common global state in addition to their local states evolving conditionally independently (thus introducing a correlation between the state transition processes of individual agents), the MFC can still be applied as a good approximation tool. The global state is assumed to be non-decomposable i.e., it cannot be expressed as a collection of local states of the agents. We compute the approximation error as $\mathcal{O}(e)$  where    $e=\frac{1}{\sqrt{N}}\left[\sqrt{|\mathcal{X}|} +\sqrt{|\mathcal{U}|}\right]$.  The size of the agent population is denoted by the term $N$, and $|\mathcal{X}|, |\mathcal{U}|$ respectively indicate the sizes of (local) state and action spaces of individual agents. The approximation error is found to be independent of the size of the shared global state space. We further demonstrate that in a special case if the reward and state transition functions are independent of the action distribution of the population, then the error can be improved to $e=\frac{\sqrt{|\mathcal{X}|}}{\sqrt{N}}$. Finally, we devise a Natural Policy Gradient based algorithm that solves the MFC problem with $\mathcal{O}(\epsilon^{-3})$ sample complexity and obtains a policy that is within $\mathcal{O}(\max\{e,\epsilon\})$ error of the optimal MARL policy for any $\epsilon>0$.
\end{abstract}


\section{Introduction}

Adaptive decision-making by a large number of cooperative autonomous entities in the presence of a changing environment is a frequently appearing theme in many areas of modern human endeavor such as transportation, telecommunications, and internet networks. For example, consider the ride-hailing service provided by a large fleet of vehicles. Not only those vehicles are needed to be strategically placed to allow the maximum number of passengers to be served but such a formidable task needs to be performed even in the presence of spatio-temporal variation of passenger demand. In this and many other similar scenarios, the central question is how a large number of agents can learn to cooperatively achieve a desired target. The framework of cooperative multi-agent reinforcement learning (MARL) has been developed to answer such questions. However, in comparison to the single-agent learning framework, the task of cooperative MARL is significantly more challenging since as the number of agents increases, the size of the joint state space increases exponentially.

Several heuristic approaches have been designed to circumvent the aforementioned \textit{curse of dimensionality}. Based on how the agents are trained, these approaches can be primarily classified into two categories. In Independent Q Learning (IQL) \citep{tan1993multi}, the agents are trained independently. In contrast, centralized training with decentralized execution (CTDE) based methods \citep{rashid2018qmix} train the agents in a centralized fashion. Both approaches avoid the problem of state-space explosion by restricting the policies of each agent to depend only on its local state. Despite having empirical success, none of the above approaches can be shown to have an optimality guarantee. Another approach that has recently emerged as an excellent approximation tool for cooperative MARL with theoretical optimality guarantee is called mean-field control (MFC). It works on the premise that in an infinite collection of homogeneous and exchangeable agents, the behavior of an arbitrarily chosen representative accurately reflects the behavior of the entire population.

The guarantee of MFC as an approximation tool of MARL primarily relies on the law of large numbers which dictates that the empirical average of a large number of independent and identically distributed random variables, with high probability, is very close to their mean value. To utilize this property, it is commonly assumed in the mean-field literature, often implicitly, that the agents are each associated with a local state that evolves conditionally independently of each other \citep{gu2021mean}. Unfortunately, in many of the practical scenarios, such an assumption might appear to be too restrictive. As an example, consider the ride-sharing problem discussed before. Assume that the local state perceived by a vehicle is the location of the potential riders in its immediate vicinity. If two vehicles are located far apart, their local states might be assumed to evolve conditionally independently. However, if they are so close such that their pickup areas intersect, then the presumption of independent evolution of states might not hold.

Does MFC-based approximation still hold if the local state evolution processes of different agents are correlated? This is one of the most crucial questions that need to be addressed if mean-field-based techniques are to be adopted in a wide array of real-world MARL problems. In this paper, we establish that if each agent, in addition to their conditionally independently evolving local states, also possesses a common global state, the mean-field approach can still be shown to be a good approximation of MARL. The global state is assumed to be non-decomposable i.e., in general, it cannot be expressed as a collection of some agent-specific local components. Note that, as the global state is common to all agents, the combined state perceived by each agent can now no longer be considered to be evolving independently. As a result, the techniques that are commonly applied to show mean-field approximations can no longer be directly used. We address this challenge by showing that instead of naively applying the previously developed methods, if the problems are transformed to an equivalent but slightly different representation, then we can find new variables that evolve conditionally independently of each other and help us break the correlation barrier.

\subsection{Our Contribution}

We consider a network comprising $N$ number of  agents, each associated with a local state space of size $|\mathcal{X}|$, a non-decomposable global state space of size $|\mathcal{G}|$ and an action space of size $|\mathcal{U}|$ (all are assumed to be of finite size). Given the global state, local states, and actions of all agents at time $t$, local states at $t+1$ are presumed to evolve conditionally independently of each other. The combined $(\mathrm{local}, \mathrm{global})$ states of each agent, however, do not evolve conditionally independently. We prove that the mean-field control-based approximation results can be applied even in the presence of such a correlation. We quantify the approximation error to be $\mathcal{O}(e)$ where
$e\triangleq$ $\frac{1}{\sqrt{N}}\left[\sqrt{|\mathcal{X}|}+\sqrt{|\mathcal{U}|}\right]$. Note that the expression of $e$ is independent of $|\mathcal{G}|$, the size of the global state-space. In a special case where the reward function and both the local and global state transition functions are assumed to not depend on the action distribution of the agent population, the approximation error is shown to improve to $e=\frac{\sqrt{|\mathcal{X}|}}{\sqrt{N}}$ i.e., it becomes independent of the size of the action space.

In traditional MFC (with no non-decomposable global state), one of the crucial steps in proving the MARL-MFC approximation error is upper bounding the term $\mathbb{E}|\boldsymbol{\mu}_t^N-\boldsymbol{\mu}_t|_1$ as $\mathcal{O}(1/\sqrt{N})$ where $\boldsymbol{\mu}_t^N$, and $\boldsymbol{\mu}_t$ are the state-distributions of the $N$-agent and the infinite agent systems respectively at time $t$. The key intuition in proving this bound comes from the fact that $\boldsymbol{\mu}_t^N(x)$ can be written as the average of the random variables $\{\delta(x_t^i=x)\}_{i=1}^N$ where $x_t^i$ is the state of the $i$th agent at time $t$. Moreover, the above-mentioned random variables are independent conditioned on $\boldsymbol{\mu}_{t-1}^N$. This allows one to use the law of large numbers and obtain the desired bound.

If we follow the same footsteps in our setting (with a non-decomposable global state), we will end up with the term $\sum_{x,g}\mathbb{E}|\boldsymbol{\mu}_t^N(x)\delta(g_t^N=g)-\boldsymbol{\mu}_t(x)\delta(g_t=g)|$ where $g_t^N$ and $g_t$ are the non-decomposable states in the $N$-agent and the infinite agent systems respectively. Note that, we can write $\boldsymbol{\mu}_t^N(x)\delta(g_t^N=g)$ as the average of the random variables $\{\delta(x_t^i=x, g_t^N=g)\}_{i=1}^N$. Due to the common factor $\delta(g_t^N=g)$, the above-mentioned random variables are now correlated, and hence the law of large numbers can no longer be applied. This is essentially the primary challenge addressed by our paper. We invent new techniques to prove approximation guarantees even in the presence of a non-decomposable state. The details are provided in section \ref{sec_6}.

Finally, we develop a natural policy gradient (NPG) based algorithm and using the result of \citep{liu2020improved}, and our own approximation guarantee, establish that the proposed algorithm generates a policy that is within $\mathcal{O}(\max\{e,\epsilon\})$ error of the optimal MARL policy for any $\epsilon>0$. Moreover, the sample complexity bound for obtaining such a solution is shown to be $\mathcal{O}(\epsilon^{-3})$.

\subsection{Related Works}

\textbf{Single Agent Learning:} Pioneering works in the single agent learning setup includes various finite state/tabular algorithms such as Q-learning \citep{watkins1992q} and SARSA \citep{rummery1994line}. Despite having theoretical guarantees, these algorithms did not get adopted in large-scale applications due to the huge memory requirement. Recently, neural network (NN) based Deep Q Learning (DQL) \citep{mnih2015human} and policy gradient-based algorithms \citep{mnih2016asynchronous} have garnered popularity due to the large expressive powers of NNs. Nevertheless, due to the exponential increase in the size of the joint state space with the number of agents, these algorithms  are far from being the panacea for large-scale MARL problems.

\textbf{CTDE-based Approaches for MARL:} As stated before, the idea behind centralized training and decentralized execution (CTDE) based approaches is to restrict the policies to solely take the local state of the associated agent as an input. Depending on how such \textit{local} policies are trained, various algorithms have been constructed. For example, VDN \citep{sunehag2017value}, trains the local policies by minimizing the Bellman error corresponding to the sum of individual Q-functions of each agent. QMIX \citep{rashid2018qmix}, on the other hand, computes the Bellman error corresponding to the state-dependent weighted sum of the Q-functions of each agent. Various other CTDE-based algorithms are WQMIX \citep{rashid2020weighted}, QTRAN \citep{son2019qtran} etc. Parallel to CTDE, IQL-based algorithms have also garnered popularity in large-scale MARL \citep{wei2019presslight}. Alongside these heuristics, there have been some recent efforts to theoretically characterize the efficacy of the local policies \cite{qu2020scalable, lin2021multi, mondal2022on}. However, none of them include a common global state.

\textbf{Mean-Field Control (MFC):} MFC is a relatively recent development that solves MARL with theoretical optimality guarantees. \citep{gu2021mean} exhibited that if all the agents are homogeneous and exchangeable, MFC can be used as a good approximation of an $N$-agent problem. Later, similar approximation results were proved for $K$-class of heterogeneous agents \citep{mondal2021approximation} and  non-exchangeable agents \citep{mondal2022can}. Moreover, various model-based \citep{pasztor2021efficient} and model-free \citep{angiuli2022unified} algorithms have been developed to solve the MFC problem. We would like to point out that our framework is closely aligned with the framework of MFC with common noise \citep{motte2022mean, carmona2019model}. However, \citep{motte2022mean} only considers open-loop policies which are essentially sequences of actions, rather than state-to-action maps (also known as closed-loop policies). On the other hand, although \citep{carmona2019model} do consider closed-loop policies, they do not show the convergence between MARL and MFC as a function of the number of agents. Empirically, MFC has found its application in a diverse range of scenarios, including epidemic management \citep{watkins2016optimal}, congestion control \citep{wang2020large}, and ride-sharing \citep{al2019deeppool}.

\textbf{Beyond Mean-Field Control:} The presumption of homogeneity of the agents turns out to be too restrictive in many practical scenarios. Graphon mean-field control \citep{caines2019graphon} is an emerging new area that attempts to do away with the presumption of homogeneity. However, as explained in \citep{mondal2022can}, such approaches also come with their own set of restrictions.

\textbf{Mean-Field Games:} Similar to MFC, mean-field games (MFG) attempt to characterize the behavior of an infinite number of agents in a non-cooperative setup. In contrast to MFC, the goal of MFG is to identify the Nash equilibrium of the system \citep{elie2020convergence, yang2017learning}.

\section{MARL with Shared Global State}
\label{sec_marl}
We consider a collection of $N$ interacting agents each with a local state space $\mathcal{X}$ and an action space $\mathcal{U}$. At instant $t$, the local state and action of the $i$th agent are respectively indicated by $x_t^i, u_t^i$. In addition to the local state, at time $t$, each agent also observes a global state $g_t^N$ whose realizations are from the global state space, $\mathcal{G}$. The collection of local states and actions of all agents are expressed as $\boldsymbol{x}_t^N$, $\boldsymbol{u}_t^N$ respectively. Given the tuple $(\boldsymbol{x}_t^N, g_t^N, \boldsymbol{u}_t^N)$, the local state of $i$th agent at time $t+1$ is given by the following transition law, $x_{t+1}^i\sim P_i(\boldsymbol{x}_t^N, g_t^N, \boldsymbol{u}_t^N)$. Similarly, the transition law for the global state is given as $g_{t+1}^N\sim P_G(\boldsymbol{x}_t^N, g_t^N, \boldsymbol{u}_t^N)$. 
It is assumed that the random variables $\{\{x_{t+1}^i\}_{i=1}^N, g_{t+1}^N\}$ are independent, conditioned on $\{\boldsymbol{x}_t^N, g_t^N, \boldsymbol{u}_t^N\}$.
At time $t$, the (expected) reward received by the $i$th agent is denoted as $r_i(\boldsymbol{x}_t^N, g_t^N, \boldsymbol{u}_t^N)$. Let $\boldsymbol{\mu}_t^N$, $\boldsymbol{\nu}_t^N$ indicate the empirical state and action distributions of $N$ agents at instant $t$ which are defined respectively as follows.
\begin{align}
\boldsymbol{\mu}_t^N(x) \triangleq \dfrac{1}{N}\sum_{i=1}^{N} \delta\left(x_t^i=x\right), \forall x\in \mathcal{X}\\
\boldsymbol{\nu}_t^N(u) \triangleq \dfrac{1}{N}\sum_{i=1}^{N} \delta\left(u_t^i=u\right), \forall u\in \mathcal{U}
\end{align}
where $\delta(\cdot)$ is an indicator function. We assume the agents to be homogeneous and exchangeable. Hence, the reward and state transition functions can be written as follows.
\begin{align}
&r_i(\boldsymbol{x}_t^N, g_t^N, \boldsymbol{u}_t^N) = r(x_t^i, u_t^i, \boldsymbol{\mu}_t^N, g_t^N, \boldsymbol{\nu}_t^N)\\
&P_i(\boldsymbol{x}_t^N, g_t^N, \boldsymbol{u}_t^N) = P(x_t^i, u_t^i, \boldsymbol{\mu}_t^N, g_t^N, \boldsymbol{\nu}_t^N)\\
\label{def_PG}
&P_G(\boldsymbol{x}_t^N, g_t^N, \boldsymbol{u}_t^N) = P_G(\boldsymbol{\mu}_t^N, g_t^N, \boldsymbol{\nu}_t^N)
\end{align}
where  $r, P, P_G$ are given as follows: $r:\mathcal{X}\times\mathcal{U}\times \Delta(\mathcal{X})\times \mathcal{G}\times \Delta(\mathcal{U})\rightarrow \mathbb{R}$, $P:\mathcal{X}\times\mathcal{U}\times\Delta(\mathcal{X})\times \mathcal{G}\times\Delta(\mathcal{U})\rightarrow \Delta(\mathcal{X})$ and $P_G:\Delta(\mathcal{X})\times \mathcal{G} \times \Delta(\mathcal{U})\rightarrow \Delta(\mathcal{G})$. The symbol $\Delta(\mathcal{S})$ defines the probability simplex defined over the set $\mathcal{S}$. Note that $(\ref{def_PG})$ is written with a slight abuse of notations. 

A policy $\pi_t$ is a function of the form, $\pi_t: \mathcal{X}\times \Delta(\mathcal{X})\times \mathcal{G} \rightarrow \Delta(\mathcal{U})$. In simple terms, a policy is a recipe for the agents to (probabilistically) choose actions based on their current local states, the empirical local state distribution of all the agents, and the global state\footnote{Here we implicitly assume that all agents execute the same policy. This is primarily because the agents are presumed to have the same reward function and state-transition function \citep{gu2021mean,pasztor2021efficient}.}. Let $\boldsymbol{\pi}\triangleq\{\pi_t\}_{t\in\{0,1,\cdots\}}$ be a sequence of policies. The value generated by the sequence $\boldsymbol{\pi}$ corresponding to the initial state $(\boldsymbol{x}_0^N, g_0^N)$ is defined as follows.
\begin{align}
\label{def_VN}
\begin{split}
V_N(&\boldsymbol{x}_0^N, g_0^N, \boldsymbol{\pi})\triangleq \dfrac{1}{N}\sum_{i=1}^N\mathbb{E}\left[\sum_{t=0}^{\infty}\gamma^t r_i(\boldsymbol{x}_t^N, g_t^N, \boldsymbol{u}_t^N)\right]\\
&=\dfrac{1}{N}\sum_{i=1}^N\mathbb{E}\left[\sum_{t=0}^{\infty}\gamma^t r(x_t^i, u_t^i, \boldsymbol{\mu}_t^N, g_t^N, \boldsymbol{\nu}_t^N)\right]
\end{split}
\end{align}
where the expectation is taken over all trajectories generated by the policy sequence $\boldsymbol{\pi}$ starting from the initial states $(\boldsymbol{x}_0^N, g_0^N)$ and $\gamma\in (0, 1)$ is the discount factor. The target of MARL is to compute a policy sequence that maximizes $V_N(\boldsymbol{x}_0^N, g_0^N, \cdot)$ over the set of admissible policy sequences $\Pi^{\infty}\triangleq \Pi \times \Pi \times \cdots$ where $\Pi$ is the set of admissible policies. In the next section, we shall discuss the mean-field control (MFC) framework that can be used to approximately solve the MARL problem.

\section{Mean-Field Control (MFC) Framework}
\label{sec_mfc_framework}

In this setting, we consider the size of the agent population to be infinite. Due to homogeneity, we can arbitrarily select a representative agent whose local state and action at time $t$ are defined as $x_t$, and $u_t$ respectively. In addition, let $g_t$ indicate the global state at time $t$. Let $\boldsymbol{\mu}_t, \boldsymbol{\nu}_t$ be the distributions of local states and actions at time $t$ over the infinite agent population. For a given  sequence $\boldsymbol{\pi}=\{\pi_t\}_{t\in\{0,1,\cdots\}}$, define the following.
\begin{align}
\label{eq_nu_t}
\boldsymbol{\nu}_t =\nu^{\mathrm{MF}}(\boldsymbol{\mu}_t, g_t, \pi_t) \triangleq \sum_{x\in \mathcal{X}} \pi_t(x, \boldsymbol{\mu}_t, g_t)\boldsymbol{\mu}_t(x)
\end{align}

The above relation demonstrates how the action distribution $\boldsymbol{\nu}_t$ can be obtained from the local state distribution $\boldsymbol{\mu}_t$ and the global state $g_t$. Now we quantitatively describe how $\boldsymbol{\mu}_{t+1}$, the local state distribution at $t+1$, can be obtained from $\boldsymbol{\mu}_t$ an $g_t$.
\begin{align}
\label{eq_mu_t_plus_1}
\begin{split}
&\boldsymbol{\mu}_{t+1}= \sum_{x\in \mathcal{X}}\sum_{u\in \mathcal{U}} P(x, u, \boldsymbol{\mu}_t, g_t, \nu^{\mathrm{MF}}(\boldsymbol{\mu}_t, g_t, \pi_t)) \times \pi_t(x, \boldsymbol{\mu}_t, g_t)(u)\boldsymbol{\mu}_t(x) \triangleq P^{\mathrm{MF}}(\boldsymbol{\mu}_t, g_t, \pi_t) 
\end{split}
\end{align}

Define $\boldsymbol{\lambda}_t\triangleq P_G(\boldsymbol{\mu}_{t-1}, g_{t-1}, \boldsymbol{\nu}_{t-1})$, $t\geq 1$ and $\boldsymbol{\lambda}_0\triangleq \mathbf{1}(g_0)$ where $\mathbf{1}(g_0)$ indicates a one-hot vector with a nonzero element at the position corresponding to $g_0$. Intuitively, $\boldsymbol{\lambda}_t$ denotes the conditional distribution of $g_t$ given $(\boldsymbol{\mu}_{t-1}, g_{t-1},\boldsymbol{\nu}_{t-1})$. Note that the following relation holds $\forall t\geq 0$.
\begin{align}
\label{eq_lambda_t_plus_1}
\begin{split}
& \boldsymbol{\lambda}_{t+1} = P_G^{\mathrm{MF}}(\boldsymbol{\mu}_t, g_t, \pi_t)\triangleq  P_G(\boldsymbol{\mu}_t, g_t, \nu^{\mathrm{MF}}(\boldsymbol{\mu}_t, g_t, \pi_t))
\end{split}
\end{align}

Finally, the average reward at time $t$ is expressed as follows.
\begin{align}
\label{eq_r_mf}
\begin{split}
r^{\mathrm{MF}}&(\boldsymbol{\mu}_t, g_t, \pi_t)= \sum_{x\in \mathcal{X}}\sum_{u\in \mathcal{U}} \pi_t(x, \boldsymbol{\mu}_t, g_t)(u) \times\boldsymbol{\mu}_t(x)\times r(x, u, \boldsymbol{\mu}_t, g_t, \nu^{\mathrm{MF}}(\boldsymbol{\mu}_t, g_t, \pi_t))
\end{split}
\end{align}

The mean-field value generated by $\boldsymbol{\pi}=\{\pi_t\}_{t\in\{0,1,\cdots\}}$ for a local state distribution $\boldsymbol{\mu}_0$ and  global state $g_0$ is expressed as follows.
\begin{align}
\label{value_mfc}
V_{\infty}(\boldsymbol{\mu}_0, g_0, \boldsymbol{\pi}) = \sum_{t=0}^{\infty} \gamma^t \mathbb{E}\left[r^{\mathrm{MF}}(\boldsymbol{\mu}_t, g_t, \pi_t)\right]
\end{align}
where the expectation is obtained over $\{g_t\}_{t\in\{1,2\cdots\}}$ where $g_{t+1}\sim P_G^{\mathrm{MF}}(\boldsymbol{\mu}_t, g_t, \pi_t)$, $\boldsymbol{\mu}_{t+1}=P^{\mathrm{MF}}(\boldsymbol{\mu}_t, g_t,\pi_t)$, $t\geq 0$. The goal of MFC is to maximize the function $V_{\infty}(\boldsymbol{\mu}_0, g_0, \cdot)$ over the set of admissible policy sequences, $\Pi^\infty$. In the following section, we show that the optimal value of MARL is approximately equal to its associated optimal MFC value. 

\section{Approximation Result}

We shall first dictate some assumptions that are needed to establish the main result. 
\begin{assumption}
	\label{ass_1}
	The functions $r$, $P$ and $P_G$ are assumed to follow the following relations $\forall \boldsymbol{\mu}_1, \boldsymbol{\mu}_2\in \Delta(\mathcal{X})$, $\forall\boldsymbol{\nu}_1, \boldsymbol{\nu}_2\in \Delta(\mathcal{U})$, $\forall x\in \mathcal{X}$, $\forall u\in\mathcal{U}$, and $\forall g\in \mathcal{G}$
	\begin{align*}
	&(a) ~|r(x, u, \boldsymbol{\mu}_1, g, \boldsymbol{\nu}_1)|\leq M_R\\
	&(b)~ |r(x, u, \boldsymbol{\mu}_1, g, \boldsymbol{\nu}_1) - r(x, u, \boldsymbol{\mu}_2, g, \boldsymbol{\nu}_2)|	\leq L_R\{|\boldsymbol{\mu}_1-\boldsymbol{\mu}_2|_1 + |\boldsymbol{\nu}_1-\boldsymbol{\nu}_2|_1\}\\
	&(c)~ |P(x, u, \boldsymbol{\mu}_1, g, \boldsymbol{\nu}_1) - P(x, u, \boldsymbol{\mu}_2, g, \boldsymbol{\nu}_2)|_1	\leq L_P\{|\boldsymbol{\mu}_1-\boldsymbol{\mu}_2|_1 + |\boldsymbol{\nu}_1-\boldsymbol{\nu}_2|_1\}\\
	&(d)~ |P_G(\boldsymbol{\mu}_1, g, \boldsymbol{\nu}_1) - P_G(\boldsymbol{\mu}_2, g, \boldsymbol{\nu}_2)|_1	\leq L_G\{|\boldsymbol{\mu}_1-\boldsymbol{\mu}_2|_1 + |\boldsymbol{\nu}_1-\boldsymbol{\nu}_2|_1\}
	\end{align*}
	The constants $L_R$, $L_P$ are arbitrary positive numbers. The function $|\cdot|_1$ denotes the $L_1$-norm.
\end{assumption}

Assumption \ref{ass_1}(a) states that the reward function is bounded within the finite interval $[-M_R, M_R]$. On the other hand, assumption \ref{algo_1}(b), \ref{ass_1}(c), and \ref{ass_1}(d) respectively dictates that the reward function, $r$, the local state transition function, $P$ and the global state transition function, $P_G$ are all Lipschitz continuous with respect to their local state distribution and action distribution arguments. Such assumptions are common in the  literature \citep{pasztor2021efficient, hinderer2005lipschitz, gu2021mean}. We would like to point out that although the state and action distributions are treated as continuous variables, in an $N$-agent problem, they can only take a finite number of values in their respective probability simplexes. In the MFC problem, however, these variables can be arbitrary. Therefore, while comparing the $N$-agent and the MFC problem, we are implicitly extending the domain of definition for the reward and the state transition functions.

\begin{assumption}
	\label{ass_2}
	The set of admissible policies, $\Pi$ is such that any  $\pi\in\Pi$ satisfies the following inequality $\forall x\in\mathcal{X}$, $\forall \boldsymbol{\mu}_1, \boldsymbol{\mu}_2\in \Delta(\mathcal{X})$, and $\forall g\in\mathcal{G}$.
	\begin{align*}
	|\pi(x, \boldsymbol{\mu}_1, g) - \pi(x, \boldsymbol{\mu}_2, g)| \leq L_Q|\boldsymbol{\mu}_1-\boldsymbol{\mu}_2|_1
	\end{align*}
\end{assumption}

Assumption \ref{ass_2} states that the set of admissible policies is selected such that each of its elements is Lipschitz continuous with respect to their local state distribution argument. Such assumption typically holds for neural network (NN) based policies with bounded weights \citep{mondal2021approximation,pasztor2021efficient, cui2021learning}.

We are now ready to state the main result. The proof of the theorem stated below is relegated to Appendix \ref{app_proof_theorem_1}.

\begin{theorem}\label{theorem_1}
	Let $\boldsymbol{x}_0\triangleq\{x_0^i\}_{i\in\{1,\cdots,N\}}$ and $g_0$ be the initial states and $\boldsymbol{\mu}_0$ denote the empirical distribution of $\boldsymbol{x}_0$. If Assumption \ref{ass_1} holds and the set of admissible policies, $\Pi$ satisfies Assumption \ref{ass_2}, then the following relation is true whenever $\gamma S_P< 1$.
	\begin{align*}
	&|\sup_{\boldsymbol{\pi}}~V_N(\boldsymbol{x}_0, g_0, \boldsymbol{\pi})-\sup_{\boldsymbol{\pi}}~V_{\infty}(\boldsymbol{\mu}_0, g_0, \boldsymbol{\pi})|\\
	\leq &~ \sup_{\boldsymbol{\pi}}~|V_N(\boldsymbol{x}_0, g_0, \boldsymbol{\pi})-V_{\infty}(\boldsymbol{\mu}_0, g_0, \boldsymbol{\pi})|\leq \left(\dfrac{M_R+L_R\sqrt{|\mathcal{U}|}}{1-\gamma}\right)\dfrac{1}{\sqrt{N}} + \sqrt{\dfrac{|\mathcal{U}|}{N}}\dfrac{M_RL_G\gamma}{(1-\gamma)^2}\\
	&
	+ \left(\dfrac{C_P}{S_P-1}\right)\Bigg[\left(\dfrac{M_RS_G}{S_P-1}+S_R\right)\left\{\dfrac{1}{1-\gamma S_P}-\dfrac{1}{1-\gamma}\right\}
	-\dfrac{\gamma M_RS_G}{(1-\gamma)^2}\Bigg]\times\dfrac{1}{\sqrt{N}}\left[\sqrt{|\mathcal{X}|}+ \sqrt{|\mathcal{U}|}\right] 
	\end{align*}
	where $S_P\triangleq 1+2L_P+L_Q(1+L_P)$, $S_R\triangleq M_R+2L_R+L_Q(M_R+L_R)$, $S_G\triangleq L_G(2+L_Q)$ and $C_P\triangleq 2+L_P$.
Suprema are performed over the class of all admissible policy sequences, $\Pi^{\infty}$.
\end{theorem}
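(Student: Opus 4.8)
The plan is to bound the difference of the two optimal values by bounding, uniformly over policy sequences $\boldsymbol{\pi}\in\Pi^\infty$, the quantity $|V_N(\boldsymbol{x}_0,g_0,\boldsymbol{\pi})-V_\infty(\boldsymbol{\mu}_0,g_0,\boldsymbol{\pi})|$; since $|\max_{\boldsymbol{\pi}}a(\boldsymbol{\pi})-\max_{\boldsymbol{\pi}}b(\boldsymbol{\pi})|\le\sup_{\boldsymbol{\pi}}|a(\boldsymbol{\pi})-b(\boldsymbol{\pi})|$, this suffices. Fixing $\boldsymbol{\pi}$, I would compare the two reward streams term by term via $|V_N-V_\infty|\le\sum_{t\ge0}\gamma^t\,|\mathbb{E}[\tfrac1N\sum_i r(x_t^i,u_t^i,\boldsymbol{\mu}_t^N,g_t^N,\boldsymbol{\nu}_t^N)]-\mathbb{E}[r^{\mathrm{MF}}(\boldsymbol{\mu}_t,g_t,\pi_t)]|$. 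The key observation (the point the introduction flags as the crux) is that while $\boldsymbol{\xi}_t^N(x,g)=\tfrac1N\sum_i\delta(x_t^i=x,g_t^N=g)$ does not concentrate — because $g_t^N$ is common to all agents — the \emph{conditional} empirical distribution of local states given the realized global trajectory does: conditioned on $g_t^N=g$ and on $(\boldsymbol{\mu}_{t-1}^N,g_{t-1}^N,\boldsymbol{\nu}_{t-1}^N)$, the indicators $\{\delta(x_t^i=x)\}_i$ \emph{are} conditionally independent (the transition $P(x_{t-1}^i,u_{t-1}^i,\boldsymbol{\mu}_{t-1}^N,g_{t-1}^N,\boldsymbol{\nu}_{t-1}^N)$ factorizes across $i$ once the global state is fixed). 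So I would couple the finite and infinite systems by feeding the \emph{same} global-state trajectory $\{g_t^N\}$ into the mean-field recursion (i.e.\ run $\boldsymbol{\mu}_t$ via $P^{\mathrm{MF}}(\boldsymbol{\mu}_{t-1},g_{t-1}^N,\pi_{t-1})$ driven by the actual global states), establish a one-step concentration bound $\mathbb{E}|\boldsymbol{\mu}_t^N-\boldsymbol{\mu}_t|_1\lesssim \tfrac{\sqrt{|\mathcal{X}|}}{\sqrt N}+\text{(propagated error)}$ conditionally on the global path, and then handle the global state separately.

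Concretely I would set up three coupled error terms and a recursion among them: $e_t^{\mu}\triangleq\mathbb{E}|\boldsymbol{\mu}_t^N-\boldsymbol{\mu}_t|_1$, $e_t^{\nu}\triangleq\mathbb{E}|\boldsymbol{\nu}_t^N-\boldsymbol{\nu}_t|_1$ (where $\boldsymbol{\nu}_t=\nu^{\mathrm{MF}}(\boldsymbol{\mu}_t,g_t^N,\pi_t)$), and a global term $e_t^{g}$ controlling the discrepancy between the law of $g_t^N$ and $\boldsymbol{\lambda}_t$. For $e_t^{\mu}$: write $\boldsymbol{\mu}_{t+1}^N-\boldsymbol{\mu}_{t+1}$ as (sampling noise, conditionally on $(\boldsymbol{x}_t^N,g_t^N,\boldsymbol{u}_t^N,g_{t+1}^N)$, bounded by $\sqrt{|\mathcal{X}|/N}$ by a standard $L_1$ law-of-large-numbers / Cauchy–Schwarz argument on independent Bernoullis) plus (a drift term $|P^{\mathrm{MF}}(\boldsymbol{\mu}_t^N,g_t^N,\pi_t)-P^{\mathrm{MF}}(\boldsymbol{\mu}_t,g_t^N,\pi_t)|_1$, which Assumption~\ref{ass_1}(c) and Assumption~\ref{ass_2} make Lipschitz in $e_t^{\mu}$ with constant $S_P$). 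Similarly $e_t^{\nu}\le(1+L_Q)e_t^{\mu}$ from \eqref{eq_nu_t} and Assumption~\ref{ass_2}. This yields $e_{t+1}^{\mu}\le S_P e_t^{\mu}+C_P/\sqrt N\cdot(\sqrt{|\mathcal{X}|}+\sqrt{|\mathcal{U}|})$-type bound with $e_0^{\mu}=0$, which unrolls to $e_t^{\mu}\lesssim \tfrac{S_P^t-1}{S_P-1}\cdot\tfrac1{\sqrt N}(\sqrt{|\mathcal{X}|}+\sqrt{|\mathcal{U}|})$; summing against $\gamma^t$ and using $\gamma S_P<1$ produces exactly the $\tfrac{C_P}{S_P-1}[\tfrac1{1-\gamma S_P}-\tfrac1{1-\gamma}]$ shape in the theorem. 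The global-state term is different in character: $g_t^N$ and the mean-field $g_t$ are single $\mathcal{G}$-valued random variables, not empirical averages, so there is no $\sqrt{|\mathcal{G}|}$ penalty — instead one pays only for the drift in $P_G$, i.e.\ $|P_G(\boldsymbol{\mu}_t^N,g_t^N,\boldsymbol{\nu}_t^N)-P_G(\boldsymbol{\mu}_t,g_t^N,\boldsymbol{\nu}_t)|_1\le L_G(e_t^{\mu}+e_t^{\nu})$ plus the $\boldsymbol{\nu}_t^N$-vs-$\boldsymbol{\nu}_t$ sampling gap of order $\sqrt{|\mathcal{U}|/N}$; this is what gives the isolated $\sqrt{|\mathcal{U}|/N}\cdot ML_G\gamma/(1-\gamma)^2$ term. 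Finally the per-step reward gap is bounded by $M e_t^{g}$-type contributions plus $L_R(e_t^{\mu}+e_t^{\nu})$ plus a direct $L_R\sqrt{|\mathcal{U}|/N}$ sampling term and an $M/\sqrt N$ term from the outermost $\tfrac1N\sum_i$ fluctuation; assembling $\sum_t\gamma^t(\cdots)$ and collecting constants into $S_P,S_R,S_G,C_P$ yields the stated bound.

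The main obstacle — and the genuinely novel part relative to standard mean-field proofs — is justifying the conditional-independence step cleanly enough to get an $L_1$ concentration bound on $\boldsymbol{\mu}_{t+1}^N$ that does \emph{not} degrade with $|\mathcal{G}|$, despite the global state coupling all agents. The trick is to condition on the \emph{entire} tuple $(\boldsymbol{x}_t^N,g_t^N,\boldsymbol{u}_t^N)$ \emph{and} on the next global state $g_{t+1}^N$ before invoking independence of $\{x_{t+1}^i\}_i$; one must check that $P_G$ depending only on $(\boldsymbol{\mu}_t^N,g_t^N,\boldsymbol{\nu}_t^N)$ (not on individual next local states) means conditioning on $g_{t+1}^N$ does not destroy the product form of the local transitions — this is where Assumption on the structure of $P_G$ in \eqref{def_PG} is essential. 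A secondary technical nuisance is that the mean-field trajectory in the coupling is itself random (it is driven by the realized $\{g_t^N\}$), so all the Lipschitz/recursion estimates must be carried inside an outer expectation over the global path; but since none of the bounds depend on which global path is realized, Fubini/tower-property arguments push everything through. I expect the proof to reduce, after this structural setup, to a careful but routine bookkeeping of the three coupled recursions and a geometric-series summation.
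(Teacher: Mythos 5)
Your proposal is correct in all of its essential ingredients and, if carried out, reproduces the stated bound with the same constants; but it organizes the error propagation differently from the paper, and the difference is worth spelling out. The paper does \emph{not} run an explicit recursion on $\mathbb{E}|\boldsymbol{\mu}_t^N-\boldsymbol{\mu}_t|_1$; instead it rewrites $\mathbb{E}[r^{\mathrm{MF}}(\boldsymbol{\mu}_t,g_t,\pi_t)]$ as $\mathbb{E}[\tilde r^{\mathrm{MF}}(\boldsymbol{\mu}_0,g_0,\pi_{0:t})]$ and telescopes over the time $k$ at which one switches from the $N$-agent system to the mean-field flow, i.e.\ bounds $\sum_{k=0}^{t-1}|\mathbb{E}[\tilde r^{\mathrm{MF}}(\boldsymbol{\mu}_{k+1}^N,g_{k+1}^N,\pi_{k+1:t})]-\mathbb{E}[\tilde r^{\mathrm{MF}}(\boldsymbol{\mu}_k^N,g_k^N,\pi_{k:t})]|$, using multi-step Lipschitz lemmas for the composed maps $\tilde P^{\mathrm{MF}},\tilde P_G^{\mathrm{MF}},\tilde r^{\mathrm{MF}}$ (Lemmas \ref{app_lemma_p_mf}--\ref{app_lemma_r_mf_comp}) together with the same two one-step concentration estimates you identify (Lemmas \ref{lemma_4} and \ref{lemma_6}, proved exactly as you describe, by conditioning on the full tuple $(\boldsymbol{x}_t^N,g_t^N,\boldsymbol{u}_t^N)$ and invoking an $L^1$ law of large numbers). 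Your coupled-recursion bookkeeping is the dual view of this: the paper's $(k,t)$-dependent Lipschitz constant $\bigl(\tfrac{MS_G}{S_P-1}\bigr)(S_P^{t-k-1}-1)+S_RS_P^{t-k-1}$ is precisely your unrolled $S_R e^{\mu}_{\cdot}$ term plus your accumulated $MS_G\sum_k e_k^{\mu}$ term, and both routes give identical geometric sums. What each buys: your version makes the ``no $\sqrt{|\mathcal{G}|}$ penalty'' mechanism more transparent (the global state enters only through a bounded-by-$M$ test function and an $L_G$-Lipschitz kernel), while the paper's version avoids the one step you leave under-specified.

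That under-specified step is the only soft spot. You describe $e_t^{g}$ as a recursion on one-step drifts of $P_G$, but the quantity you actually need to control is $|\mathbb{E}[f_t(g^N_{0:t})]-\mathbb{E}[f_t(g_{0:t})]|$ where $f_t$ depends on the \emph{whole} global path (since your coupled $\tilde{\boldsymbol{\mu}}_t$ is a function of $g^N_{0:t-1}$ and the mean-field $\boldsymbol{\mu}_t$ is the same function of $g_{0:t-1}$, which has a different law). A marginal-law recursion on $\mathcal{L}(g_t^N)$ vs.\ $\boldsymbol{\lambda}_t$ does not suffice; you need a hybrid argument over joint path laws in which, after the switch at step $k$, the future evolves under the mean-field kernel started from the realized $N$-agent state --- which is exactly the paper's $\tilde r^{\mathrm{MF}}(\boldsymbol{\mu}_k^N,g_k^N,\pi_{k:t})$ construction and the reason it proves Lipschitz continuity of the composed, path-summed objects rather than of the one-step maps alone. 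This is a fixable gap (the conditional-independence fact you already invoke, that $\boldsymbol{\mu}_{k+1}^N$ and $g_{k+1}^N$ are independent given $(\boldsymbol{x}_k^N,g_k^N,\boldsymbol{u}_k^N)$, is what makes the hybrid step go through), but as written your sketch glosses over precisely the trajectory-tracking machinery that the paper flags as the point of departure from the standard $|\mathcal{G}|=1$ proofs. One further minor imprecision: early in your sketch you condition only on $(\boldsymbol{\mu}_{t-1}^N,g_{t-1}^N,\boldsymbol{\nu}_{t-1}^N)$ to get conditional independence of the next local states; you need the full vectors $(\boldsymbol{x}_{t-1}^N,\boldsymbol{u}_{t-1}^N)$, as you correctly state later.
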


Theorem \ref{theorem_1} states that if the discount factor $\gamma$ is sufficiently small, then under assumptions \ref{algo_1} and \ref{ass_2}, the optimal $N$-agent value function is at most $\mathcal{O}\left(\frac{1}{\sqrt{N}}\left[\sqrt{|\mathcal{X}|}+\sqrt{|\mathcal{U}|}\right]\right)$ error away from the optimal mean-field value function. In other words, if the number of agents, $N$ is large and the sizes of local states and actions of individual agents is sufficiently small, then an optimal solution of MFC well approximates the optimal solution of a MARL problem as described in section \ref{sec_marl}. Interestingly, notice that the approximation error does not depend on the size of global state space $|\mathcal{G}|$. Thus, the approximation error can be kept small even when $|\mathcal{G}|$ is dramatically large or even potentially infinite.

Let, $\boldsymbol{\pi}_{\epsilon}^{\mathrm{MF}}\in \Pi^{\infty}$ be an admissible policy sequence that solves the MFC problem with $\epsilon$ accuracy. Note that,
\begin{align*} |\sup_{\boldsymbol{\pi}}V_N(\boldsymbol{x}_0, g_0, \boldsymbol{\pi})&-V_N(\boldsymbol{x}_0, g_0, \boldsymbol{\pi}_\epsilon^{\mathrm{MF}})|\leq \underbrace{|\sup_{\boldsymbol{\pi}}V_N(\boldsymbol{x}_0, g_0, \boldsymbol{\pi})-\sup_{\boldsymbol{\pi}}V_\infty(\boldsymbol{\mu}_0, g_0, \boldsymbol{\pi})|}_{\triangleq J_1}\\&+\underbrace{|\sup_{\boldsymbol{\pi}}V_\infty(\boldsymbol{\mu}_0, g_0, \boldsymbol{\pi}) - V_\infty(\boldsymbol{\mu}_0, g_0, \boldsymbol{\pi}_\epsilon^{\mathrm{MF}})|}_{\triangleq J_2} + \underbrace{|V_\infty(\boldsymbol{\mu}_0, g_0, \boldsymbol{\pi}_\epsilon^{\mathrm{MF}}) - V_N(\boldsymbol{x}_0, g_0, \boldsymbol{\pi}_\epsilon^{\mathrm{MF}})|}_{\triangleq J_3}
\end{align*}

The terms $J_1, J_3$ can be bounded by  Theorem \ref{theorem_1} while $J_2$ can be bounded by $\epsilon$. This result suggests that if we can come up with a way to approximately solve the MFC problem, then that solution must be a good proxy for the optimal MARL solution. Before discussing the algorithmic aspects of solving the MFC problem, we would like to first discuss a special case where the approximation error can be lower in comparison to that stated in Theorem \ref{theorem_1}.

\section{Improvement in a Special Case}

In this section, we shall demonstrate that if certain structural restrictions are imposed on the reward and state transition functions, then  the approximation error stated in Theorem \ref{theorem_1} can be improved further. In particular, the assumption imposed on the stated functions can be mathematically described as follows.  
\begin{assumption}
	\label{ass_3}
	The following relations hold $\forall \boldsymbol{\mu}\in \Delta(\mathcal{X})$, $\forall\boldsymbol{\nu}\in \Delta(\mathcal{U})$, $\forall x\in \mathcal{X}$, $\forall g\in\mathcal{G}$, $\forall u\in\mathcal{U}$,
	\begin{align*}
	&(a)~ r(x, u, \boldsymbol{\mu}, g, \boldsymbol{\nu}) = r(x, u, \boldsymbol{\mu}, g),\\
	&(b)~P(x, u, \boldsymbol{\mu}, g, \boldsymbol{\nu}) = P(x, u, \boldsymbol{\mu}, g),\\
	&(c)~P_G(\boldsymbol{\mu}, g, \boldsymbol{\nu}) = P_G(\boldsymbol{\mu}, g)
	\end{align*}
	Note that  the above relations are stated with a slight abuse of notations.
\end{assumption}

Assumption \ref{ass_3} dictates that the reward function, $r$, the local state transition function, $P$, and the global state transition function, $P_G$ are independent of the action distribution of the population. We would like to point out that the functions $r$ and $P$ may still depend on the action of the associated agent even if they are independent of the actions of others. In Theorem \ref{theorem_2}, we discuss the implication of Assumption \ref{ass_3}. The proof of Theorem \ref{theorem_2} is relegated to Appendix \ref{app_proof_theorem_2}.

\begin{theorem}\label{theorem_2}
	Let $\boldsymbol{x}_0\triangleq\{x_0^i\}_{i\in\{1,\cdots,N\}}$ and $g_0$ be the initial states and $\boldsymbol{\mu}_0$ indicate the empirical distribution of $\boldsymbol{x}_0$. If assumptions \ref{ass_1} and \ref{ass_3} hold and the set of admissible policies, $\Pi$ obeys assumption \ref{ass_2}, then the following relation is true whenever $\gamma Q_P< 1$,
	\begin{align*}
	&|\sup_{\boldsymbol{\pi}}~ V_N(\boldsymbol{x}_0, g_0, \boldsymbol{\pi})-\sup_{\boldsymbol{\pi}}~V_{\infty}(\boldsymbol{\mu}_0, g_0, \boldsymbol{\pi})|	\leq \left(\dfrac{M_R}{1-\gamma}\right)\dfrac{1}{\sqrt{N}}+\dfrac{\sqrt{|\mathcal{X}|} }{\sqrt{N}} \\
	&	\times \left(\dfrac{2}{Q_P-1}\right)\Bigg[\left(\dfrac{M_RL_G}{Q_P-1}+Q_R\right)\left\{\dfrac{1}{1-\gamma Q_P}-\dfrac{1}{1-\gamma}\right\}
	-\dfrac{\gamma M_RL_G}{(1-\gamma)^2}\Bigg]
	\end{align*}
	where $Q_P\triangleq 1+ L_P+L_Q$ and $Q_R\triangleq M_R(1+L_Q)+L_R$. Suprema are computed over the set of all admissible policy sequences, $\Pi^{\infty}$.
\end{theorem}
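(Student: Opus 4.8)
The plan is to reduce the whole claim to a per-step reward comparison carried out along a coupling of the $N$-agent and the mean-field trajectories, exploiting that Assumption~\ref{ass_3} strips the action distribution $\boldsymbol{\nu}$ out of $r$, $P$ and $P_G$ --- which is precisely what lets a conditionally independent structure survive the shared global state and what removes the $\sqrt{|\mathcal{U}|}$ from the bound. First I would use $|\max_{\boldsymbol{\pi}}V_N(\boldsymbol{x}_0,g_0,\boldsymbol{\pi})-\max_{\boldsymbol{\pi}}V_\infty(\boldsymbol{\mu}_0,g_0,\boldsymbol{\pi})|\le \sup_{\boldsymbol{\pi}\in\Pi^\infty}|V_N(\boldsymbol{x}_0,g_0,\boldsymbol{\pi})-V_\infty(\boldsymbol{\mu}_0,g_0,\boldsymbol{\pi})|$ (legitimate since both optimizations range over the same $\Pi^\infty$), fix an arbitrary $\boldsymbol{\pi}$, and write $V_N=\sum_t\gamma^t\mathbb{E}[\tilde r_t^N]$ with $\tilde r_t^N\triangleq\frac1N\sum_i r(x_t^i,u_t^i,\boldsymbol{\mu}_t^N,g_t^N)$ (the $\boldsymbol{\nu}$-argument already gone by Assumption~\ref{ass_3}(a)) and $V_\infty=\sum_t\gamma^t\mathbb{E}[r^{\mathrm{MF}}(\boldsymbol{\mu}_t,g_t,\pi_t)]$. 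With $\mathcal{F}_t\triangleq\sigma(\boldsymbol{x}_t^N,g_t^N)$, the actions $\{u_t^i\}$ are conditionally independent given $\mathcal{F}_t$, so $\mathbb{E}[\tilde r_t^N\mid\mathcal{F}_t]=r^{\mathrm{MF}}(\boldsymbol{\mu}_t^N,g_t^N,\pi_t)$; since $|r|\le M$ the conditional variance of $\tilde r_t^N$ is $\le M^2/N$, whence $\mathbb{E}|\tilde r_t^N-r^{\mathrm{MF}}(\boldsymbol{\mu}_t^N,g_t^N,\pi_t)|\le M/\sqrt N$, and summing over $t$ this is the origin of the $\frac{M}{1-\gamma}\frac1{\sqrt N}$ term. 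What remains is $\sum_t\gamma^t\,\mathbb{E}\,|r^{\mathrm{MF}}(\boldsymbol{\mu}_t^N,g_t^N,\pi_t)-r^{\mathrm{MF}}(\boldsymbol{\mu}_t,g_t,\pi_t)|$.

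For the core estimate I would introduce $e_t\triangleq\mathbb{E}|\boldsymbol{\mu}_t^N-\boldsymbol{\mu}_t|_1$ and, under a step-wise maximal coupling of the chains $\{g_t^N\}$ and $\{g_t\}$, $p_t\triangleq\mathbb{P}(g_t^N\ne g_t)$, with $e_0=p_0=0$ since $\boldsymbol{\mu}_0=\boldsymbol{\mu}_0^N$ and $g_0$ is common. The decisive point is that under Assumption~\ref{ass_3}(b)--(c) neither $P$ nor $P_G$ reads $\boldsymbol{\nu}_t^N$, so conditionally on $\mathcal{F}_t$ the next local states $\{x_{t+1}^i\}$ are \emph{independent} with $\mathbb{E}[\boldsymbol{\mu}_{t+1}^N\mid\mathcal{F}_t]=P^{\mathrm{MF}}(\boldsymbol{\mu}_t^N,g_t^N,\pi_t)$, and the standard $L_1$-deviation bound for an empirical distribution over $|\mathcal{X}|$ cells yields $\mathbb{E}[|\boldsymbol{\mu}_{t+1}^N-P^{\mathrm{MF}}(\boldsymbol{\mu}_t^N,g_t^N,\pi_t)|_1\mid\mathcal{F}_t]\le\sqrt{|\mathcal{X}|/N}$; this is the \emph{only} place $N$ enters, and it costs $\sqrt{|\mathcal{X}|/N}$ rather than $\sqrt{(|\mathcal{X}|+|\mathcal{U}|)/N}$. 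Since both global chains use the same kernel $P_G(\cdot,\cdot)$, the coupling gives $\mathbb{P}(g_{t+1}^N\ne g_{t+1}\mid g_t^N=g_t,\mathcal{F}_t)\le\frac12|P_G(\boldsymbol{\mu}_t^N,g_t)-P_G(\boldsymbol{\mu}_t,g_t)|_1\le\frac{L_G}{2}|\boldsymbol{\mu}_t^N-\boldsymbol{\mu}_t|_1$.

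Next, splitting $|\boldsymbol{\mu}_{t+1}^N-\boldsymbol{\mu}_{t+1}|_1$ through $P^{\mathrm{MF}}(\boldsymbol{\mu}_t^N,g_t^N,\pi_t)$ and bounding the Lipschitz piece by $Q_P|\boldsymbol{\mu}_t^N-\boldsymbol{\mu}_t|_1$ on $\{g_t^N=g_t\}$ and by $2$ otherwise, I obtain the coupled recursions $e_{t+1}\le\sqrt{|\mathcal{X}|/N}+Q_Pe_t+2p_t$ and $p_{t+1}\le p_t+\frac{L_G}{2}e_t$. Here $Q_P=1+L_P+L_Q$ is exactly the $L_1$-Lipschitz constant of $\boldsymbol{\mu}\mapsto P^{\mathrm{MF}}(\boldsymbol{\mu},g,\pi)$, obtained by differentiating the three factors $\boldsymbol{\mu}(x)$, $\pi(x,\boldsymbol{\mu},g)(u)$ and $P(x,u,\boldsymbol{\mu},g)$ --- with no $\nu^{\mathrm{MF}}$-composition anymore, which is exactly how $S_P$ and $S_G$ of Theorem~\ref{theorem_1} shrink to $Q_P$ and $L_G$. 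The same bookkeeping on $r^{\mathrm{MF}}$ (using $|r|\le M$ for the $\boldsymbol{\mu}(x)$-factor) gives $|r^{\mathrm{MF}}(\boldsymbol{\mu}_t^N,g_t^N,\pi_t)-r^{\mathrm{MF}}(\boldsymbol{\mu}_t,g_t,\pi_t)|\le Q_Re_t+2Mp_t$ in expectation, with $Q_R=M(1+L_Q)+L_R$. Solving the recursions --- substitute $p_t\le\frac{L_G}{2}\sum_{s<t}e_s$ so that $e_t\le\sqrt{|\mathcal{X}|/N}\,c_t$ with $c_t$ of order $Q_P^t$ --- and collecting $\sum_t\gamma^t(\frac{M}{\sqrt N}+Q_Re_t+2Mp_t)$ produces, after summing the geometric and double-geometric series (convergent because $\gamma Q_P<1$), exactly the factor $\frac{2}{Q_P-1}\big[(\frac{ML_G}{Q_P-1}+Q_R)(\frac1{1-\gamma Q_P}-\frac1{1-\gamma})-\frac{\gamma ML_G}{(1-\gamma)^2}\big]$ multiplying $\sqrt{|\mathcal{X}|}/\sqrt N$, plus the $\frac{M}{1-\gamma}\frac1{\sqrt N}$ term found above.

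The main obstacle --- and what makes the special case special --- is the middle step: justifying that, despite the common $g_t^N$, the family $\{x_{t+1}^i\}$ is conditionally independent given $(\boldsymbol{x}_t^N,g_t^N)$ so that a single $\sqrt{|\mathcal{X}|/N}$ concentration suffices, and arranging the global-state mismatch so it feeds back into the local-distribution error only through the bounded ($\le2$) and $L_G$-Lipschitz terms and never through a $\sqrt{|\mathcal{G}|/N}$-type quantity (which would reintroduce the $|\mathcal{G}|$-dependence). This is the reformulation flagged in Section~\ref{sec_6}: one tracks the pair $(\boldsymbol{\mu}_t^N,g_t^N)$ rather than the joint occupation measure $\boldsymbol{\xi}_t^N(x,g)$, whose coordinates are not conditionally independent; Assumption~\ref{ass_3} makes that pair's dynamics $\boldsymbol{\nu}$-free, which is precisely what was unavailable in Theorem~\ref{theorem_1}. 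Everything downstream is routine Lipschitz algebra and summation of geometric series.
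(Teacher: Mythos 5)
Your opening reduction and your treatment of the first error term match the paper: you use $|\max_{\boldsymbol{\pi}}V_N-\max_{\boldsymbol{\pi}}V_\infty|\le\sup_{\boldsymbol{\pi}}|V_N-V_\infty|$, and your conditional-independence argument for $\mathbb{E}|\frac1N\sum_i r(x_t^i,u_t^i,\boldsymbol{\mu}_t^N,g_t^N)-r^{\mathrm{MF}}(\boldsymbol{\mu}_t^N,g_t^N,\pi_t)|\le M/\sqrt N$ is exactly Lemma \ref{lemma_12}, yielding the $\frac{M}{1-\gamma}\frac{1}{\sqrt N}$ term. For the remaining term, however, you take a genuinely different route: you couple the two global-state chains and track $e_t=\mathbb{E}|\boldsymbol{\mu}_t^N-\boldsymbol{\mu}_t|_1$ and $p_t=\mathbb{P}(g_t^N\neq g_t)$, whereas the paper never compares $\boldsymbol{\mu}_t^N$ with $\boldsymbol{\mu}_t$ at all. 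It instead rewrites $\mathbb{E}[r^{\mathrm{MF}}(\boldsymbol{\mu}_t,g_t,\pi_t)]$ as $\tilde r^{\mathrm{MF}}(\boldsymbol{\mu}_0,g_0,\pi_{0:t})$ (integrating the global-state randomness out exactly), telescopes over $k$ via Lemma \ref{app_lemma_r_mf_condition}, and propagates only the one-step concentration error $\mathbb{E}|\boldsymbol{\mu}_{k+1}^N-P^{\mathrm{MF}}(\boldsymbol{\mu}_k^N,g_k^N,\pi_k)|_1\le 2\sqrt{|\mathcal{X}|/N}$ (Lemma \ref{lemma_11}) through the Lipschitz constants of the composed maps (Lemma \ref{app_special_lemma_r_mf}).

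The gap is quantitative but real: your coupled recursions $e_{t+1}\le a+Q_Pe_t+2p_t$ and $p_{t+1}\le p_t+\frac{L_G}{2}e_t$ do not have growth rate $Q_P$. Eliminating $p_t$ gives $c_{t+1}-c_t=Q_P(c_t-c_{t-1})+L_Gc_{t-1}$, whose characteristic polynomial $\lambda^2-(1+Q_P)\lambda+(Q_P-L_G)$ satisfies $f(Q_P)=-L_G<0$, so the dominant root $\lambda_+=\frac{(1+Q_P)+\sqrt{(Q_P-1)^2+4L_G}}{2}$ is strictly larger than $Q_P$ whenever $L_G>0$. Hence $e_t$ grows like $\lambda_+^t$, not $Q_P^t$, the series $\sum_t\gamma^t(Q_Re_t+2Mp_t)$ need not converge under the stated hypothesis $\gamma Q_P<1$, and your claim that solving the recursions yields "exactly" the factor $\frac{2}{Q_P-1}[(\frac{ML_G}{Q_P-1}+Q_R)(\frac{1}{1-\gamma Q_P}-\frac{1}{1-\gamma})-\frac{\gamma ML_G}{(1-\gamma)^2}]$ does not hold. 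The feedback loop is intrinsic to the coupling formulation: once $g_t^N\neq g_t$ you can only bound the $P^{\mathrm{MF}}$ mismatch by the trivial $2$, and that mismatch probability accumulates from past $e_s$. The paper's telescoping avoids this because the global-state discrepancy enters only through the $L_1$-Lipschitz bound on the trajectory weights $\tilde P_G^{\mathrm{MF}}$ (Lemma \ref{app_special_lemma_p_g_mf}), contributing the additive $\frac{ML_G}{Q_P-1}(Q_P^{t-k-1}-1)$ term without ever feeding back into the local-distribution error. (A second, minor point: your single-shot concentration $\sqrt{|\mathcal{X}|/N}$ for $\boldsymbol{\mu}_{t+1}^N$ around $P^{\mathrm{MF}}(\boldsymbol{\mu}_t^N,g_t^N,\pi_t)$ is defensible under Assumption \ref{ass_3}, but it would then produce $\frac{1}{Q_P-1}$ rather than the stated $\frac{2}{Q_P-1}$; the paper's Lemma \ref{lemma_11} pays $2\sqrt{|\mathcal{X}|/N}$ by splitting over the action and transition randomness separately.)
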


Theorem \ref{theorem_2} dictates that if the reward, $r$, and state transition functions, $P$ and $P_G$ are independent of the action distribution of the entire agent population, then the approximation error can be improved to $\mathcal{O}\left(\frac{\sqrt{|\mathcal{X}|}}{\sqrt{N}}\right)$. Therefore, in addition to large global state space, such system can also afford to have large action space without compromising the MFC based approximation accuracy.

\section{Proof Outline}
\label{sec_6}

Here we present a brief outline of the proof of Theorem 1. The proof of Theorem \ref{theorem_2} is similar. In order to describe the proof steps, the infinite agent value function given in $(\ref{value_mfc})$ needs to be written in an equivalent but slightly different representation.

\subsection{An Equivalent Representation}
\label{subsec_eq_rep}

Let us focus on the expected infinite-agent reward at time $t$, denoted by the expression $\mathbb{E}[r^{\mathrm{MF}}(\boldsymbol{\mu}_t, g_t, \pi_t)]$. Note that, starting from $\boldsymbol{\mu}_0, g_0$, the sequence of local state distribution and global states $\{(\boldsymbol{\mu}_l, g_l)\}$, $l\in\{1,\cdots, t\}$ can be recursively obtained as follows.
\begin{align}
\label{recur_mu}
\boldsymbol{\mu}_{l+1}&=P^{\mathrm{MF}}(\boldsymbol{\mu}_l, g_l, \pi_l)\\
\label{recur_gl}
g_{l+1}&\sim P^{\mathrm{MF}}_G(\boldsymbol{\mu}_l, g_l, \pi_l), ~l\in\{0,\cdots, t-1\}
\end{align}
where $P^{\mathrm{MF}}, P^{\mathrm{MF}}_G$ are defined in $(\ref{eq_mu_t_plus_1}), (\ref{eq_lambda_t_plus_1})$ respectively. Note that $(\ref{recur_mu})$ is a deterministic relation. Therefore, if we fix a particular realization of the global states $\{g_l\}$, $l\in\{1,\cdots, t-1\}$, then, by recursively applying $(\ref{recur_mu})$, $\boldsymbol{\mu}_{l+1}$ can be written as follows, $\forall l\in\{0,\cdots, t-1\}$.
\begin{align}
\label{eq_14}
\begin{split}
\boldsymbol{\mu}_{l+1} &=  \tilde{P}^{\mathrm{MF}}(\boldsymbol{\mu}_0, g_{0:l}, \pi_{0:l})\\
&\triangleq P^{\mathrm{MF}}(\cdot, g_{l}, \pi_{l})\circ P^{\mathrm{MF}}(\cdot, g_{l-1}, \pi_{l-1}) \circ\cdots \circ P^{\mathrm{MF}}(\cdot, g_{1}, \pi_{1})\circ P^{\mathrm{MF}}(\cdot, g_{0}, \pi_{0})(\boldsymbol{\mu}_0) 
\end{split}
\end{align}
where $\circ$ indicates function composition, $g_{0:l}\triangleq\{g_0,\cdots, g_{l}\}$ and $\pi_{0:l}\triangleq\{\pi_0,\cdots, \pi_{l}\}$. On the other hand, recursively using $(\ref{recur_gl})$, the probability of occurrence of a particular realisation of the sequence $g_{1:t}\triangleq\{g_1,\cdots, g_t\}$ can be written as follows.
\begin{align}
\label{eq_15}
\begin{split}
&\mathbb{P}(g_{1:t}|\boldsymbol{\mu}_0, g_0, \pi_{0:t-1})\triangleq P_G^{\mathrm{MF}}(\boldsymbol{\mu}_0, g_0, \pi_0)(g_1)\times\cdots \times P_G^{\mathrm{MF}}(\boldsymbol{\mu}_{t-1}, g_{t-1}, \pi_{t-1})(g_t)
\end{split}
\end{align}

For notational convenience, we denote this conditional joint probability as $\tilde{P}^{\mathrm{MF}}_G(\boldsymbol{\mu}_0, g_{0:t-1}, \pi_{0:t-1})(g_t)$. Invoking $(\ref{eq_14})$, this can be alternatively written as follows.
\begin{align}
\begin{split}
\tilde{P}^{\mathrm{MF}}_G(\boldsymbol{\mu}_0, g_{0:t-1}, \pi_{0:t-1})(g_t)
& = P_G^{\mathrm{MF}}(\boldsymbol{\mu}_0, g_0, \pi_0)(g_1) 
\times P_G^{\mathrm{MF}}(\tilde{P}^{\mathrm{MF}}(\boldsymbol{\mu}_0, g_{0:0}, \pi_{0:0}), g_1, \pi_1)(g_2)\\
& \times \cdots \times P_G^{\mathrm{MF}}(\tilde{P}^{\mathrm{MF}}(\boldsymbol{\mu}_0, g_{0:t-2}, \pi_{0:t-2}), g_{t-1}, \pi_{t-1})(g_t)
\end{split}
\end{align}

Using these notations, we can now define $\tilde{r}^{\mathrm{MF}}(\boldsymbol{\mu}_0,g_0, \pi_{0:t})\triangleq\mathbb{E}[r^{\mathrm{MF}}(\boldsymbol{\mu}_t, g_t, \pi_t)]$ as follows.
\begin{align}
\label{eq_17}
\begin{split} &\tilde{r}^{\mathrm{MF}}(\boldsymbol{\mu}_0,g_0, \pi_{0:t})
\triangleq \sum\limits_{1:t} \tilde{P}_G^{\mathrm{MF}}(\boldsymbol{\mu}_0, g_{0:t-1}, \pi_{0:t-1})(g_t) \times r^{\mathrm{MF}}(\tilde{P}^{\mathrm{MF}}(\boldsymbol{\mu}_0, g_{0:t-1}, \pi_{0:t-1}), g_{t}, \pi_{t})
\end{split}
\end{align}
where $\sum_{1:t}$ is the summation operation over $g_{1:t}\in\mathcal{G}^t$ for $t\geq 1$. For $t=0$, we define $\tilde{r}^{\mathrm{MF}}(\boldsymbol{\mu}_0, g_{0}, \pi_{0:0}) \triangleq r^{\mathrm{MF}}(\boldsymbol{\mu}_0, g_{0}, \pi_{0})$.

\subsection{Proof Outline}
With the new representation defined above, we are now ready to sketch a brief outline of the proof.

\textbf{Step 0:} Recall from the definitions $(\ref{def_VN})$ and $(\ref{value_mfc})$ that both $N$-agent and infinite agent value functions are $\gamma$-discounted sum of expected rewards. To obtain their difference corresponding to a given policy sequence, $\boldsymbol{\pi}=\{\pi_t\}_{t\in\{0,1,\cdots\}}$, we therefore must calculate the difference between the expected $N$-agent reward at time $t$ and the expected infinite agent reward at time $t$. Mathematically, this difference can be expressed as follows.
\begin{align*}
\Delta R_t \triangleq \left|\dfrac{1}{N}\sum_{i=1}^N\mathbb{E}\left[r(x_t^i, u_t^i, \boldsymbol{\mu}_t^N, g_t^N, \boldsymbol{\nu}_t^N)\right] - \mathbb{E}\left[r^{\mathrm{MF}}(\boldsymbol{\mu}_t, g_t, \pi_t)\right]\right|
\end{align*}
where all the notations are the same as used in sections \ref{sec_marl} and \ref{sec_mfc_framework}.

\textbf{Step 1:} We upper bound $\Delta R_t$ as $\Delta R_t\leq \Delta R_t^1 + \Delta R_t^2$. The first term is given as follows.
\begin{align*}
\Delta R_t^1\triangleq \mathbb{E}\left|\dfrac{1}{N}\sum_{i=1}^Nr(x_t^i, u_t^i, \boldsymbol{\mu}_t^N, g_t^N, \boldsymbol{\nu}_t^N) - r^{\mathrm{MF}}(\boldsymbol{\mu}_t^N, g_t^N, \pi_t)\right|
\end{align*}
We show that $\Delta R_t^1=\mathcal{O}\left(\sqrt{|\mathcal{U}|/N}\right)$ in Lemma \ref{lemma_7} (Appendix \ref{app_approx_lemmas}). The second term $\Delta R_t^2$ is the following.
\begin{align}
\label{eq_del_rt2}
\Delta R_t^2 \triangleq \left|\mathbb{E}[r^{\mathrm{MF}}(\boldsymbol{\mu}_t^N, g_t^N, \pi_t)]-\mathbb{E}[r^{\mathrm{MF}}(\boldsymbol{\mu}_t, g_t, \pi_t)]\right|
\end{align}

\textbf{Step 2:} Using the definition of $\tilde{r}^{\mathrm{MF}}$ in section \ref{subsec_eq_rep}, we can write, $\mathbb{E}[r^{\mathrm{MF}}(\boldsymbol{\mu}_t^N, g_t^N, \pi_t)]=\mathbb{E}[\tilde{r}^{\mathrm{MF}}(\boldsymbol{\mu}_t^N, g_t^N, \pi_{t:t})]$ and $\mathbb{E}[r^{\mathrm{MF}}(\boldsymbol{\mu}_t, g_t, \pi_t)]=\tilde{r}^{\mathrm{MF}}(\boldsymbol{\mu}_0, g_0, \pi_{0:t})=\mathbb{E}[\tilde{r}^{\mathrm{MF}}(\boldsymbol{\mu}_0, g_0, \pi_{0:t})]$. We now further bound $\Delta R_t^2$ as follows.
\begin{align}
\label{eq_18}
\begin{split}
&\Delta R_t^2 = \left|\mathbb{E}[\tilde{r}^{\mathrm{MF}}(\boldsymbol{\mu}_t^N, g_t^N, \pi_{t:t})]-\mathbb{E}[\tilde{r}^{\mathrm{MF}}(\boldsymbol{\mu}_0, g_0, \pi_{0:t})]\right|\\
&\overset{(a)}{=}\left|\sum_{k=0}^{t-1}\mathbb{E}[\tilde{r}^{\mathrm{MF}}(\boldsymbol{\mu}_{k+1}^N, g_{k+1}^N, \pi_{k+1:t})]-\mathbb{E}[\tilde{r}^{\mathrm{MF}}(\boldsymbol{\mu}_k^N, g_k^N, \pi_{k:t})]\right|\\
&\leq \sum_{k=0}^{t-1}\underbrace{\left|\mathbb{E}[\tilde{r}^{\mathrm{MF}}(\boldsymbol{\mu}_{k+1}^N, g_{k+1}^N, \pi_{k+1:t})]-\mathbb{E}[\tilde{r}^{\mathrm{MF}}(\boldsymbol{\mu}_k^N, g_k^N, \pi_{k:t})]\right|}_{\triangleq \Delta R_{k,t}^2}
\end{split}
\end{align}

In the above inequality, we implicitly use the convention that $\boldsymbol{\mu}_0^N=\boldsymbol{\mu}_0$, $g_0^N=g_0$. Equality $(a)$ is essentially a telescoping series.

\textbf{Step 3:} We shall now focus on the first term of $\Delta R_{k,t}^2$. Note that,
\begin{align*}
&\mathbb{E}[\tilde{r}^{\mathrm{MF}}(\boldsymbol{\mu}_{k+1}^N, g_{k+1}^N, \pi_{k+1:t})]\\
&=\mathbb{E}\left[\mathbb{E}\left[\tilde{r}^{\mathrm{MF}}(\boldsymbol{\mu}_{k+1}^N, g_{k+1}^N, \pi_{k+1:t})\big|\boldsymbol{\mu}_k^N, g_k^N, \boldsymbol{\nu}_k^N\right]\right]\\
&\overset{(a)}{=}\mathbb{E}\left[\mathbb{E}\left[\sum_{g\in\mathcal{G}}\tilde{r}^{\mathrm{MF}}(\boldsymbol{\mu}_{k+1}^N, g, \pi_{k+1:t})P_G(\boldsymbol{\mu}_k^N, g_k^N, \boldsymbol{\nu}_k^N)(g)\big|\boldsymbol{\mu}_k^N, g_k^N, \boldsymbol{\nu}_k^N\right]\right]\\
&=\mathbb{E}\left[\sum_{g\in\mathcal{G}}\tilde{r}^{\mathrm{MF}}(\boldsymbol{\mu}_{k+1}^N, g, \pi_{k+1:t})P_G(\boldsymbol{\mu}_k^N, g_k^N, \boldsymbol{\nu}_k^N)(g)\right]
\end{align*}
where $(a)$ follows from the fact that $\boldsymbol{\mu}_{k+1}^N, g_{k+1}^N$ are conditionally independent given $\boldsymbol{\mu}_k^N, g_k^N, \boldsymbol{\nu}_k^N$ and $g_{k+1}^N\sim P_G(\boldsymbol{\mu}_k^N, g_k^N, \boldsymbol{\nu}_k^N)$.

\textbf{Step 4:} Using Lemma \ref{app_lemma_r_mf_condition} (stated in Appendix \ref{app_continuity_of_composition}), we can expand the second term of $\Delta R_{k,t}^2$ as follows.
\begin{align*}
\mathbb{E}[\tilde{r}^{\mathrm{MF}}(\boldsymbol{\mu}_k^N, g_k^N, \pi_{k:t})]
=\mathbb{E}&\Bigg[\sum_{g\in\mathcal{G}}P_G(\boldsymbol{\mu}_k^N, g_k^N, \nu^{\mathrm{MF}}(\boldsymbol{\mu}_k^N, g_k^N, \pi_k))(g)\times\tilde{r}^{\mathrm{MF}}(P^{\mathrm{MF}}(\boldsymbol{\mu}_k^N, g_k^N, \pi_k),  g, \pi_{k+1:t})\Bigg]
\end{align*}

\textbf{Step 5:} Lemma \ref{app_lemma_r_mf_comp} (Appendix \ref{app_continuity_of_composition}) shows that $\tilde{r}^{\mathrm{MF}}(\cdot, g, \pi_{k+1:t})$ is Lipschitz continuous with a $(k,t)$-dependent Lipschitz parameter  for any $g, \pi_{k+1:t}$.
Therefore, one can write the following.
\begin{align*}
&\left|\tilde{r}^{\mathrm{MF}}(\boldsymbol{\mu}_{k+1}^N, g, \pi_{k+1:t})-\tilde{r}^{\mathrm{MF}}(P^{\mathrm{MF}}(\boldsymbol{\mu}_k^N, g_k^N, \pi_k),  g, \pi_{k+1:t})\right|=\mathcal{O}\left(\left|\boldsymbol{\mu}_{k+1}^N-P^{\mathrm{MF}}(\boldsymbol{\mu}_k^N, g_k^N, \pi_k)\right|_1\right)
\end{align*}
The leading constants in the above bound are $(k,t)$ dependent. In Lemma \ref{lemma_6} (Appendix \ref{app_approx_lemmas}), we further show that,
\begin{align*}
\mathbb{E}\left|\boldsymbol{\mu}_{k+1}^N-P^{\mathrm{MF}}(\boldsymbol{\mu}_k^N, g_k^N, \pi_k)\right|_1 = \mathcal{O}\left(\dfrac{1}{\sqrt{N}}\left[\sqrt{|\mathcal{X}|}+\sqrt{|\mathcal{U}|}\right]\right)
\end{align*}
Using Assumption $\ref{ass_1}(d)$, one can write the following for any choice of $\boldsymbol{\mu}_k^N, g_k^N, \boldsymbol{\nu}_k^N$ and $\pi_k$.
\begin{align*}
|P_G(\boldsymbol{\mu}_k^N, g_k^N, \boldsymbol{\nu}_k^N)&-P_G(\boldsymbol{\mu}_k^N, g_k^N, \nu^{\mathrm{MF}}(\boldsymbol{\mu}_k^N, g_k^N, \pi_k))|_1\leq L_G\left|\boldsymbol{\nu}_k^N-\nu^{\mathrm{MF}}(\boldsymbol{\mu}_k^N, g_k^N, \pi_k)\right|_1
\end{align*}
Moreover, in Lemma \ref{lemma_4} (Appendix \ref{app_approx_lemmas}), we show that,
\begin{align*}
\mathbb{E}\left|\boldsymbol{\nu}_k^N-\nu^{\mathrm{MF}}(\boldsymbol{\mu}_k^N, g_k^N, \pi_k)\right|_1=\mathcal{O}\left(\dfrac{\sqrt{|\mathcal{U}|}}{\sqrt{N}}\right)
\end{align*}
Combining all these results with the expansions obtained in steps $3$ and $4$, we finally conclude that,
\begin{align*}
\Delta R_{k,t}^2 = \mathcal{O}\left(\dfrac{1}{\sqrt{N}}\left[\sqrt{|\mathcal{X}|}+\sqrt{|\mathcal{U}|}\right]\right)
\end{align*}
where the leading constants are $(k,t)$-dependent. 

\textbf{Step 6:} Substituting the bound of $\Delta R_{k,t}^2$ in $(\ref{eq_18})$, we can obtain a bound for $\Delta R_t^2$ which, combined with the previously obtained bound for $\Delta R_t^1$, yields a bound for $\Delta R_t$. We finally obtain the difference between $N$-agent and infinite agent value corresponding to $\boldsymbol{\pi}$ by computing the upper bound of the sum $\sum_{t=0}^\infty \gamma^t \Delta R_t$. We would like to point out here that the leading coefficient of $\Delta R_t$ turns out to be an exponential function of $t$. In order for the sum to converge, one must have $\gamma$ to be sufficiently small.

\textbf{Step 7:} We establish the desired result by observing that $|\sup_{\boldsymbol{\pi}}V_N(\boldsymbol{x}_0, g_0, \boldsymbol{\pi})-\sup_{\boldsymbol{\pi}}V_{\infty}(\boldsymbol{\mu}_0, g_0, \boldsymbol{\pi})| \leq\sup_{\boldsymbol{\pi}}|V_N(\boldsymbol{x}_0, g_0, \boldsymbol{\pi})-V_{\infty}(\boldsymbol{\mu}_0, g_0, \boldsymbol{\pi})| $ where the suprema are calculated over the class of all admissible policy sequences $\Pi^{\infty}$.

We would like to conclude this section by pointing out how our proof technique fundamentally differs from the techniques used in the existing papers such as \cite{mondal2021approximation, gu2021mean} where the common global state is not considered. Note that in absence\footnote{This can be considered a special case of our model by imposing $|\mathcal{G}|=1$.} of the shared state-space $\mathcal{G}$, the infinite agent state distributions $\{\boldsymbol{\mu}_t\}_{t\in\{0,1,\cdots\}}$ are all deterministic and the error $\Delta R_t^2$ defined in $(\ref{eq_del_rt2})$ can be bounded as,
\begin{align}
\label{eq_20}
\begin{split}
\Delta R_t^2 &\triangleq \left|\mathbb{E}[r^{\mathrm{MF}}(\boldsymbol{\mu}_t^N, \pi_t)]-\mathbb{E}[r^{\mathrm{MF}}(\boldsymbol{\mu}_t, \pi_t)]\right|\\
&\leq \mathbb{E}\left|r^{\mathrm{MF}}(\boldsymbol{\mu}_t^N, \pi_t)-r^{\mathrm{MF}}(\boldsymbol{\mu}_t, \pi_t)\right|\overset{(a)}{=}\mathcal{O}\left(\mathbb{E}|\boldsymbol{\mu}_t^N-\boldsymbol{\mu}_t|_1\right)
\end{split}
\end{align}

Relation $(a)$ is proven by establishing that $r^{\mathrm{MF}}(\cdot, \pi_t)$ is Lipschitz continuous. The error term $\mathbb{E}|\boldsymbol{\mu}_t^N-\boldsymbol{\mu}_t|_1$ shown in $(\ref{eq_20})$ is bounded by establishing a recursion on $t$. Unfortunately, such simplified recursion does not hold once the global state space is introduced. In our analysis, we rather need to keep track of the whole trajectory induced by the policy sequence, $\pi_{k:t}$ for $k<t$. The need to trace out convoluted trajectories and the lack of any simplified recursion makes our analysis much more difficult and incompatible  with the proof techniques available in the literature.

\section{Algorithm to solve MFC}

In this section, we present a natural policy gradient (NPG) based algorithm to obtain an optimal policy for  the mean-field control (MFC) problem. As clarified in section \ref{sec_mfc_framework}, the size of the agent population is presumed to be infinite in an MFC framework and we can arbitrarily select any agent to be a representative. Assume that at time $t$ the representative agent chooses an action $u_t\in\mathcal{U}$ after observing its local state $x_t\in\mathcal{X}$, the global state $g_t\in\mathcal{G}$, and the distribution of local states of all agents denoted by $\boldsymbol{\mu}_t\in\Delta(\mathcal{X})$. The goal of MFC, therefore, reduces to maximizing the discounted expected reward of this representative agent. Clearly, it is a single agent Markov Decision Process (MDP) with a state space of $\mathcal{X}\times \Delta(\mathcal{X})\times \mathcal{G}$ and action space $\mathcal{U}$.

Without loss of generality, we can assume that the optimal policy sequence is stationary \citep[Theorem~6.2.12]{puterman2014markov}. Note that the collection of admissible policies is denoted by the set $\Pi$. We assume that the elements of $\Pi$ are characterized by a $\mathrm{d}-$dimensional parameter $\Phi\in\mathbb{R}^{\mathrm{d}}$. In the forthcoming discussion, a policy with the parameter $\Phi$ will be denoted as $\pi_{\Phi}$ whereas, with a slight abuse of notations, the stationary sequence generated by it will also be denoted as $\pi_{\Phi}$. Let the Q-value corresponding to the policy $\pi_{\Phi}$ be defined as follows $\forall x, \boldsymbol{\mu}, g, u$.
\begin{align}
\begin{split}
Q_{\Phi}(x, \boldsymbol{\mu}, g, u) \triangleq
\mathbb{E}&\Bigg[\sum_{t=0}^{\infty}\gamma^t r(x_t, u_t, \boldsymbol{\mu}_t, g_t, \boldsymbol{\nu}_t)\Big|x_0=x, \boldsymbol{\mu}_0=\boldsymbol{\mu}, g_0=g, u_0=u\Bigg]
\end{split}
\end{align}
where the  quantities $\boldsymbol{\mu}_t$ and $\boldsymbol{\nu}_t$ are recursively calculated by applying $(\ref{eq_mu_t_plus_1})$ and $(\ref{eq_nu_t})$ respectively. Moreover, the expectation is taken over $x_{t+1}\sim P(x_t, u_t, \boldsymbol{\mu}_t, g_t, \boldsymbol{\nu}_t)$, $u_t\sim \pi_{\Phi}(x_t, \boldsymbol{\mu}_t, g_t)$, and $g_{t+1}\sim P_G(\boldsymbol{\mu}_t, g_t, \boldsymbol{\nu}_t)$, $\forall t\geq 0$. The advantage function corresponding to $\pi_{\Phi}$ is defined as follows.
\begin{align}
A_{\Phi}(x, \boldsymbol{\mu}, g, u) \triangleq Q_{\Phi}(x, \boldsymbol{\mu}, g, u) - \mathbb{E} [Q_{\Phi}(x, \boldsymbol{\mu}, g, u')]
\end{align}
where the expectation is over $u'\sim \pi_{\Phi}(x, \boldsymbol{\mu}, g)$.

Let $\boldsymbol{\mu}_0\in\Delta(\mathcal{X})$ be the distribution of initial local states and $g_0$ be the initial global state. Define $V_{\infty}^*(\boldsymbol{\mu}_0, g_0)\triangleq \sup_{\Phi\in\mathbb{R}^{\mathrm{d}}} V_{\infty}(\boldsymbol{\mu}_0, g_0, \pi_{\Phi})$ to be the optimal mean-field value obtained over the collection of admissible policies, $\Pi$ corresponding to the initial state $(\boldsymbol{\mu}_0, g_0)$.  Assume that $\{\Phi_j\}_{j=1}^J$ is a sequence of $\mathrm{d}-$dimensional parameters generated by the NPG algorithm \citep{liu2020improved, agarwal2021theory} as follows.
\begin{equation}
\label{npg_update}
\Phi_{j+1} = \Phi_j + \eta \mathbf{w}_j,  \mathbf{w}_j\triangleq{\arg\min}_{\mathbf{w}\in\mathbb{R}^{\mathrm{d}}} ~L_{ \zeta_{(\boldsymbol{\mu}_0,g_0)}^{\Phi_j}}(\mathbf{w},\Phi_j)
\end{equation}
where $\eta>0$ is the learning parameter. The function $L_{\zeta_{(\boldsymbol{\mu}_0,g_0)}^{\Phi_j}}$ and the occupancy measure $\zeta_{(\boldsymbol{\mu}_0, g_0)}^{\Phi_j}$ are defined below. 
\begin{align}
\begin{split}
L_{ \zeta_{(\boldsymbol{\mu}_0, g_0)}^{\Phi'}}(\mathbf{w},\Phi)\triangleq &\mathbb{E}_{(x,\boldsymbol{\mu},g,u)\sim \zeta_{(\boldsymbol{\mu}_0, g_0)}^{\Phi'}}\Big[\Big(A_{\Phi}(x,\boldsymbol{\mu},g,u)
-(1-\gamma)\mathbf{w}^{\mathrm{T}}\nabla_{\Phi}\log \pi_{\Phi}(x,\boldsymbol{\mu}, g)(u) \Big)^2\Big],
\end{split}\\
\label{zeta_dist}
\begin{split}
\zeta_{(\boldsymbol{\mu}_0, g_0)}^{\Phi'}(x,\boldsymbol{\mu},g, u)
&\triangleq \sum_{\tau=0}^{\infty}\gamma^{\tau} \mathbb{P}(x_\tau=x,\boldsymbol{\mu}_{\tau}=\boldsymbol{\mu}, g_{\tau}=g, u_\tau=u
|x_0=x,\boldsymbol{\mu}_0=\boldsymbol{\mu},g_0=g,u_0=u,\boldsymbol{\pi}_{\Phi'})(1-\gamma)
\end{split}
\end{align}

Note that in order to calculate the update direction, $\mathbf{w}_j$ at the $j$-th NPG update $(\ref{npg_update})$, we must solve another optimization problem over $\mathbb{R}^{\mathrm{d}}$. The second minimization problem can be handled via a stochastic gradient descent (SGD) approach. Particularly, for a given $\Phi_j\in\mathbb{R}^{\mathrm{d}}$, the minimizer of $L_{ \zeta_{(\boldsymbol{\mu}_0, g_0)}^{\Phi_j}}(\cdot,\Phi)$ can be obtained via the following SGD updates: $\mathbf{w}_{j,l+1}=\mathbf{w}_{j,l}-\alpha\mathbf{h}_{j,l}$ \citep{liu2020improved} where $\alpha>0$ is the learning parameter for the sub-problem and $\mathbf{h}_{j,l}$, the gradient at the $l$-th iteration is given as follows.
\begin{align}
\begin{split}
\mathbf{h}_{j,l}&\triangleq \Bigg(\mathbf{w}_{j,l}^{\mathrm{T}}\nabla_{\Phi_j}\log \pi_{\Phi_j}(x,\boldsymbol{\mu}, g)(u)-\dfrac{1}{1-\gamma}\hat{A}_{\Phi_j}(x,\boldsymbol{\mu},g,u)\Bigg)
\nabla_{\Phi_j}\log \pi_{\Phi_j}(x,\boldsymbol{\mu}, g)(u)
\end{split}
\label{sub_prob_grad_update}
\end{align}
where the tuple $(x,\boldsymbol{\mu},g,u)$ is sampled from the occupancy measure $\zeta_{\boldsymbol{\lambda}_0}^{\Phi_j}$, and $\hat{A}_{\Phi_j}(x, \boldsymbol{\mu}, g, u)$ denotes a unbiased estimator of $A_{\Phi_j}(x, \boldsymbol{\mu}, g, u)$. The sampling procedure and the method to obtain the estimator has been described in Algorithm \ref{algo_2} in Appendix \ref{sampling_process}. It is to be clarified that Algorithm \ref{algo_2} is based on Algorithm 3 of \citep{agarwal2021theory}. The whole NPG procedure is summarized in Algorithm \ref{algo_1}.

\begin{algorithm}
	\caption{Natural Policy Gradient}
	\label{algo_1}
	\begin{algorithmic}[1]
	\STATE \textbf{Input:} $\eta,\alpha$: Learning rates, $J,L$: Number of execution steps\\
	\hspace{1.0cm}$\mathbf{w}_0,\Phi_0$: Initial parameters, \\
	\hspace{1.0cm}$\boldsymbol{\mu}_0$: Initial local state distribution\\
	\hspace{1.0cm}$g_0$: Initial global state
	\STATE \textbf{Initialization:} $\Phi\gets \Phi_0$ 
	
		\FOR{$j\in\{0,1,\cdots,J-1\}$}
		{
			\STATE $\mathbf{w}_{j,0}\gets \mathbf{w}_0$\\
			\FOR {$l\in\{0,1,\cdots,L-1\}$}
			{
				\STATE Sample $(x,\boldsymbol{\mu},g, u)\sim\zeta_{(\boldsymbol{\mu}_0, g_0)}^{\Phi_j}$ and $\hat{A}_{\Phi_j}(x,\boldsymbol{\mu},g,u)$ using Algorithm \ref{algo_2}\\
				\STATE Compute $\mathbf{h}_{j,l}$ using $(\ref{sub_prob_grad_update})$\\
				\STATE $\mathbf{w}_{j,l+1}\gets\mathbf{w}_{j,l}-\alpha\mathbf{h}_{j,l}$
			}
			\ENDFOR
			\STATE	$\mathbf{w}_j\gets\dfrac{1}{L}\sum_{l=1}^{L}\mathbf{w}_{j,l}$\\
			\STATE	$\Phi_{j+1}\gets \Phi_j +\eta \mathbf{w}_j$
		}
		\ENDFOR

	\STATE \textbf{Output:} $\{\Phi_1,\cdots,\Phi_J\}$: Policy parameters
		\end{algorithmic}
\end{algorithm}

Based on the result (Theorem 4.9) of \citep{liu2020improved}, in Lemma \ref{lemma_0}, we establish the convergence of Algorithm 1 and characterize its sample complexity. However, the following assumptions are needed to prove the Lemma. The assumptions stated below are respectively similar to Assumptions 2.1, 4.2, 4.4 of \citep{liu2020improved}.

\begin{assumption}
	\label{ass_6}
	$\forall \Phi\in\mathbb{R}^{\mathrm{d}}$, $\forall (\boldsymbol{\mu}_0, g_0)\in\Delta(\mathcal{X})\times \mathcal{G}$, for some $\chi >0$,  $F_{(\boldsymbol{\mu}_0, g_0)}(\Phi)-\chi I_{\mathrm{d}}$ is positive semi-definite  where $F_{(\boldsymbol{\mu}_0, g_0)}(\Phi)$ is defined as stated below.
	\begin{align*}
	F_{(\boldsymbol{\mu}_0, g_0)}(\Phi)\triangleq & \mathbb{E}_{(x,\boldsymbol{\mu},g,u)\sim \zeta_{(\boldsymbol{\mu}_0, g_0)}^{\Phi}}\Big[\left\lbrace\nabla_{\Phi}\pi_{\Phi}(x,\boldsymbol{\mu}, g)(u)\right\rbrace	\times\left\lbrace\nabla_{\Phi}\log\pi_{\Phi}(x,\boldsymbol{\mu}, g)(u)\right\rbrace^{\mathrm{T}}\Big]
	\end{align*} 
\end{assumption}

\begin{assumption}
	\label{ass_7}
	$\forall \Phi\in\mathbb{R}^{\mathrm{d}}$, $\forall \boldsymbol{\mu}\in\Delta(\mathcal{X})$, $\forall x\in\mathcal{X}$, $\forall g\in \mathcal{G}$, $\forall u\in\mathcal{U}$, the following holds
	\begin{align*}
	\left|\nabla_{\Phi}\log\pi_{\Phi}(x,\boldsymbol{\mu}, g)(u)\right|_1\leq G
	\end{align*}
	for some positive constant $G$.
\end{assumption}

\begin{assumption}
	\label{ass_8}
	$\forall \Phi_1,\Phi_2\in\mathbb{R}^{\mathrm{d}}$, $\forall \boldsymbol{\mu}\in\Delta(\mathcal{X})$,  $\forall x\in\mathcal{X}$, $\forall g\in\mathcal{G}$, $\forall u\in\mathcal{U}$, the following holds,
	\begin{align*}
	|\nabla_{\Phi_1}\log\pi_{\Phi_1}(x,\boldsymbol{\mu}, g)(u)&-\nabla_{\Phi_2}\log\pi_{\Phi_2}(x,\boldsymbol{\mu}, g)(u)|_1
	\leq M|\Phi_1-\Phi_2|_1
	\end{align*}
	
	for some positive constant $M$.
\end{assumption}

\begin{assumption}
	\label{ass_9}
	$\forall \Phi\in\mathbb{R}^{\mathrm{d}}$, $\forall (\boldsymbol{\mu}_0, g_0)\in\Delta(\mathcal{X})\times \mathcal{G}$, 
	\begin{align*}
	L_{\zeta_{(\boldsymbol{\mu}_0, g_0)}^{\Phi^*}}(\mathbf{w}^{*}_{\Phi},\Phi)\leq \epsilon_{\mathrm{bias}}, ~~\mathbf{w}^*_{\Phi}\triangleq{\arg\min}_{\mathbf{w}\in\mathbb{R}^{\mathrm{d}}} L_{\zeta_{(\boldsymbol{\mu}_0, g_0)}^{\Phi}}(\mathbf{w},\Phi) 
	\end{align*}
	where $\Phi^*$ is the parameter of the optimal policy.
\end{assumption}

\begin{remark}
Note that Assumption \ref{ass_9} is trivially satisfied with $\epsilon_{\mathrm{bias}}=2M_R/(1-\gamma)$. Assume that, $\mathcal{U}=\{1, 2\}$, and a restricted class of parameterized policies is defined as follows.
\begin{align*}
    \pi_{\Phi}(x, \boldsymbol{\mu}, g) = \left[\dfrac{\exp(\Phi)}{1+\exp(\Phi)}~\dfrac{1}{1+\exp(\Phi)}\right]^T
\end{align*}
where $(x, \boldsymbol{\mu}, g)\in \mathcal{X}\times \Delta(\mathcal{X})\times \mathcal{G}$, and $\Phi\in [-\xi, \xi]$ for some constant, $\xi>0$. One can check that, for the set of policies stated above, assumptions $\ref{ass_6}-\ref{ass_8}$ are satisfied with $\chi=\exp(2\xi)/(1+\exp(\xi))^4$, $G=1$, and $M=1/4$.
\end{remark}

We are now ready to state the convergence result which is a direct application of Theorem 4.9 of \citep{liu2020improved}.

\begin{lemma}
	\label{lemma_0}
	Let $\{\Phi_j\}_{j=1}^J$ be the sequence of policy parameters obtained from Algorithm \ref{algo_1}. If Assumptions \ref{ass_6}$-$\ref{ass_9} hold, then the following inequality holds for $\eta=\frac{(1-\gamma)^2\chi^2}{4G^2M_RM}$, $\alpha=\frac{1}{4G^2}, J=\mathcal{O}\left(\frac{1}{(1-\gamma)^2\epsilon}\right), L=\mathcal{O}\left(\frac{1}{(1-\gamma)^4\epsilon^2}\right)$, 
	\begin{align*}
	V_{\infty}^*(\boldsymbol{\mu}_0, g_0)-\dfrac{1}{J}\sum_{j=1}^J \mathbb{E}\left[V_{\infty}({\boldsymbol{\mu}_0}, g_0, \pi_{\Phi_j})\right] \leq \dfrac{\sqrt{\epsilon_{\mathrm{bias}}}}{1-\gamma}+\epsilon,
	\end{align*}  
	for arbitrary  initial parameter $\Phi_0$, initial local state distribution $\boldsymbol{\mu}_0\in\Delta(\mathcal{X})$ and initial global state $g_0$. The parameters $\{M_R,\chi, G, M\}$ are defined in Assumptions \ref{ass_1}, \ref{ass_6}, \ref{ass_7}, \ref{ass_8} respectively. The term $\epsilon_{\mathrm{bias}}$ is a positive constant. The sample complexity of Algorithm \ref{algo_1} is $\mathcal{O}(\epsilon^{-3})$.  
\end{lemma}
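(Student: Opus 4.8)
The plan is to observe that the MFC problem is, as already noted in Section~\ref{sec_mfc_framework}, an ordinary single-agent Markov decision process: its state is the triple $s_t=(x_t,\boldsymbol{\mu}_t,g_t)\in\mathcal{X}\times\Delta(\mathcal{X})\times\mathcal{G}$, its action is $u_t\in\mathcal{U}$, its one-step reward is $r(x_t,u_t,\boldsymbol{\mu}_t,g_t,\boldsymbol{\nu}_t)$, and its transition law is $x_{t+1}\sim P(x_t,u_t,\boldsymbol{\mu}_t,g_t,\boldsymbol{\nu}_t)$, $\boldsymbol{\mu}_{t+1}=P^{\mathrm{MF}}(\boldsymbol{\mu}_t,g_t,\pi_t)$ (deterministic), $g_{t+1}\sim P_G^{\mathrm{MF}}(\boldsymbol{\mu}_t,g_t,\pi_t)$, with $\boldsymbol{\nu}_t$ read off from $(\boldsymbol{\mu}_t,g_t,\pi_t)$ via $(\ref{eq_nu_t})$. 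On this MDP, the update $(\ref{npg_update})$ together with the compatible-function-approximation regression $L_{\zeta^{\Phi'}_{(\boldsymbol{\mu}_0,g_0)}}$ and the SGD solver $(\ref{sub_prob_grad_update})$ is exactly the sample-based Natural Policy Gradient scheme analyzed in \citep{liu2020improved} (building on \citep{agarwal2021theory}). The proof then reduces to three tasks: (i)~match Assumptions~\ref{ass_6}--\ref{ass_9} to the hypotheses of Theorem~4.9 of \citep{liu2020improved}; (ii)~discharge the remaining standard regularity requirements of that theorem; and (iii)~quote the theorem and do the sample-complexity bookkeeping.

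For (i): up to the augmentation of the state by $(\boldsymbol{\mu},g)$, Assumption~\ref{ass_6} is precisely the Fisher-non-degeneracy condition, Assumptions~\ref{ass_7} and \ref{ass_8} are the bounded-score and Lipschitz-score conditions on $\log\pi_\Phi$, and Assumption~\ref{ass_9} is the transfer-error (bias) bound, so nothing further is needed there. For (ii): Assumption~\ref{ass_1}(a) gives $|r|\le M$, hence $\|Q_\Phi\|_\infty\le M/(1-\gamma)$ and $\|A_\Phi\|_\infty\le 2M/(1-\gamma)$; the sampling routine of Algorithm~\ref{algo_2} draws a horizon $\tau\sim\mathrm{Geometric}(1-\gamma)$ and rolls out the representative-agent MDP to produce $(x,\boldsymbol{\mu},g,u)\sim\zeta^{\Phi_j}_{(\boldsymbol{\mu}_0,g_0)}$ and, via a second independent geometric rollout, an unbiased estimate $\hat A_{\Phi_j}$ with $|\hat A_{\Phi_j}|=\mathcal{O}(M/(1-\gamma))$, exactly as in Algorithm~3 of \citep{agarwal2021theory}; combined with Assumption~\ref{ass_7} this bounds $\mathbb{E}\|\mathbf{h}_{j,l}\|^2$ by a constant depending only on $G,M,\gamma$ and $\sup_l\|\mathbf{w}_{j,l}\|$, which is what the inner-loop SGD analysis requires. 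It is worth stressing that the continuity of the $\Delta(\mathcal{X})$-component of the state causes no difficulty: $\boldsymbol{\mu}_t$ evolves deterministically, all the estimators above are built from $x_t$, $g_t$ and $u_t$ only, and every bound is independent of the cardinality of $\Delta(\mathcal{X})$ (and of $|\mathcal{G}|$).

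For (iii): with (i)--(ii) in hand, Theorem~4.9 of \citep{liu2020improved} applies and yields, for the constant step sizes $\eta,\alpha$ and iteration counts $J,L$ it prescribes,
\begin{align*}
V_\infty^*(\boldsymbol{\mu}_0,g_0)-\frac{1}{J}\sum_{j=1}^J V_\infty(\boldsymbol{\mu}_0,g_0,\pi_{\Phi_j})\le \frac{\sqrt{\epsilon_{\mathrm{bias}}}}{1-\gamma}+\epsilon ,
\end{align*}
where $\epsilon_{\mathrm{bias}}$ is the constant from Assumption~\ref{ass_9}. Tracking the choices $J=\mathcal{O}(\epsilon^{-2})$ and $L=\mathcal{O}(\epsilon^{-2})$, together with the fact that each inner iteration consumes $\mathcal{O}(1)$ trajectories of expected length $\mathcal{O}((1-\gamma)^{-1})$ for the occupancy-measure sampling and the advantage estimation, the total number of sampled transitions is $\mathcal{O}(JL)=\mathcal{O}(\epsilon^{-3})$ (up to $(1-\gamma)$-factors), which is the asserted sample complexity. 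Since $\Phi_0$, $\boldsymbol{\mu}_0$ and $g_0$ were arbitrary, the bound holds for all initial conditions.

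The main obstacle I expect is task (ii): carefully verifying that the truncated-rollout oracle in Algorithm~\ref{algo_2} really does deliver an \emph{unbiased}, bounded-second-moment estimate of $\zeta^{\Phi_j}_{(\boldsymbol{\mu}_0,g_0)}$-samples and of $A_{\Phi_j}$ on this augmented (and partly continuous-state) MDP, and that the variance control it provides is uniform over the iterates $\Phi_j$ and the SGD trajectory $\mathbf{w}_{j,l}$ --- this is precisely the ingredient that the cited convergence theorem treats as a black box, so it must be established here --- and then making sure the horizon-truncation overhead compounds to exactly $\mathcal{O}(\epsilon^{-3})$ rather than a larger power. Everything else is a direct invocation of \citep{liu2020improved}.
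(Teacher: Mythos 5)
Your proposal is correct and follows essentially the same route as the paper: the paper gives no separate proof of this lemma, instead treating it as a direct consequence of Theorem~4.9 of \citep{liu2020improved} after noting that Assumptions~\ref{ass_6}--\ref{ass_9} mirror the hypotheses of that theorem and that the MFC problem is a single-agent MDP on the augmented state $(x,\boldsymbol{\mu},g)$ with the sampler of Algorithm~\ref{algo_2} playing the role of Algorithm~3 of \citep{agarwal2021theory}. If anything, your task (ii) — verifying unbiasedness and bounded second moments of the rollout oracle on the partly continuous state space — is more careful than the paper, which leaves that step implicit.
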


The term $\epsilon_{\mathrm{bias}}$ introduced in Lemma \ref{lemma_0} is termed as the expressivity error of the policy class $\Pi$ parameterized by the $\mathrm{d}-$dimensional parameter. For dense neural network-based policies, $\epsilon_{\mathrm{bias}}$ appears to be small \citep{liu2020improved}.

Lemma \ref{lemma_0} states that Algorithm \ref{algo_1} can approximate the optimal mean-field value, $V_\infty^*(\boldsymbol{\mu}_0, g_0)$ for any $(\boldsymbol{\mu}_0, g_0)\in\Delta(\mathcal{X})\times\mathcal{G}$ with an error bound of $\epsilon$, and a sample complexity of $\mathcal{O}(\epsilon^{-3})$. The constant term hiding in $\mathcal{O}(\cdot)$ is dependent on the parameters $\{M_R,\chi, G, M\}$ defined in Assumptions \ref{ass_1}, \ref{ass_6}, \ref{ass_7}, \ref{ass_8}. Combining Lemma \ref{lemma_0} with Theorem \ref{theorem_1}, we now arrive at the following result.
\begin{theorem}
	\label{corr_1}
	Let $\boldsymbol{x}_0\triangleq\{x_0^i\}_{i\in\{1,\cdots,N\}}$ and $g_0$ be the initial local and global states respectively and $\boldsymbol{\mu}_0$ be the empirical distribution of $\boldsymbol{x}_0$. Assume that $\{\Phi_j\}_{j=1}^J$ are the policy parameters generated from Algorithm \ref{algo_1} corresponding to the initial condition $(\boldsymbol{\mu}_0, g_0, \Phi_0)$, and and the set of policies, $\Pi$ follows Assumption \ref{ass_2}. If assumptions \ref{ass_1}, \ref{ass_2}, \ref{ass_6} - \ref{ass_9} are true, then, $\forall\epsilon>0$, the following inequality holds for the choice of the parameters $\{\eta, \alpha,J,L\}$ stated in Lemma \ref{lemma_0}.  
	\begin{align*}
	\begin{split}
	&	\left|V_N^*(\boldsymbol{x}_0, g_0)-\dfrac{1}{J}\sum_{j=1}^J \mathbb{E}\left[V_{\infty}({\boldsymbol{\mu}_0}, g_0,\pi_{\Phi_j})\right]\right|\leq \dfrac{\sqrt{\epsilon_{\mathrm{bias}}}}{1-\gamma}+C \max\{e,\epsilon\}\\
	&\text{where} ~V_N^*(\boldsymbol{x}_0,g_0)=\sup_{\Phi\in\mathbb{R}^{\mathrm{d}}}V_N(\boldsymbol{x}_0,g_0,\pi_{\Phi}), ~e\triangleq \dfrac{1}{\sqrt{N}}\left[\sqrt{|\mathcal{X}|}+\sqrt{|\mathcal{U}|}\right]
	\end{split}
	\end{align*}  
	whenever $\gamma S_P< 1$ where $S_P$ is defined in Theorem \ref{theorem_1}. The parameter, $C$ is a constant and the parameter $\epsilon_{\mathrm{bias}}$ is defined in Lemma \ref{lemma_0}. The sample complexity of the process is $\mathcal{O}(\epsilon^{-3})$.
\end{theorem}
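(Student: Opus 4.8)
The plan is to prove Theorem~\ref{corr_1} as a bookkeeping combination of the approximation guarantee in Theorem~\ref{theorem_1} and the NPG convergence guarantee in Lemma~\ref{lemma_0}, glued together by a triangle inequality that inserts the optimal mean-field value $V_\infty^*(\boldsymbol{\mu}_0, g_0) = \sup_{\Phi\in\mathbb{R}^{\mathrm{d}}} V_\infty(\boldsymbol{\mu}_0, g_0, \pi_\Phi)$:
\[
\left|V_N^*(\boldsymbol{x}_0, g_0) - \frac{1}{J}\sum_{j=1}^J V_\infty(\boldsymbol{\mu}_0, g_0, \pi_{\Phi_j})\right| \le \left|V_N^*(\boldsymbol{x}_0, g_0) - V_\infty^*(\boldsymbol{\mu}_0, g_0)\right| + \left|V_\infty^*(\boldsymbol{\mu}_0, g_0) - \frac{1}{J}\sum_{j=1}^J V_\infty(\boldsymbol{\mu}_0, g_0, \pi_{\Phi_j})\right|.
\]
The first term will be controlled by the MFC--MARL approximation bound, the second by the convergence of the Algorithm~\ref{algo_1} iterates.

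For the first term, I would not work with the literal statement of Theorem~\ref{theorem_1} (a bound on the gap between two optima) but with what its proof delivers in Step~7 of Section~\ref{sec_6}, namely the uniform estimate $\max_{\boldsymbol{\pi}\in\Pi^\infty}\left|V_N(\boldsymbol{x}_0, g_0, \boldsymbol{\pi}) - V_\infty(\boldsymbol{\mu}_0, g_0, \boldsymbol{\pi})\right| \le B_N$, where $B_N$ is the right-hand side of Theorem~\ref{theorem_1} and $B_N = \mathcal{O}(e)$; this estimate requires only $\gamma S_P < 1$ together with Assumptions~\ref{ass_1} and~\ref{ass_2}, all of which are hypotheses of Theorem~\ref{corr_1} (in particular the policy class $\Pi$ is assumed to obey Assumption~\ref{ass_2}). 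Since each parametrized policy $\pi_\Phi \in \Pi$ induces a stationary sequence in $\Pi^\infty$, applying the uniform estimate with $\boldsymbol{\pi} = \pi_\Phi$ for every $\Phi$ and then invoking the elementary inequality $\left|\sup_\Phi f(\Phi) - \sup_\Phi h(\Phi)\right| \le \sup_\Phi \left|f(\Phi) - h(\Phi)\right|$ with $f(\Phi) = V_N(\boldsymbol{x}_0, g_0, \pi_\Phi)$ and $h(\Phi) = V_\infty(\boldsymbol{\mu}_0, g_0, \pi_\Phi)$ yields $\left|V_N^*(\boldsymbol{x}_0, g_0) - V_\infty^*(\boldsymbol{\mu}_0, g_0)\right| \le B_N = \mathcal{O}(e)$.

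For the second term, Lemma~\ref{lemma_0} provides, for suitable learning rates $\eta, \alpha$ and horizons $J, L$ chosen to target accuracy $\epsilon$, the bound $V_\infty^*(\boldsymbol{\mu}_0, g_0) - \frac{1}{J}\sum_{j=1}^J V_\infty(\boldsymbol{\mu}_0, g_0, \pi_{\Phi_j}) \le \frac{\sqrt{\epsilon_{\mathrm{bias}}}}{1-\gamma} + \epsilon$ with sample complexity $\mathcal{O}(\epsilon^{-3})$; the left-hand side is nonnegative because $V_\infty^*$ dominates each $V_\infty(\boldsymbol{\mu}_0, g_0, \pi_{\Phi_j})$ and hence their average, so the same inequality bounds the absolute value. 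Adding the two bounds and folding $B_N = \mathcal{O}(e)$ together with the additive $\epsilon$ into a single term $C\max\{e,\epsilon\}$ (using $B_N \le C' e \le C'\max\{e,\epsilon\}$ and $\epsilon \le \max\{e,\epsilon\}$) gives the stated estimate, with the sample complexity inherited verbatim from Lemma~\ref{lemma_0}.

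I do not expect a genuine technical obstacle here: the argument is essentially a triangle inequality over two already-proved results. The one point requiring care — and the only place where I would deviate from quoting theorems verbatim — is that Theorem~\ref{theorem_1} is stated as a bound on $\left|\max_{\boldsymbol{\pi}} V_N - \max_{\boldsymbol{\pi}} V_\infty\right|$, whereas here the bound must hold simultaneously for the restricted parametrized class $\{\pi_\Phi\}$; hence the appeal to the uniform form $\max_{\boldsymbol{\pi}}\left|V_N - V_\infty\right| \le B_N$ from Step~7 rather than to the headline statement. A secondary, purely cosmetic point is the passage from an $e+\epsilon$-type estimate to the $\max\{e,\epsilon\}$ form, which only affects the constant $C$.
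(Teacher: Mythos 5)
Your proposal is correct and follows essentially the same route as the paper: insert $V_{\infty}^*(\boldsymbol{\mu}_0,g_0)$ via the triangle inequality, bound the first term by the MFC--MARL approximation result and the second by Lemma \ref{lemma_0}, then absorb $C'e+\epsilon$ into $C\max\{e,\epsilon\}$ with $C=2\max\{C',1\}$. Your extra care in invoking the uniform bound $\max_{\boldsymbol{\pi}}|V_N-V_{\infty}|\leq B_N$ over the parametrized class (rather than the headline statement of Theorem \ref{theorem_1}, whose maxima range over all of $\Pi^{\infty}$) is a legitimate refinement of a step the paper glosses over, but it does not change the argument.
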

\begin{proof} Note that,
	\begin{align*}
	\begin{split}
		\Bigg|V_N^*(\boldsymbol{x}_0, g_0)-\dfrac{1}{J}\sum_{j=1}^J V_{\infty}(\boldsymbol{\mu}_0, g_0,\pi_{\Phi_j})\Bigg|
	 \leq 	\left|V_N^*(\boldsymbol{x}_0, g_0) - V_{\infty}^*(\boldsymbol{\mu}_0,g_0)\right| 
	 + \left|V_{\infty}^*(\boldsymbol{\mu}_0, g_0)-\dfrac{1}{J}\sum_{j=1}^J V_{\infty}({\boldsymbol{\mu}_0}, g_0,\pi_{\Phi_j})\right|
	\end{split}
	\end{align*}
	
	Applying Theorem \ref{theorem_1}, the first term can be bounded by $C'e$ for some constant $C'$. The second term can be bounded by $\sqrt{\epsilon_{\mathrm{bias}}}/(1-\gamma) + \epsilon$ with a sample complexity of $\mathcal{O}(\epsilon^{-3})$ (Lemma \ref{lemma_0}). Using $C=2\max\{C', 1\}$, we conclude. 
\end{proof}

Theorem \ref{corr_1} states that Algorithm \ref{algo_1}  generates a policy such that its associated $N-$agent value is at most $\mathcal{O}(\max\{e,\epsilon\})$ error away from the optimal $N-$agent value. Additionally, it also guarantees that such a policy can be obtained with a sample complexity of $\mathcal{O}(\epsilon^{-3})$ where $\epsilon$ is an arbitrarily chosen positive number. We would like to point out that in Theorem $\ref{corr_1}$,  $e=\left[\sqrt{|\mathcal{X}|}+\sqrt{|\mathcal{U}|}\right]/\sqrt{N}$. However, if we assume that Assumption \ref{ass_3} holds in addition to all the assumptions mentioned in Theorem $\ref{corr_1}$, then using Theorem \ref{theorem_2}, one can similarly show that the average difference between the optimal MARL value and the sequence of values generated by Algorithm $\ref{algo_1}$ is $\mathcal{O}(\max\{e,\epsilon\})$ where $e=\sqrt{|\mathcal{X}|}/\sqrt{N}$ and such an error can be achieved with a sample complexity of $\mathcal{O}(\epsilon^{-3})$ where $\epsilon>0$ is a tunable parameter.

\section{Experiments}
For numerical experiment, we shall consider a variant of the model described in \cite{subramanian2019reinforcement}. The model consists of $N$ collaborative firms operated by a single company. All of them produces the same product but with different quality. At time $t$, the $i$-th firm may decide to improve the current quality of its product (denoted by $x_t^i$) by investing $\lambda_R$ amount of money. The action corresponding to investment is denoted as $u_t^i=1$ whereas no investment is indicated as $u_t^i=0$. Clearly, the action space is $\mathcal{U}=\{0,1\}$. On the other hand, the state space is assumed to be $\mathcal{X}=\{0,1,\cdots, Q-1\}$. The state transition model of the $i$-th firm is as follows.
\begin{align}
x_{t+1}^i = 
\begin{cases}
x_{t}^i~~\hspace{5.5cm}\text{if}~u_t^i=0\\
x_t^i + \left\lfloor \chi \left(Q-1-x_t^i\right)\left(1-\dfrac{\boldsymbol{\bar{\mu}}_t^N}{Q}\right) \right\rfloor~~~\hspace{0.5cm}\text{elsewhere}
\end{cases}
\label{eq_trans_law}
\end{align}
where $\chi$ is a uniform random variable in $[0,1]$, $\boldsymbol{\mu}_t^N$ is the empirical distribution of $\boldsymbol{x}_t^N\triangleq\{x_t^i\}_{i\in\{1,\cdots,N\}}$ and $\boldsymbol{\bar{\mu}}_t^N$ is the mean of $\boldsymbol{\mu}_t^N$. The intuition for $(\ref{eq_trans_law})$ is that the product quality does not change when no investment is made and it improves probabilistically otherwise. However, the improvement also depends on the average product quality, $\boldsymbol{\bar{\mu}}_t^N$ of the economy. Specifically, it is harder to improve the quality when $\boldsymbol{\bar{\mu}}_t^N$ is high. The term $(1-\boldsymbol{\bar{\mu}}_t^N/Q)$ signifies the resistance to improvement. The reward experienced by the $i$-th firm at time $t$ is expressed as follows.
\begin{align}
r(x_t^i, u_t^i, \boldsymbol{\mu}_t^N, \alpha_t^N) = \alpha_t^N x_t^i - \beta_R\boldsymbol{\bar{\mu}}_t^N - \lambda_R u_t^i
\label{experiment_reward}
\end{align} 

The first term $\alpha_t^N x_t^i$ is the revenue earned by the $i$-th firm. One can interpret $\alpha_t^N$ as the price per unit quality at time $t$. Clearly, $\alpha_t^N$ plays the role of shared global state. In \citep{subramanian2019reinforcement}, $\alpha_t^N$ was taken to be a constant. Here we assume it to be linearly dependent on the average product quality at $t-1$ i.e., $\alpha_t^N=\lambda_0(1-\lambda_1(\boldsymbol{\bar{\mu}}_{t-1}^N/Q))$ where $\lambda_0,\lambda_1$ are arbitrary positive constants. The intuition is that the average quality at $t-1$ influences the price at $t$. Specifically, as $\boldsymbol{\bar{\mu}}_{t-1}$ rises, the firms can charge less price per quality, $\alpha_t^N$. The second term, $\beta_R\boldsymbol{\bar{\mu}}_t^N$ denotes the cost incurred due to the average product quality, $\boldsymbol{\bar{\mu}}_t^N$. Finally, the third term, $\lambda_R u_t^i$ indicates the investment cost. Let $\pi^*$ be the policy given by the Algorithm \ref{algo_1}. We define the error as,
\begin{align*}
\mathrm{error} \triangleq |V_N(\boldsymbol{x}_0, \alpha_0, \pi^*)-V_\infty(\boldsymbol{\mu}_0, \alpha_0, \pi^*)|
\end{align*}
where $\boldsymbol{x}_0=\{x_0^i\}_{i\in\{1,\cdots,N\}}$ is the initial local states, $\boldsymbol{\mu}_0$ is its empirical distribution and $\alpha_0$ is the initial global price. Moreover, $V_N$ and $V_\infty$ are defined via $(\ref{def_VN})$ and $(\ref{value_mfc})$ respectively. Fig. \ref{fig_1} plots the error as a function of $N$. We can observe\footnote{The code can be accessed at: https://github.itap.purdue.edu/Clan-labs/MeanFieldwithGlobalState} that the error decreases with $N$.
\begin{figure}
	\centering
	\includegraphics[width=0.6\linewidth]{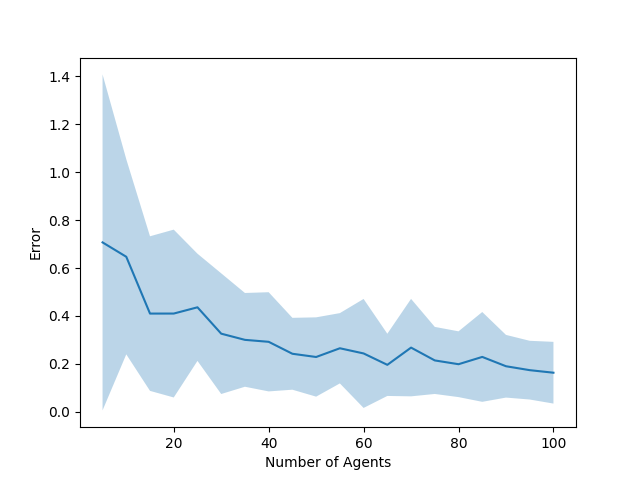}
	\caption{The error as a function of $N$. The bold line and  half-width of the shaded region respectively denote the mean and standard deviation of the error obtained over $25$ experiments conducted with different random seeds. The chosen model parameters are as follows: $\lambda_0=1$, $\lambda_1=0.5$, $\beta_R=0.5$, $\lambda_R=0.5$, $Q=10$.}
	\label{fig_1}
\end{figure}

\section{Conclusions}
In this paper, we introduce a MARL framework where the  agents, in addition to their local states that transitions conditionally independently, also possess a shared global state. As a result, the combined state transition processes of each agent becomes correlated with each other. Our contribution is to show that mean-field based approximations are valid even in presence of such correlation. We obtain the expression for the approximation error as a function of different parameters of the model and surprisingly observe that it is not dependent on the size of the global state-space. Furthermore, we designed an algorithm that approximately obtains an optimal solution to the MARL problem. Although our work shows approximation results in presence of correlated evolution, the presumed structure of correlation is not in the most generic form. Specifically, we assumed that the states of each agent can be segregated into two part$-$one evolving conditionally independently and the other being identical to every agent. Whether mean-field control techniques work in presence of more general form of correlated evolution is an important question that needs to be investigated in the future.


\appendix
\section{Proof of Theorem \ref{theorem_1}}
\label{app_proof_theorem_1}

The following helper lemmas are needed to establish the main result.

\subsection{Continuity Lemmas}

In the following lemmas, $\pi\in \Pi$ is an arbitrary policy and $\boldsymbol{\mu}, \boldsymbol{\bar{\mu}}\in \Delta(\mathcal{X})$ are arbitrary local state distributions. 

\begin{lemma}
	\label{lemma_1}
	If $\nu^{\mathrm{MF}}(\cdot, \cdot, \cdot)$ is defined by $(\ref{eq_nu_t})$, then the following relation holds $\forall g\in \mathcal{G}$.
	\begin{align*}
	|\nu^{\mathrm{MF}}(\boldsymbol{\mu}, g, \pi)-\nu^{\mathrm{MF}}(\boldsymbol{\bar{\mu} }, g, \pi)| \leq (1+L_Q)|\boldsymbol{\mu}-\boldsymbol{\bar{\mu}}|_1
	\end{align*}
\end{lemma}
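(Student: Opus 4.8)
The plan is a standard ``add and subtract'' decomposition exploiting the product structure of $\nu^{\mathrm{MF}}$. Writing out the definition from $(\ref{eq_nu_t})$, the quantity to bound is
\begin{align*}
\left|\nu^{\mathrm{MF}}(\boldsymbol{\mu}, g, \pi)-\nu^{\mathrm{MF}}(\boldsymbol{\bar{\mu}}, g, \pi)\right|_1
= \left|\sum_{x\in\mathcal{X}} \pi(x,\boldsymbol{\mu},g)\boldsymbol{\mu}(x) - \sum_{x\in\mathcal{X}} \pi(x,\boldsymbol{\bar\mu},g)\boldsymbol{\bar\mu}(x)\right|_1 .
\end{align*}
First I would insert the hybrid term $\sum_x \pi(x,\boldsymbol{\mu},g)\boldsymbol{\bar\mu}(x)$, so that the difference splits as $\sum_x \pi(x,\boldsymbol{\mu},g)\big(\boldsymbol{\mu}(x)-\boldsymbol{\bar\mu}(x)\big) + \sum_x \big(\pi(x,\boldsymbol{\mu},g)-\pi(x,\boldsymbol{\bar\mu},g)\big)\boldsymbol{\bar\mu}(x)$, and apply the triangle inequality for $|\cdot|_1$.

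For the first sum, each $\pi(x,\boldsymbol{\mu},g)$ is a probability vector over $\mathcal{U}$, hence $|\pi(x,\boldsymbol{\mu},g)|_1 = 1$, and the scalar weights $|\boldsymbol{\mu}(x)-\boldsymbol{\bar\mu}(x)|$ sum to $|\boldsymbol{\mu}-\boldsymbol{\bar\mu}|_1$; this term is therefore at most $|\boldsymbol{\mu}-\boldsymbol{\bar\mu}|_1$. For the second sum, I would invoke Assumption \ref{ass_2} to get $|\pi(x,\boldsymbol{\mu},g)-\pi(x,\boldsymbol{\bar\mu},g)|_1 \le L_Q|\boldsymbol{\mu}-\boldsymbol{\bar\mu}|_1$ uniformly in $x$, and then use $\sum_x \boldsymbol{\bar\mu}(x)=1$ to bound this term by $L_Q|\boldsymbol{\mu}-\boldsymbol{\bar\mu}|_1$. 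Adding the two contributions yields the claimed constant $1+L_Q$.

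There is essentially no serious obstacle here; the only point requiring a little care is bookkeeping of the $L_1$ norms, namely distinguishing the $\mathcal{X}$-indexed scalar differences $\boldsymbol{\mu}(x)-\boldsymbol{\bar\mu}(x)$ from the $\mathcal{U}$-valued vector differences $\pi(x,\boldsymbol{\mu},g)-\pi(x,\boldsymbol{\bar\mu},g)$, and confirming that the outer norm in the statement is the $L_1$ norm on $\Delta(\mathcal{U})$ (which is the natural reading, consistent with how Assumption \ref{ass_2} is used). This lemma will then serve as the base case for the subsequent continuity lemmas (e.g.\ Lipschitz continuity of $P^{\mathrm{MF}}$ and $r^{\mathrm{MF}}$), which chain it together with Assumption \ref{ass_1}.
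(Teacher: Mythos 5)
Your proposal is correct and matches the paper's own proof essentially step for step: both insert the hybrid term $\sum_x \pi(x,\boldsymbol{\mu},g)\boldsymbol{\bar\mu}(x)$, bound the first piece using $\sum_u \pi(x,\boldsymbol{\mu},g)(u)=1$, and bound the second using Assumption \ref{ass_2} together with $\sum_x\boldsymbol{\bar\mu}(x)=1$. Your reading of the left-hand norm as the $L_1$ norm on $\Delta(\mathcal{U})$ is also the one the paper uses in its proof.
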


\begin{lemma}
	\label{lemma_2}
	If $P^{\mathrm{MF}}(\cdot, \cdot, \cdot)$ is defined by $(\ref{eq_mu_t_plus_1})$, then the following relation holds $\forall g\in \mathcal{G}$.
	\begin{align*}
	|P^{\mathrm{MF}}(\boldsymbol{\mu}, g, \pi) - P^{\mathrm{MF}}(\boldsymbol{\bar{\mu}}, g, \pi)|_1 \leq  S_P|\boldsymbol{\mu}-\boldsymbol{\bar{\mu}}|_1
	\end{align*}
	where $S_P\triangleq 1+2L_P+L_Q(1+L_P)$.
\end{lemma}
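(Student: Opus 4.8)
The plan is to expand both $P^{\mathrm{MF}}(\boldsymbol{\mu},g,\pi)$ and $P^{\mathrm{MF}}(\boldsymbol{\bar\mu},g,\pi)$ via the definition $(\ref{eq_mu_t_plus_1})$ and bound their $L_1$-distance through a three-term telescoping decomposition. Writing $\boldsymbol{\nu}\triangleq\nu^{\mathrm{MF}}(\boldsymbol{\mu},g,\pi)$ and $\boldsymbol{\bar\nu}\triangleq\nu^{\mathrm{MF}}(\boldsymbol{\bar\mu},g,\pi)$, I would insert and subtract, inside the double sum over $x\in\mathcal{X}$ and $u\in\mathcal{U}$, the intermediate terms $P(x,u,\boldsymbol{\bar\mu},g,\boldsymbol{\bar\nu})\,\pi(x,\boldsymbol{\mu},g)(u)\,\boldsymbol{\mu}(x)$ and $P(x,u,\boldsymbol{\bar\mu},g,\boldsymbol{\bar\nu})\,\pi(x,\boldsymbol{\bar\mu},g)(u)\,\boldsymbol{\mu}(x)$. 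Treating the whole expression as a vector in $\mathbb{R}^{|\mathcal{X}|}$ and applying the triangle inequality then yields $|P^{\mathrm{MF}}(\boldsymbol{\mu},g,\pi)-P^{\mathrm{MF}}(\boldsymbol{\bar\mu},g,\pi)|_1\le T_1+T_2+T_3$, where $T_1$ isolates the change in the transition kernel $P$, $T_2$ the change in the policy $\pi$, and $T_3$ the change in the mixing weight $\boldsymbol{\mu}(x)$.

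For $T_1$, I would move the $L_1$-norm inside the sum, use that $\sum_{x,u}\pi(x,\boldsymbol{\mu},g)(u)\boldsymbol{\mu}(x)=1$, and apply Assumption \ref{ass_1}(c) to get $T_1\le L_P\big(|\boldsymbol{\mu}-\boldsymbol{\bar\mu}|_1+|\boldsymbol{\nu}-\boldsymbol{\bar\nu}|_1\big)$; then Lemma \ref{lemma_1} gives $|\boldsymbol{\nu}-\boldsymbol{\bar\nu}|_1\le(1+L_Q)|\boldsymbol{\mu}-\boldsymbol{\bar\mu}|_1$, so $T_1\le L_P(2+L_Q)|\boldsymbol{\mu}-\boldsymbol{\bar\mu}|_1$. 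For $T_2$, since $P(x,u,\boldsymbol{\bar\mu},g,\boldsymbol{\bar\nu})$ is a probability vector with $L_1$-norm $1$, that norm factors out, and Assumption \ref{ass_2} (so $\sum_u|\pi(x,\boldsymbol{\mu},g)(u)-\pi(x,\boldsymbol{\bar\mu},g)(u)|\le L_Q|\boldsymbol{\mu}-\boldsymbol{\bar\mu}|_1$) together with $\sum_x\boldsymbol{\mu}(x)=1$ gives $T_2\le L_Q|\boldsymbol{\mu}-\boldsymbol{\bar\mu}|_1$. For $T_3$, the same fact that $P$ and $\pi$ are probability vectors collapses the inner sums, leaving $T_3\le\sum_x|\boldsymbol{\mu}(x)-\boldsymbol{\bar\mu}(x)|=|\boldsymbol{\mu}-\boldsymbol{\bar\mu}|_1$. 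Adding the three bounds, the coefficient is $1+L_Q+L_P(2+L_Q)=1+2L_P+L_Q(1+L_P)=S_P$, which is exactly the claim.

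I do not anticipate a real obstacle: the estimate follows mechanically once the bookkeeping is set up. The only points needing care are (i) pulling the outer $L_1$-norm inside the double sum via the triangle inequality \emph{before} any bounding, so that each summand is estimated termwise, and (ii) ordering the telescoping so that at each of the three steps exactly one argument ($P$'s distribution arguments, $\pi$'s distribution argument, or the weight $\boldsymbol{\mu}$) varies while all remaining factors are genuine probability vectors whose $L_1$-norms equal $1$. With that in place the result is immediate from Assumption \ref{ass_1}(c), Assumption \ref{ass_2}, and Lemma \ref{lemma_1}.
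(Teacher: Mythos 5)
Your proposal is correct and follows essentially the same route as the paper: the paper also telescopes through intermediate terms, bounds the change in $P$'s distribution arguments by $L_P(2+L_Q)|\boldsymbol{\mu}-\boldsymbol{\bar{\mu}}|_1$ via Assumption \ref{ass_1}(c) and Lemma \ref{lemma_1}, and bounds the remaining product change by $(1+L_Q)|\boldsymbol{\mu}-\boldsymbol{\bar{\mu}}|_1$. The only cosmetic difference is that the paper groups your $T_2$ and $T_3$ into a single term $J_2$ before splitting it internally into the same two pieces.
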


\begin{lemma}
	\label{lemma_2_}
	If $P_G^{\mathrm{MF}}(\cdot, \cdot, \cdot)$ is defined by $(\ref{eq_lambda_t_plus_1})$, then the following relation holds $\forall g\in \mathcal{G}$.
	\begin{align*}
	|P_G^{\mathrm{MF}}(\boldsymbol{\mu}, g, \pi) - P_G^{\mathrm{MF}}(\boldsymbol{\bar{\mu}}, g, \pi)|_1 \leq  S_G|\boldsymbol{\mu}-\boldsymbol{\bar{\mu}}|_1
	\end{align*}
	where $S_G\triangleq L_G(2+L_Q)$.
\end{lemma}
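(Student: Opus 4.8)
The plan is to proceed exactly along the lines of the proofs of Lemmas \ref{lemma_1} and \ref{lemma_2}, unfolding the definition $(\ref{eq_lambda_t_plus_1})$ of $P_G^{\mathrm{MF}}$ and inserting intermediate comparison terms so that the Lipschitz constants $L_G$ (Assumption \ref{ass_1}(d)) and $L_Q$ (Assumption \ref{ass_2}) can be applied one at a time. Recall that $P_G^{\mathrm{MF}}(\boldsymbol{\mu}, g, \pi) = P_G(\boldsymbol{\mu}, g, \nu^{\mathrm{MF}}(\boldsymbol{\mu}, g, \pi))$, so the difference $P_G^{\mathrm{MF}}(\boldsymbol{\mu}, g, \pi) - P_G^{\mathrm{MF}}(\boldsymbol{\bar{\mu}}, g, \pi)$ depends on $\boldsymbol{\mu}$ in two places: directly through the first argument of $P_G$, and indirectly through the action distribution $\nu^{\mathrm{MF}}(\boldsymbol{\mu}, g, \pi)$.

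First I would insert the hybrid term $P_G(\boldsymbol{\bar{\mu}}, g, \nu^{\mathrm{MF}}(\boldsymbol{\mu}, g, \pi))$ and split via the triangle inequality:
\begin{align*}
|P_G^{\mathrm{MF}}(\boldsymbol{\mu}, g, \pi) - P_G^{\mathrm{MF}}(\boldsymbol{\bar{\mu}}, g, \pi)|_1
&\leq |P_G(\boldsymbol{\mu}, g, \nu^{\mathrm{MF}}(\boldsymbol{\mu}, g, \pi)) - P_G(\boldsymbol{\bar{\mu}}, g, \nu^{\mathrm{MF}}(\boldsymbol{\mu}, g, \pi))|_1\\
&\quad + |P_G(\boldsymbol{\bar{\mu}}, g, \nu^{\mathrm{MF}}(\boldsymbol{\mu}, g, \pi)) - P_G(\boldsymbol{\bar{\mu}}, g, \nu^{\mathrm{MF}}(\boldsymbol{\bar{\mu}}, g, \pi))|_1.
\end{align*}
Both terms are controlled by Assumption \ref{ass_1}(d): the first is at most $L_G|\boldsymbol{\mu}-\boldsymbol{\bar{\mu}}|_1$ (the action arguments agree, so only the state-distribution arguments differ), and the second is at most $L_G|\nu^{\mathrm{MF}}(\boldsymbol{\mu}, g, \pi) - \nu^{\mathrm{MF}}(\boldsymbol{\bar{\mu}}, g, \pi)|_1$ (the state-distribution arguments agree). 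Then I apply Lemma \ref{lemma_1} to the second term, bounding $|\nu^{\mathrm{MF}}(\boldsymbol{\mu}, g, \pi) - \nu^{\mathrm{MF}}(\boldsymbol{\bar{\mu}}, g, \pi)|_1 \leq (1+L_Q)|\boldsymbol{\mu}-\boldsymbol{\bar{\mu}}|_1$. Combining, the total bound is $L_G|\boldsymbol{\mu}-\boldsymbol{\bar{\mu}}|_1 + L_G(1+L_Q)|\boldsymbol{\mu}-\boldsymbol{\bar{\mu}}|_1 = L_G(2+L_Q)|\boldsymbol{\mu}-\boldsymbol{\bar{\mu}}|_1 = S_G|\boldsymbol{\mu}-\boldsymbol{\bar{\mu}}|_1$, which is exactly the claim.

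This is essentially a routine computation, so I do not anticipate a genuine obstacle; the only point requiring mild care is the implicit identification in Lemma \ref{lemma_1} — it is stated for the $|\cdot|$ (presumably the entrywise/sup or already $L_1$) norm of $\nu^{\mathrm{MF}}$, and I must make sure the norm used there matches the $L_1$-norm needed to feed Assumption \ref{ass_1}(d). Inspecting the statement of Lemma \ref{lemma_1}, the constant $(1+L_Q)$ is derived from summing $\pi_t(x,\cdot,g)(u)\boldsymbol{\mu}(x)$ over $x$ and $u$, so it is in fact an $L_1$ bound and plugs in directly. The argument is uniform in $g$, so the "$\forall g \in \mathcal{G}$" qualifier is automatic.
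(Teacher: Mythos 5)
Your proposal is correct and follows essentially the same route as the paper: the paper applies Assumption \ref{ass_1}(d) once with both arguments varying simultaneously to get $L_G\{|\boldsymbol{\mu}-\boldsymbol{\bar{\mu}}|_1+|\nu^{\mathrm{MF}}(\boldsymbol{\mu}, g, \pi)-\nu^{\mathrm{MF}}(\boldsymbol{\bar{\mu}}, g, \pi)|_1\}$ and then invokes Lemma \ref{lemma_1}, whereas you split via a triangle inequality and apply the assumption twice with one argument varying each time — an immaterial difference yielding the identical bound $S_G=L_G(2+L_Q)$.
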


\begin{lemma}
	\label{lemma_3}
	If $r^{\mathrm{MF}}(\cdot, \cdot, \cdot)$ is defined by $(\ref{eq_r_mf})$, then the following relation holds $\forall g\in\mathcal{G}$.
	\begin{align*}
	|r^{\mathrm{MF}}(\boldsymbol{\mu}, g, \pi) - r^{\mathrm{MF}}(\boldsymbol{\bar{\mu}}, g, \pi)| \leq  S_R|\boldsymbol{\mu}-\boldsymbol{\bar{\mu}}|_1
	\end{align*}
	where $S_R\triangleq M_R+2L_R+L_Q(M+L_R)$.
\end{lemma}

The lemmas stated above exhibit that the functions $\nu^{\mathrm{MF}}(\cdot, \cdot, \cdot)$, $P^{\mathrm{MF}}(\cdot, \cdot, \cdot)$, $P_G^{\mathrm{MF}}(\cdot, \cdot, \cdot)$ and $r^{\mathrm{MF}}(\cdot, \cdot, \cdot)$ defined in section \ref{sec_mfc_framework} are Lipschitz continuous. Proofs of Lemma $\ref{lemma_1}-\ref{lemma_3}$ are relegated to Appendix \ref{app_proof_lemma_1}$-$\ref{app_proof_lemma_3}. 

\subsection{Continuity of Some Relevant Functions}
\label{app_continuity_of_composition}

Let $\{\pi_l\}_{l\in\{0,1,\cdots\}}\in\Pi^{\infty}$ be an arbitrary policy sequence and $\{g_l\}_{l\in\{0,1,\cdots\}}\in\mathcal{G}^\infty$ be a given sequence of global states. Recall the definition of $P^{\mathrm{MF}}$ given in $(\ref{eq_mu_t_plus_1})$. For every $\boldsymbol{\mu}_l\in\Delta(\mathcal{X})$, we define the following.
\begin{align}
\label{app_eq_pmf_comp}
\tilde{P}^{\mathrm{MF}}(\boldsymbol{\mu}_l, g_{l:l+r}, \pi_{l:l+r})\triangleq P^{\mathrm{MF}}(\cdot, g_{l+r}, \pi_{l+r})\circ P^{\mathrm{MF}}(\cdot, g_{l+r-1}, \pi_{l+r-1})\circ \cdots \circ P^{\mathrm{MF}}(\cdot, g_{l}, \pi_{l}) (\boldsymbol{\mu}_l)
\end{align}
where $\circ$ denotes function composition and $l,r\in\{0,1,\cdots\}$. Note that, using $r=0$ in $(\ref{app_eq_pmf_comp})$, we get $\tilde{P}^{\mathrm{MF}}(\boldsymbol{\mu}_l, g_{l:l}, \pi_{l:l}) = P^{\mathrm{MF}}(\boldsymbol{\mu}_l, g_l, \pi_l)$. Similarly, using the definition of $P^{\mathrm{MF}}_G$ given in $(\ref{eq_lambda_t_plus_1})$, we define the following $\forall \boldsymbol{\mu}_l\in\Delta(\mathcal{X})$.
\begin{align}
\label{app_eq_pmf_g_comp}
\begin{split}
\tilde{P}_G^{\mathrm{MF}}(\boldsymbol{\mu}_l, g_{l:l+r}, \pi_{l:l+r})(g_{l+r+1})\triangleq & P_G^{\mathrm{MF}}(\boldsymbol{\mu}_l, g_{l}, \pi_{l}) (g_{l+1})\times P_G^{\mathrm{MF}}(\tilde{P}^{\mathrm{MF}}(\boldsymbol{\mu}_l, g_{l:l}, \pi_{l:l}), g_{l+1}, \pi_{l+1}) (g_{l+2}) \times\\
&	\cdots \times P_G^{\mathrm{MF}}(\tilde{P}^{\mathrm{MF}}(\boldsymbol{\mu}_l, g_{l:l+r-1}, \pi_{l:l+r-1}), g_{l+r}, \pi_{l+r}) (g_{l+r+1})
\end{split}
\end{align}

Finally, using the definition of $r^{\mathrm{MF}}$ given in $(\ref{eq_r_mf})$, we define the following $\forall \boldsymbol{\mu}_l\in\Delta(\mathcal{X})$.
\begin{align}
\label{app_eq_r_mf_comp}
\begin{split}
&\tilde{r}^{\mathrm{MF}}(\boldsymbol{\mu}_l, g_l, \pi_{l:l+r}) \\
&= \begin{cases}
\sum\limits_{l+1:l+r} r^{\mathrm{MF}}(\tilde{P}^{\mathrm{MF}}(\boldsymbol{\mu}_l, g_{l:l+r-1}, \pi_{l:l+r-1}), g_{l+r}, \pi_{l+r}) \tilde{P}_G^{\mathrm{MF}}(\boldsymbol{\mu}_l, g_{l:l+r-1}, \pi_{l:l+r-1})(g_{l+r}),~~\hspace{1.8cm}\text{if}~r\geq 1\vspace{0.2cm}\\
r^{\mathrm{MF}}(\boldsymbol{\mu}_l, g_l, \pi_l) \hspace{10.6cm}\text{if}~r=0
\end{cases}
\end{split}
\end{align}
where $\sum_{l+1:l+r}$ indicates a summation operation over $\{g_{l+1},\cdots, g_{l+r}\}\in\mathcal{G}^r$.  We prove the continuity property of these newly defined functions in the following lemmas. Proofs of Lemma $\ref{app_lemma_p_mf}-\ref{app_lemma_r_mf_comp}$ are relegated to Appendix \ref{app_sec_lemma_p_mf}$-$\ref{app_sec_proof_lemma_r_mf_comp}.

\begin{lemma}
	\label{app_lemma_p_mf}
	The following relations hold $\forall l,r\in\{0,1,\cdots\}$, $\forall \boldsymbol{\mu}_l, \boldsymbol{\bar{\mu}}_l\in\Delta(\mathcal{X})$, $\forall g_{l:l+r}\in \mathcal{G}^{r+1}$ and $\forall \pi_{l:l+r}\in\Pi^{r+1}$.
	\begin{align*}
	|\tilde{P}^{\mathrm{MF}}(\boldsymbol{\mu}_l, g_{l:l+r}, \pi_{l:l+r})- \tilde{P}^{\mathrm{MF}}(\boldsymbol{\bar{\mu}}_l, g_{l:l+r}, \pi_{l:l+r})|_1 \leq S_P^{r+1}|\boldsymbol{\mu}_l-\boldsymbol{\bar{\mu}}_l|_1
	\end{align*}
	where $S_P$ is defined in Lemma \ref{lemma_2}.
\end{lemma}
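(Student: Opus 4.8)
The plan is to prove the bound by induction on $r$, using the recursive structure of the composition in $(\ref{app_eq_pmf_comp})$ together with the single-step Lipschitz estimate of Lemma \ref{lemma_2}. The key observation that makes this work is that in Lemma \ref{lemma_2} the Lipschitz constant $S_P$ is uniform: it does not depend on the global state $g$ or on the particular policy $\pi\in\Pi$ fed into $P^{\mathrm{MF}}(\cdot,g,\pi)$. Hence, when we compose a chain of such maps with different (but fixed) arguments $g_{l:l+r}$ and $\pi_{l:l+r}$, each composition multiplies the Lipschitz constant by the same factor $S_P$.

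For the base case $r=0$, recall from the remark following $(\ref{app_eq_pmf_comp})$ that $\tilde{P}^{\mathrm{MF}}(\boldsymbol{\mu}_l, g_{l:l}, \pi_{l:l}) = P^{\mathrm{MF}}(\boldsymbol{\mu}_l, g_l, \pi_l)$, so the claimed inequality with $S_P^{1}$ is precisely the statement of Lemma \ref{lemma_2}. For the inductive step, assume the bound holds for some $r\ge 0$. Peeling off the outermost map in $(\ref{app_eq_pmf_comp})$, one can write
\begin{align*}
\tilde{P}^{\mathrm{MF}}(\boldsymbol{\mu}_l, g_{l:l+r+1}, \pi_{l:l+r+1}) = P^{\mathrm{MF}}\big(\tilde{P}^{\mathrm{MF}}(\boldsymbol{\mu}_l, g_{l:l+r}, \pi_{l:l+r}),\, g_{l+r+1},\, \pi_{l+r+1}\big).
\end{align*}
Applying Lemma \ref{lemma_2} to the outer map $P^{\mathrm{MF}}(\cdot, g_{l+r+1}, \pi_{l+r+1})$ and then the inductive hypothesis to the two inner compositions yields
\begin{align*}
|\tilde{P}^{\mathrm{MF}}(\boldsymbol{\mu}_l, g_{l:l+r+1}, \pi_{l:l+r+1}) - \tilde{P}^{\mathrm{MF}}(\boldsymbol{\bar{\mu}}_l, g_{l:l+r+1}, \pi_{l:l+r+1})|_1
&\leq S_P\,|\tilde{P}^{\mathrm{MF}}(\boldsymbol{\mu}_l, g_{l:l+r}, \pi_{l:l+r}) - \tilde{P}^{\mathrm{MF}}(\boldsymbol{\bar{\mu}}_l, g_{l:l+r}, \pi_{l:l+r})|_1\\
&\leq S_P\cdot S_P^{r+1}\,|\boldsymbol{\mu}_l-\boldsymbol{\bar{\mu}}_l|_1 = S_P^{r+2}|\boldsymbol{\mu}_l-\boldsymbol{\bar{\mu}}_l|_1,
\end{align*}
which is the desired bound at index $r+1$, completing the induction.

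There is no substantive obstacle here: the only point that requires care is bookkeeping, namely that throughout the argument the sequences $g_{l:l+r}$ and $\pi_{l:l+r}$ are held fixed while only the initial distribution $\boldsymbol{\mu}_l$ is perturbed, so the recursive identity $(\ref{app_eq_pmf_comp})$ applies verbatim and the uniform constant $S_P$ from Lemma \ref{lemma_2} telescopes cleanly into the power $S_P^{r+1}$. An entirely analogous induction (which I expect to appear in the proofs of Lemma \ref{app_lemma_r_mf_condition} and Lemma \ref{app_lemma_r_mf_comp}) will handle the corresponding continuity statements for $\tilde{P}_G^{\mathrm{MF}}$ and $\tilde{r}^{\mathrm{MF}}$, where the $(k,t)$-dependent Lipschitz parameters arise from summing such geometric-type products over the intermediate global states.
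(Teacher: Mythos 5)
Your proof is correct and is essentially identical to the paper's own argument: both proceed by induction on $r$, using $\tilde{P}^{\mathrm{MF}}(\boldsymbol{\mu}_l, g_{l:l}, \pi_{l:l}) = P^{\mathrm{MF}}(\boldsymbol{\mu}_l, g_l, \pi_l)$ with Lemma \ref{lemma_2} as the base case, and peeling off the outermost map $P^{\mathrm{MF}}(\cdot, g_{l+r+1}, \pi_{l+r+1})$ in the inductive step so that the uniform constant $S_P$ telescopes to $S_P^{r+2}$. No gaps.
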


Lemma \ref{app_lemma_p_mf} establishes the Lipschitz continuity of $\tilde{P}^{\mathrm{MF}}$ with respect to its first argument.

\begin{lemma}
	\label{app_lemma_p_g_mf}
	The following relations hold $\forall l,r\in\{0,1,\cdots\}$, $\forall \boldsymbol{\mu}_l, \boldsymbol{\bar{\mu}}_l\in\Delta(\mathcal{X})$, $\forall g_{l:l+r}\in \mathcal{G}^{r+1}$ and $\forall \pi_{l:l+r}\in\Pi^{r+1}$.
	\begin{align*}
	\sum_{l+1:l+r}\left|\tilde{P}_G^{\mathrm{MF}}(\boldsymbol{\mu}_l, g_{l:l+r}, \pi_{l:l+r})- \tilde{P}_G^{\mathrm{MF}}(\boldsymbol{\bar{\mu}}_l, g_{l:l+r}, \pi_{l:l+r})\right|_1 \leq S_G(1+S_P+\cdots+S_P^r)|\boldsymbol{\mu}_l-\boldsymbol{\bar{\mu}}_l|_1
	\end{align*}
	where $\sum_{l+1:l+r}$ indicates a summation operation over $\{g_{l+1},\cdots, g_{l+r}\}\in\mathcal{G}^r$ for $r\geq 1$ and an identity operation for $r=0$. The terms $S_P, S_G$ are defined in Lemma \ref{lemma_2} and \ref{lemma_2_} respectively. 
\end{lemma}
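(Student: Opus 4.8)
The plan is to prove Lemma \ref{app_lemma_p_g_mf} by induction on $r$, exploiting the recursive (``peel off the last factor'') structure of $\tilde{P}_G^{\mathrm{MF}}$ in $(\ref{app_eq_pmf_g_comp})$ together with the Lipschitz bounds already established in Lemma \ref{lemma_2_} (for $P_G^{\mathrm{MF}}$) and Lemma \ref{app_lemma_p_mf} (for the composed kernel $\tilde{P}^{\mathrm{MF}}$). For the base case $r=0$, the summation $\sum_{l+1:l+r}$ is the identity and $\tilde{P}_G^{\mathrm{MF}}(\boldsymbol{\mu}_l, g_{l:l}, \pi_{l:l}) = P_G^{\mathrm{MF}}(\boldsymbol{\mu}_l, g_l, \pi_l)$, so the claim is exactly Lemma \ref{lemma_2_} with constant $S_G = S_G S_P^0$.

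For the inductive step, I would write $A(\boldsymbol{\mu}) \triangleq \tilde{P}_G^{\mathrm{MF}}(\boldsymbol{\mu}, g_{l:l+r-1}, \pi_{l:l+r-1})(g_{l+r})$ and $B(\boldsymbol{\mu}) \triangleq P_G^{\mathrm{MF}}(\tilde{P}^{\mathrm{MF}}(\boldsymbol{\mu}, g_{l:l+r-1}, \pi_{l:l+r-1}), g_{l+r}, \pi_{l+r})(g_{l+r+1})$, so that $(\ref{app_eq_pmf_g_comp})$ reads $\tilde{P}_G^{\mathrm{MF}}(\boldsymbol{\mu}, g_{l:l+r}, \pi_{l:l+r})(g_{l+r+1}) = A(\boldsymbol{\mu})\,B(\boldsymbol{\mu})$. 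Then I would use $A(\boldsymbol{\mu}_l)B(\boldsymbol{\mu}_l) - A(\bar{\boldsymbol{\mu}}_l)B(\bar{\boldsymbol{\mu}}_l) = (A(\boldsymbol{\mu}_l)-A(\bar{\boldsymbol{\mu}}_l))B(\boldsymbol{\mu}_l) + A(\bar{\boldsymbol{\mu}}_l)(B(\boldsymbol{\mu}_l)-B(\bar{\boldsymbol{\mu}}_l))$ and the triangle inequality to split the target quantity (a sum over $g_{l+1},\dots,g_{l+r+1}$) into two pieces. In the first piece, summing over $g_{l+r+1}$ first makes $\sum_{g_{l+r+1}}B(\boldsymbol{\mu}_l)=1$, leaving $\sum_{l+1:l+r}|A(\boldsymbol{\mu}_l)-A(\bar{\boldsymbol{\mu}}_l)|$, which is precisely the level-$(r-1)$ quantity and is bounded by $S_G(1+S_P+\cdots+S_P^{r-1})|\boldsymbol{\mu}_l-\bar{\boldsymbol{\mu}}_l|_1$ by the induction hypothesis. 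In the second piece, $\sum_{g_{l+r+1}}|B(\boldsymbol{\mu}_l)-B(\bar{\boldsymbol{\mu}}_l)|$ is an $L_1$ distance between two outputs of $P_G^{\mathrm{MF}}$, which Lemma \ref{lemma_2_} bounds by $S_G|\tilde{P}^{\mathrm{MF}}(\boldsymbol{\mu}_l,g_{l:l+r-1},\pi_{l:l+r-1})-\tilde{P}^{\mathrm{MF}}(\bar{\boldsymbol{\mu}}_l,g_{l:l+r-1},\pi_{l:l+r-1})|_1$, and Lemma \ref{app_lemma_p_mf} (with $r$ compositions) bounds the latter by $S_P^{r}|\boldsymbol{\mu}_l-\bar{\boldsymbol{\mu}}_l|_1$; since this $S_G S_P^{r}|\boldsymbol{\mu}_l-\bar{\boldsymbol{\mu}}_l|_1$ bound is uniform in $g_{l+1},\dots,g_{l+r}$, it factors out of the remaining $\sum_{l+1:l+r}A(\bar{\boldsymbol{\mu}}_l)=1$ (a joint distribution over $\mathcal{G}^{r}$). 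Adding the two pieces yields $S_G(1+S_P+\cdots+S_P^{r})|\boldsymbol{\mu}_l-\bar{\boldsymbol{\mu}}_l|_1$, closing the induction. (Equivalently, one may unroll the recursion directly into a telescoping sum $\sum_{k=0}^{r}$ over the factors, obtaining the same geometric series, but the induction keeps the bookkeeping tidy.)

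The only genuinely delicate point is the nested-summation bookkeeping: at each stage one must be clear about which $g$-coordinates act as ``inputs'' and which as ``outputs'' of a given factor, so that the repeated appeals to ``this kernel sums to one'' (once over $g_{l+r+1}$ for $B$, once over $g_{l+1},\dots,g_{l+r}$ for $A(\bar{\boldsymbol{\mu}}_l)$) are legitimate, and in particular one must check that peeling the last factor of $(\ref{app_eq_pmf_g_comp})$ reproduces $\tilde{P}_G^{\mathrm{MF}}$ at level $r-1$ with the very same $g_{l:l+r-1}$ and $\pi_{l:l+r-1}$, so the induction hypothesis applies verbatim. Once that is in place, everything reduces to the triangle inequality and the continuity Lemmas \ref{lemma_2_} and \ref{app_lemma_p_mf} already proved.
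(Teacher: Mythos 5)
Your proposal is correct and follows essentially the same route as the paper: induction on $r$, peeling off the last factor of $(\ref{app_eq_pmf_g_comp})$, splitting the product difference into two cross terms, bounding the new-factor term via Lemma \ref{lemma_2_} composed with Lemma \ref{app_lemma_p_mf} and the old-joint term via the induction hypothesis, with the normalization-to-one of each kernel absorbing the remaining sums. The only (immaterial) difference is which variant of the product-difference identity you use — $(A-\bar A)B+\bar A(B-\bar B)$ versus the paper's $A(B-\bar B)+(A-\bar A)\bar B$ — both of which work because every factor involved is a probability distribution.
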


Lemma \ref{app_lemma_p_g_mf} establishes the Lipschitz continuity of $\tilde{P}^{\mathrm{MF}}_G$ with respect to its first argument. Finally we establish the Lipschitz continuity of $\tilde{r}^{\mathrm{MF}}$ with respect to its first argument in the following lemma.

\begin{lemma}
	\label{app_lemma_r_mf_comp}
	The following relations hold $\forall l,r\in\{0,1,\cdots\}$, $\forall \boldsymbol{\mu}_l, \boldsymbol{\bar{\mu}}_l\in\Delta(\mathcal{X})$, $\forall g_l\in\mathcal{G}$ and $\forall \pi_{l:l+r}\in \Pi^r$.
	\begin{align*}
	|\tilde{r}^{\mathrm{MF}}(\boldsymbol{\mu}_l,g_l, \pi_{l:l+r})-\tilde{r}^{\mathrm{MF}}(\boldsymbol{\bar{\mu}}_l,g_l, \pi_{l:l+r})|\leq \left[\left(\dfrac{M_RS_G}{S_P-1}\right)(S_P^r-1)+S_RS_P^r\right]|\boldsymbol{\mu}_l-\boldsymbol{\bar{\mu}}_l|_1
	\end{align*}
	where $S_P, S_G$ and $S_R$ are defined in Lemma \ref{lemma_2}, \ref{lemma_2_} and \ref{lemma_3} respectively.
\end{lemma}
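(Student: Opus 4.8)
The plan is to prove the bound by a direct estimate on the defining sum \eqref{app_eq_r_mf_comp}, handling $r=0$ and $r\geq 1$ separately. The case $r=0$ is exactly Lemma \ref{lemma_3}, and plugging $r=0$ into the claimed constant gives $\frac{MS_G}{S_P-1}(S_P^0-1)+S_RS_P^0=S_R$, so the two are consistent; assume therefore $r\geq 1$. Writing $\tilde{r}^{\mathrm{MF}}(\boldsymbol{\mu}_l,g_l,\pi_{l:l+r})$ as $\sum_{g_{l+1:l+r}} r^{\mathrm{MF}}(\tilde{P}^{\mathrm{MF}}(\boldsymbol{\mu}_l, g_{l:l+r-1},\pi_{l:l+r-1}), g_{l+r},\pi_{l+r})\,\tilde{P}_G^{\mathrm{MF}}(\boldsymbol{\mu}_l, g_{l:l+r-1},\pi_{l:l+r-1})(g_{l+r})$, I would apply the triangle inequality termwise and then add and subtract the mixed quantity $r^{\mathrm{MF}}(\tilde{P}^{\mathrm{MF}}(\boldsymbol{\bar\mu}_l, g_{l:l+r-1},\pi_{l:l+r-1}), g_{l+r},\pi_{l+r})\,\tilde{P}_G^{\mathrm{MF}}(\boldsymbol{\mu}_l, g_{l:l+r-1},\pi_{l:l+r-1})(g_{l+r})$. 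This splits $|\tilde{r}^{\mathrm{MF}}(\boldsymbol{\mu}_l,g_l,\pi_{l:l+r})-\tilde{r}^{\mathrm{MF}}(\boldsymbol{\bar\mu}_l,g_l,\pi_{l:l+r})|$ into a ``reward-continuity'' piece (identical transition weights, reward evaluated at two different first arguments) and a ``transition-continuity'' piece (identical reward values, transition weights at two different first arguments).

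For the reward-continuity piece, I would bound each summand's reward difference by $S_R\,|\tilde{P}^{\mathrm{MF}}(\boldsymbol{\mu}_l, g_{l:l+r-1},\pi_{l:l+r-1})-\tilde{P}^{\mathrm{MF}}(\boldsymbol{\bar\mu}_l, g_{l:l+r-1},\pi_{l:l+r-1})|_1$ via Lemma \ref{lemma_3}, then by $S_R S_P^{r}|\boldsymbol{\mu}_l-\boldsymbol{\bar\mu}_l|_1$ via Lemma \ref{app_lemma_p_mf} (the inner composition runs over $g_{l:l+r-1}$, i.e.\ $r-1$ steps, so the exponent is $r$). The key point is that this bound is uniform in the conditioning path $g_{l+1:l+r-1}$, so the remaining transition weights $\tilde{P}_G^{\mathrm{MF}}(\boldsymbol{\mu}_l, g_{l:l+r-1},\pi_{l:l+r-1})(g_{l+r})$ — being a joint conditional distribution of the whole path $g_{l+1:l+r}$ — sum to $1$ over $g_{l+1:l+r}$, leaving exactly $S_R S_P^{r}|\boldsymbol{\mu}_l-\boldsymbol{\bar\mu}_l|_1$.

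For the transition-continuity piece, I would use $|r^{\mathrm{MF}}|\leq M$ (immediate from Assumption \ref{ass_1}(a) since $\sum_{x,u}\pi(x,\cdot,g)(u)\boldsymbol{\mu}(x)=1$), pull $M$ out, collapse $\sum_{g_{l+r}}|\tilde{P}_G^{\mathrm{MF}}(\boldsymbol{\mu}_l,\cdot)(g_{l+r})-\tilde{P}_G^{\mathrm{MF}}(\boldsymbol{\bar\mu}_l,\cdot)(g_{l+r})|$ into the $L_1$ distance $|\tilde{P}_G^{\mathrm{MF}}(\boldsymbol{\mu}_l, g_{l:l+r-1},\pi_{l:l+r-1})-\tilde{P}_G^{\mathrm{MF}}(\boldsymbol{\bar\mu}_l, g_{l:l+r-1},\pi_{l:l+r-1})|_1$, and then sum over $g_{l+1:l+r-1}$ and invoke Lemma \ref{app_lemma_p_g_mf} (again with the index shifted by one). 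This yields $MS_G(1+S_P+\cdots+S_P^{r-1})|\boldsymbol{\mu}_l-\boldsymbol{\bar\mu}_l|_1=\frac{MS_G}{S_P-1}(S_P^r-1)|\boldsymbol{\mu}_l-\boldsymbol{\bar\mu}_l|_1$. Adding the two pieces gives the claimed Lipschitz constant $\left[\frac{MS_G}{S_P-1}(S_P^r-1)+S_RS_P^r\right]$.

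I expect the main obstacle to be organizational rather than conceptual: keeping the nested summation over $g_{l+1:l+r}$ consistently aligned with the correct index shifts, since the composition length in the statement of Lemma \ref{app_lemma_r_mf_comp} is $r$ but the inner objects $\tilde{P}^{\mathrm{MF}}$ and $\tilde{P}_G^{\mathrm{MF}}$ are built from $g_{l:l+r-1}$ and hence carry $r-1$ in the notation of Lemmas \ref{app_lemma_p_mf} and \ref{app_lemma_p_g_mf}; and recognizing that the Lipschitz estimate of Lemma \ref{app_lemma_p_mf} is uniform in the conditioning path, which is precisely what lets the transition weights in the reward-continuity piece collapse to $1$. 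An essentially equivalent alternative is induction on $r$ using the one-step identity $\tilde{r}^{\mathrm{MF}}(\boldsymbol{\mu}_l,g_l,\pi_{l:l+r})=\sum_{g_{l+1}}P_G^{\mathrm{MF}}(\boldsymbol{\mu}_l,g_l,\pi_l)(g_{l+1})\,\tilde{r}^{\mathrm{MF}}(P^{\mathrm{MF}}(\boldsymbol{\mu}_l,g_l,\pi_l),g_{l+1},\pi_{l+1:l+r})$ together with Lemmas \ref{lemma_1}--\ref{lemma_3}, but that route would first require establishing the recursion.
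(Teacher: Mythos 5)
Your proposal is correct and follows essentially the same route as the paper's proof: the same add-and-subtract decomposition into a reward-continuity piece (bounded by Lemma \ref{lemma_3} composed with Lemma \ref{app_lemma_p_mf}, with the $\tilde{P}_G^{\mathrm{MF}}$ weights summing to one) and a transition-continuity piece (bounded by $M$ and Lemma \ref{app_lemma_p_g_mf}), with the correct index shift from $r$ to $r-1$ in the inner compositions. The only cosmetic difference is which mixed cross-term you insert, which yields the identical two bounds.
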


Finally, we prove an important property of the function $\tilde{r}^{\mathrm{MF}}$ that directly follows from its definition.
\begin{lemma}
	\label{app_lemma_r_mf_condition}
	The following relations hold $\forall l\in\{0,1,\cdots\}$, $\forall r\in\{1,2,\cdots\}$, $\forall \boldsymbol{\mu}_l\in\Delta(\mathcal{X})$, $\forall g_l\in\mathcal{G}$ and $\forall \pi_{l:l+r}\in \Pi^r$.
	\begin{align*}
	\tilde{r}^{\mathrm{MF}}(\boldsymbol{\mu}_l,g_l, \pi_{l:l+r}) = 	\sum_{g_{l+1}\in\mathcal{G}}\tilde{r}^{\mathrm{MF}}(P^{\mathrm{MF}}(\boldsymbol{\mu}_l, g_l, \pi_l),g_{l+1}, \pi_{l+1:l+r}) P_G^{\mathrm{MF}}(\boldsymbol{\mu}_l, g_l, \pi_l)(g_{l+1})
	\end{align*}
	where $P^{\mathrm{MF}}$ and $P^{\mathrm{MF}}_G$ are defined in $(\ref{eq_mu_t_plus_1})$ and $(\ref{eq_lambda_t_plus_1})$ respectively.
\end{lemma}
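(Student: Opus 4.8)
The plan is to treat this identity as a one-step ``unfolding'' of the nested definition $(\ref{app_eq_r_mf_comp})$ and to prove it by direct substitution followed by careful re-indexing of the summations; there is no analytic content here, only bookkeeping. Throughout, write $\boldsymbol{\mu}_{l+1}\triangleq P^{\mathrm{MF}}(\boldsymbol{\mu}_l,g_l,\pi_l)$, which by $(\ref{app_eq_pmf_comp})$ with $r=0$ equals $\tilde{P}^{\mathrm{MF}}(\boldsymbol{\mu}_l,g_{l:l},\pi_{l:l})$. The two facts I would isolate first are the following composition identities, both immediate from the definitions. Peeling off the innermost factor $P^{\mathrm{MF}}(\cdot,g_l,\pi_l)$ of the function composition in $(\ref{app_eq_pmf_comp})$ gives, for every $q\geq 1$,
\begin{align*}
\tilde{P}^{\mathrm{MF}}(\boldsymbol{\mu}_l,g_{l:l+q},\pi_{l:l+q}) = \tilde{P}^{\mathrm{MF}}(\boldsymbol{\mu}_{l+1},g_{l+1:l+q},\pi_{l+1:l+q}).
\end{align*}
Substituting this into the product expansion $(\ref{app_eq_pmf_g_comp})$ and separating the first factor then yields, for every $r\geq 2$,
\begin{align*}
\tilde{P}_G^{\mathrm{MF}}(\boldsymbol{\mu}_l,g_{l:l+r-1},\pi_{l:l+r-1})(g_{l+r}) = P_G^{\mathrm{MF}}(\boldsymbol{\mu}_l,g_l,\pi_l)(g_{l+1})\,\tilde{P}_G^{\mathrm{MF}}(\boldsymbol{\mu}_{l+1},g_{l+1:l+r-1},\pi_{l+1:l+r-1})(g_{l+r}).
\end{align*}

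With these in hand, for $r\geq 2$ I would start from the $r$-branch of $(\ref{app_eq_r_mf_comp})$, substitute the two identities above (using $q=r-1$ in the first), and split the summation $\sum_{l+1:l+r}$ over $\{g_{l+1},\dots,g_{l+r}\}$ into an outer sum over $g_{l+1}$ and an inner sum over $\{g_{l+2},\dots,g_{l+r}\}$. The factor $P_G^{\mathrm{MF}}(\boldsymbol{\mu}_l,g_l,\pi_l)(g_{l+1})$ is independent of the inner summation variables and pulls out of the inner sum; what remains inside is, term by term, exactly the $(r-1)$-branch of $(\ref{app_eq_r_mf_comp})$ with base point $\boldsymbol{\mu}_{l+1}$, global state $g_{l+1}$, and policy segment $\pi_{l+1:l+r}$, i.e.\ $\tilde{r}^{\mathrm{MF}}(\boldsymbol{\mu}_{l+1},g_{l+1},\pi_{l+1:l+r})$. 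Collecting the terms then produces precisely the claimed identity. The case $r=1$ I would dispatch separately: there $(\ref{app_eq_r_mf_comp})$ reduces $\tilde{r}^{\mathrm{MF}}(\boldsymbol{\mu}_l,g_l,\pi_{l:l+1})$ to $\sum_{g_{l+1}} r^{\mathrm{MF}}(\boldsymbol{\mu}_{l+1},g_{l+1},\pi_{l+1})\,P_G^{\mathrm{MF}}(\boldsymbol{\mu}_l,g_l,\pi_l)(g_{l+1})$, while on the right-hand side of the claim the inner object $\tilde{r}^{\mathrm{MF}}(\boldsymbol{\mu}_{l+1},g_{l+1},\pi_{l+1:l+1})$ is the $r=0$ branch of $(\ref{app_eq_r_mf_comp})$, namely $r^{\mathrm{MF}}(\boldsymbol{\mu}_{l+1},g_{l+1},\pi_{l+1})$, so the two sides coincide.

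The only place where any care is needed is the index bookkeeping in the two composition identities: one must check that after pulling out the first global-transition factor, the leftover product in $(\ref{app_eq_pmf_g_comp})$ and the leftover summand in $(\ref{app_eq_r_mf_comp})$ line up exactly with the definitions under a uniform shift of all indices by one, and that the degenerate cases (empty products when $r=1$, and the $r=0$ branch of $\tilde{r}^{\mathrm{MF}}$) are covered by the conventions already stated in Appendix \ref{app_continuity_of_composition}. Since the whole statement is just a consistency property of the recursively defined quantities, I expect this to be the ``hard part'' only in the sense of presenting the re-indexing transparently; beyond that it is a routine verification.
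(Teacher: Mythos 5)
Your proposal is correct and follows essentially the same route as the paper's proof: the paper likewise unfolds the $r$-branch of $(\ref{app_eq_r_mf_comp})$, uses the definitions $(\ref{app_eq_pmf_comp})$ and $(\ref{app_eq_pmf_g_comp})$ to peel off $P^{\mathrm{MF}}(\boldsymbol{\mu}_l,g_l,\pi_l)$ and the factor $P_G^{\mathrm{MF}}(\boldsymbol{\mu}_l,g_l,\pi_l)(g_{l+1})$, and then splits the sum over $g_{l+1:l+r}$ into an outer sum over $g_{l+1}$ whose inner part is recognized as $\tilde{r}^{\mathrm{MF}}(P^{\mathrm{MF}}(\boldsymbol{\mu}_l,g_l,\pi_l),g_{l+1},\pi_{l+1:l+r})$. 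Your explicit isolation of the two shift identities and the separate $r=1$ check is just a more spelled-out version of the paper's single chain of equalities.
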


Proof of Lemma \ref{app_lemma_r_mf_condition} is relegated to Appendix \ref{app_sec_proof_lemma_r_mf_condiiton}.

\subsection{Approximation Lemmas}
\label{app_approx_lemmas}

First we would like to state an important result that will be useful in proving the following lemmas.

\begin{lemma}
	\label{lemma_main}
	If $\forall m\in \{1, \cdots, M\}$, $\{X_{mn}\}_{n\in \{1,\cdots, N\}}$ are independent random variables that lie in $[0, 1]$, and satisfy $\sum_{m\in \{1, \cdots, M\}} \mathbb{E}[X_{mn}]\leq 1$, $\forall n\in \{1, \cdots, N\}$,  then the following holds,
	\begin{align}
	\sum_{m=1}^{M}\mathbb{E}\left|\sum_{n=1}^{N}\left(X_{mn}-\mathbb{E}[X_{mn}]\right)\right|\leq \sqrt{MN}
	\end{align}
\end{lemma}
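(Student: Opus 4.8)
\textbf{Proof proposal for Lemma \ref{lemma_main}.}

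The plan is to bound each summand $\mathbb{E}\bigl|\sum_{n=1}^N (X_{mn}-\mathbb{E}[X_{mn}])\bigr|$ by its standard deviation via Jensen's inequality, and then to sum over $m$ using the Cauchy--Schwarz inequality together with the column-sum constraint $\sum_{m}\mathbb{E}[X_{mn}]\le 1$. First I would fix $m$ and apply Jensen: since $t\mapsto\sqrt{t}$ is concave and $x\mapsto x^2$ convex,
\begin{align*}
\mathbb{E}\left|\sum_{n=1}^N\left(X_{mn}-\mathbb{E}[X_{mn}]\right)\right|
\le \sqrt{\mathbb{E}\left[\left(\sum_{n=1}^N\left(X_{mn}-\mathbb{E}[X_{mn}]\right)\right)^2\right]}
= \sqrt{\sum_{n=1}^N \mathrm{Var}(X_{mn})},
\end{align*}
where the last equality uses independence of $\{X_{mn}\}_{n}$ across $n$ (cross terms vanish). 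Next, since $X_{mn}\in[0,1]$ we have $\mathrm{Var}(X_{mn})\le \mathbb{E}[X_{mn}^2]\le \mathbb{E}[X_{mn}]$, so the bound becomes $\sqrt{\sum_{n=1}^N \mathbb{E}[X_{mn}]}$.

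Then I would sum over $m\in\{1,\dots,M\}$ and apply Cauchy--Schwarz:
\begin{align*}
\sum_{m=1}^M \sqrt{\sum_{n=1}^N \mathbb{E}[X_{mn}]}
\le \sqrt{M}\,\sqrt{\sum_{m=1}^M \sum_{n=1}^N \mathbb{E}[X_{mn}]}
= \sqrt{M}\,\sqrt{\sum_{n=1}^N \sum_{m=1}^M \mathbb{E}[X_{mn}]}
\le \sqrt{M}\,\sqrt{\sum_{n=1}^N 1} = \sqrt{MN},
\end{align*}
where the final inequality invokes the hypothesis $\sum_{m=1}^M \mathbb{E}[X_{mn}]\le 1$ for each $n$. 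This chain of inequalities gives exactly the claimed bound.

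I do not anticipate a serious obstacle here; the only point requiring slight care is the interchange of the two finite sums and making sure independence is used only across the index $n$ (not $m$), which is all that is assumed. The variance bound $\mathrm{Var}(X)\le\mathbb{E}[X]$ for $[0,1]$-valued $X$ is the one place where the boundedness hypothesis enters, and it is what couples the second-moment estimate to the linear constraint $\sum_m \mathbb{E}[X_{mn}]\le 1$ that ultimately yields the clean $\sqrt{MN}$ rate rather than something larger.
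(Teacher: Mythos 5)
Your proposal is correct and follows essentially the same route as the paper's proof: reduce to second moments (the paper applies Cauchy--Schwarz over $m$ and Jensen in one step, you apply Jensen per $m$ and then Cauchy--Schwarz on the resulting square roots, which is the same computation in a different order), use independence across $n$ to kill cross terms, bound $\mathrm{Var}(X_{mn})\le\mathbb{E}[X_{mn}]$ via $X_{mn}\in[0,1]$, and finish with the column-sum constraint. No gaps.
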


In the following, $\boldsymbol{\pi}\triangleq\{\pi_t\}_{t\in\{0,1,\cdot\}}$ is an arbitrary sequence of policies, $\{\boldsymbol{\mu}_t^N, \boldsymbol{\nu}_t^N, g_t^N\}$ denote the empirical state distribution, action distribution and global state of $N$ agent system at time $t$ and $\{x_t^i, u_t^i\}$ are the state and action of $i$th agent at time $t$ induced by $\boldsymbol{\pi}$ from initial states $\boldsymbol{x}_0, g_0$. 

\begin{lemma}
	\label{lemma_4}
	The following inequality holds $\forall t\in \{0,1,\cdots\}$.
	\begin{align}
	\mathbb{E}\big|\boldsymbol{\nu}^N_t - \nu^{\mathrm{MF}}(\boldsymbol{\mu}_t^N, g_t^N, \pi_t) \big|_1 \leq \dfrac{1}{\sqrt{N}}\sqrt{|\mathcal{U}|}
	\end{align}	
\end{lemma}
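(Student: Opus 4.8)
The plan is to condition on the sigma-algebra $\mathcal{F}_t$ generated by $(\boldsymbol{x}_t^N, g_t^N)$ (equivalently, by the local states of all $N$ agents at time $t$ together with the global state) and to recognize that, given this information, the actions $\{u_t^i\}_{i=1}^N$ are drawn \emph{independently} across agents: agent $i$ samples $u_t^i \sim \pi_t(x_t^i, \boldsymbol{\mu}_t^N, g_t^N)$. First I would write, for each fixed action $u\in\mathcal{U}$,
\begin{align*}
\boldsymbol{\nu}_t^N(u) - \nu^{\mathrm{MF}}(\boldsymbol{\mu}_t^N, g_t^N, \pi_t)(u) = \frac{1}{N}\sum_{i=1}^N\Big(\delta(u_t^i = u) - \pi_t(x_t^i, \boldsymbol{\mu}_t^N, g_t^N)(u)\Big),
\end{align*}
using the identity $\nu^{\mathrm{MF}}(\boldsymbol{\mu}_t^N, g_t^N, \pi_t)(u) = \sum_{x}\pi_t(x, \boldsymbol{\mu}_t^N, g_t^N)(u)\boldsymbol{\mu}_t^N(x) = \frac{1}{N}\sum_{i=1}^N \pi_t(x_t^i, \boldsymbol{\mu}_t^N, g_t^N)(u)$, which is exactly the conditional expectation of $\delta(u_t^i=u)$ given $\mathcal{F}_t$.

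Next, conditionally on $\mathcal{F}_t$, set $X_{u,i} \triangleq \delta(u_t^i = u)$; these lie in $[0,1]$, are independent over $i$ (for each fixed $u$), and satisfy $\sum_{u\in\mathcal{U}}\mathbb{E}[X_{u,i}\mid\mathcal{F}_t] = \sum_{u}\pi_t(x_t^i,\boldsymbol{\mu}_t^N,g_t^N)(u) = 1$. Hence Lemma \ref{lemma_main} (applied conditionally, with $M = |\mathcal{U}|$ and $N$ the population size) gives
\begin{align*}
\sum_{u\in\mathcal{U}}\mathbb{E}\Big[\Big|\sum_{i=1}^N\big(X_{u,i} - \mathbb{E}[X_{u,i}\mid\mathcal{F}_t]\big)\Big|\,\Big|\,\mathcal{F}_t\Big] \leq \sqrt{|\mathcal{U}|\,N}.
\end{align*}
Dividing by $N$, recognizing the left-hand side as $N\,\mathbb{E}\big[\,|\boldsymbol{\nu}_t^N - \nu^{\mathrm{MF}}(\boldsymbol{\mu}_t^N, g_t^N, \pi_t)|_1 \,\big|\,\mathcal{F}_t\big]$, and then taking the outer expectation over $\mathcal{F}_t$ (using the tower property) yields $\mathbb{E}|\boldsymbol{\nu}_t^N - \nu^{\mathrm{MF}}(\boldsymbol{\mu}_t^N, g_t^N, \pi_t)|_1 \leq \sqrt{|\mathcal{U}|/N}$, as claimed.

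The only real subtlety — and the step I would be most careful about — is making precise the conditional-independence claim and ensuring Lemma \ref{lemma_main} is applied in its conditional form (the randomness in $\boldsymbol{\mu}_t^N$ and $g_t^N$, which appear inside $\pi_t$, is frozen by conditioning on $\mathcal{F}_t$, so the $\mathbb{E}[X_{u,i}\mid\mathcal{F}_t]$ are genuine constants given $\mathcal{F}_t$ and the normalization hypothesis $\sum_u \mathbb{E}[X_{u,i}\mid\mathcal{F}_t] = 1 \le 1$ holds exactly). Everything else is bookkeeping: rewriting the $L_1$ norm as a sum over $u\in\mathcal{U}$ and pulling the $1/N$ through the expectation.
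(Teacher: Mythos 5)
Your proposal is correct and follows essentially the same route as the paper's proof: both rewrite $\nu^{\mathrm{MF}}(\boldsymbol{\mu}_t^N,g_t^N,\pi_t)(u)$ as $\frac{1}{N}\sum_i\pi_t(x_t^i,\boldsymbol{\mu}_t^N,g_t^N)(u)$, condition on $(\boldsymbol{x}_t^N,g_t^N)$ so that the actions become conditionally independent with $\mathbb{E}[\delta(u_t^i=u)\mid\boldsymbol{x}_t^N,g_t^N]=\pi_t(x_t^i,\boldsymbol{\mu}_t^N,g_t^N)(u)$, and then invoke Lemma \ref{lemma_main} with $M=|\mathcal{U}|$ before taking the outer expectation. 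Your explicit remarks about applying Lemma \ref{lemma_main} in conditional form and about the randomness in $\boldsymbol{\mu}_t^N,g_t^N$ being frozen by the conditioning are exactly the (implicit) justification the paper relies on.
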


\begin{lemma}
	\label{lemma_6}
	The following inequality holds $\forall t\in \{0,1,\cdots\}$.
	\begin{align}
	\mathbb{E}\left|\boldsymbol{\mu}^N_{t+1}-P^{\mathrm{MF}}(\boldsymbol{\mu}^N_t, g_t^N, \pi_t)\right|_1\leq \dfrac{C_P}{\sqrt{N}}\left[\sqrt{|\mathcal{X}|}+ \sqrt{|\mathcal{U}|}\right]
	\end{align}
	where $C_P\triangleq 2+L_P$.
\end{lemma}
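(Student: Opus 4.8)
The plan is to control $\mathbb{E}\big|\boldsymbol{\mu}_{t+1}^N-P^{\mathrm{MF}}(\boldsymbol{\mu}_t^N,g_t^N,\pi_t)\big|_1$ by inserting two intermediate averages and applying Lemma \ref{lemma_main} twice, each time conditionally on a carefully chosen $\sigma$-algebra. Write $\mathcal{F}_t\triangleq\sigma(\boldsymbol{x}_t^N,g_t^N,\boldsymbol{u}_t^N)$ and $\mathcal{G}_t\triangleq\sigma(\boldsymbol{x}_t^N,g_t^N)$, and set $A_2\triangleq\frac{1}{N}\sum_{i=1}^N P(x_t^i,u_t^i,\boldsymbol{\mu}_t^N,g_t^N,\boldsymbol{\nu}_t^N)$ and $A_3\triangleq\frac{1}{N}\sum_{i=1}^N P(x_t^i,u_t^i,\boldsymbol{\mu}_t^N,g_t^N,\nu^{\mathrm{MF}}(\boldsymbol{\mu}_t^N,g_t^N,\pi_t))$. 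By the triangle inequality it then suffices to bound $\mathbb{E}|\boldsymbol{\mu}_{t+1}^N-A_2|_1$, $\mathbb{E}|A_2-A_3|_1$ and $\mathbb{E}|A_3-P^{\mathrm{MF}}(\boldsymbol{\mu}_t^N,g_t^N,\pi_t)|_1$ separately.

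First I would handle the two outer terms. Since the local states at time $t+1$ are conditionally independent given $\mathcal{F}_t$ with $x_{t+1}^i\sim P(x_t^i,u_t^i,\boldsymbol{\mu}_t^N,g_t^N,\boldsymbol{\nu}_t^N)$, we have $A_2=\mathbb{E}[\boldsymbol{\mu}_{t+1}^N\mid\mathcal{F}_t]$, and applying Lemma \ref{lemma_main} conditionally on $\mathcal{F}_t$ (with the first index $m$ ranging over $\mathcal{X}$, the second index $n$ over the agents, and $X_{mn}=\delta(x_{t+1}^n=m)$, so that $\sum_m X_{mn}=1$) gives $\mathbb{E}[|\boldsymbol{\mu}_{t+1}^N-A_2|_1\mid\mathcal{F}_t]\le\sqrt{|\mathcal{X}|/N}$, hence the same bound in expectation. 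For the middle term, Assumption \ref{ass_1}(c) yields, for each $i$, $|P(x_t^i,u_t^i,\boldsymbol{\mu}_t^N,g_t^N,\boldsymbol{\nu}_t^N)-P(x_t^i,u_t^i,\boldsymbol{\mu}_t^N,g_t^N,\nu^{\mathrm{MF}}(\boldsymbol{\mu}_t^N,g_t^N,\pi_t))|_1\le L_P|\boldsymbol{\nu}_t^N-\nu^{\mathrm{MF}}(\boldsymbol{\mu}_t^N,g_t^N,\pi_t)|_1$, so averaging over $i$ and invoking Lemma \ref{lemma_4} bounds $\mathbb{E}|A_2-A_3|_1$ by $L_P\sqrt{|\mathcal{U}|/N}$.

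For the third term the key point is that $\nu^{\mathrm{MF}}(\boldsymbol{\mu}_t^N,g_t^N,\pi_t)$ is $\mathcal{G}_t$-measurable, so conditionally on $\mathcal{G}_t$ the summands $W_i\triangleq P(x_t^i,u_t^i,\boldsymbol{\mu}_t^N,g_t^N,\nu^{\mathrm{MF}}(\boldsymbol{\mu}_t^N,g_t^N,\pi_t))$ become independent across $i$ --- the only residual randomness is $u_t^i$, which are conditionally independent given $\mathcal{G}_t$ since each agent samples independently from $\pi_t(x_t^i,\boldsymbol{\mu}_t^N,g_t^N)$ --- while $W_i(y)\in[0,1]$ and $\sum_{y\in\mathcal{X}}W_i(y)=1$. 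Lemma \ref{lemma_main} conditionally on $\mathcal{G}_t$ then gives $\mathbb{E}[|A_3-\bar A_3|_1\mid\mathcal{G}_t]\le\sqrt{|\mathcal{X}|/N}$ with $\bar A_3\triangleq\frac{1}{N}\sum_{i=1}^N\mathbb{E}[W_i\mid\mathcal{G}_t]$, and averaging each $u_t^i$ against $\pi_t(x_t^i,\boldsymbol{\mu}_t^N,g_t^N)$ and then grouping agents by the value of $x_t^i$ (so that empirical counts collapse to $\boldsymbol{\mu}_t^N$) identifies $\bar A_3$ with $P^{\mathrm{MF}}(\boldsymbol{\mu}_t^N,g_t^N,\pi_t)$ via $(\ref{eq_mu_t_plus_1})$. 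Collecting the three pieces gives $\mathbb{E}|\boldsymbol{\mu}_{t+1}^N-P^{\mathrm{MF}}(\boldsymbol{\mu}_t^N,g_t^N,\pi_t)|_1\le(2\sqrt{|\mathcal{X}|}+L_P\sqrt{|\mathcal{U}|})/\sqrt{N}\le(2+L_P)(\sqrt{|\mathcal{X}|}+\sqrt{|\mathcal{U}|})/\sqrt{N}$, which is the claimed bound with $C_P=2+L_P$.

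The hard part will be getting the sequencing right: one cannot condition on $\mathcal{G}_t$ and concentrate $\boldsymbol{\mu}_{t+1}^N$ in a single step, because the transition kernel feeding agent $i$ depends on $\boldsymbol{\nu}_t^N$, a function of \emph{all} the actions, so given $\mathcal{G}_t$ the next local states fail to be conditionally independent --- this is precisely the correlation the paper is designed to overcome, here in its action-distribution incarnation. The fix is the intermediate average $A_3$: one first freezes the actions by conditioning on $\mathcal{F}_t$, then pays an $L_P\sqrt{|\mathcal{U}|/N}$ price (Assumption \ref{ass_1}(c) plus Lemma \ref{lemma_4}) to swap the random $\boldsymbol{\nu}_t^N$ for its $\mathcal{G}_t$-measurable surrogate $\nu^{\mathrm{MF}}$, and only then integrates out the actions. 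A secondary technical point is to justify the conditional applications of Lemma \ref{lemma_main}: its hypotheses --- independence within each value of the first index, values in $[0,1]$, and the constraint $\sum_m\mathbb{E}[X_{mn}]\le1$ --- all hold under the relevant conditional law, and since its conclusion $\sqrt{|\mathcal{X}|N}$ is a deterministic constant it passes through the outer expectation unchanged.
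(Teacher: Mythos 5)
Your proof is correct and follows essentially the same route as the paper's: the same two intermediate averages (the empirical mean of $P(x_t^i,u_t^i,\boldsymbol{\mu}_t^N,g_t^N,\boldsymbol{\nu}_t^N)$ and its $\nu^{\mathrm{MF}}$-substituted version), with Lemma \ref{lemma_main} applied conditionally on $(\boldsymbol{x}_t^N,g_t^N,\boldsymbol{u}_t^N)$ for the first piece and on $(\boldsymbol{x}_t^N,g_t^N)$ for the third, and Assumption \ref{ass_1}(c) plus Lemma \ref{lemma_4} for the middle piece, yielding exactly the paper's $2\sqrt{|\mathcal{X}|/N}+L_P\sqrt{|\mathcal{U}|/N}$ bound. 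Your closing remark about why one cannot concentrate $\boldsymbol{\mu}_{t+1}^N$ directly given only $(\boldsymbol{x}_t^N,g_t^N)$ accurately identifies the role of the intermediate step.
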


\begin{lemma}
	\label{lemma_7}
	The following inequality holds $\forall t\in \{0,1,\cdots\}$.
	\begin{align*}
	\mathbb{E}\left|\dfrac{1}{N}\sum_{i=1}r(x_t^i, u_t^i, \boldsymbol{\mu}_t^N, g_t^N, \boldsymbol{\nu}_t^N)-r^{\mathrm{MF}}(\boldsymbol{\mu}_t^N, g_t^N, \pi_t)\right|\leq \dfrac{M_R}{\sqrt{N}}+\dfrac{L_R}{\sqrt{N}}\sqrt{|\mathcal{U}|}
	\end{align*}
\end{lemma}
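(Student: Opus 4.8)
The plan is to bound the left-hand side by splitting off the effect of replacing the empirical action distribution inside $r$ with its mean-field surrogate. Writing $\hat{\boldsymbol{\nu}}_t^N \triangleq \nu^{\mathrm{MF}}(\boldsymbol{\mu}_t^N, g_t^N, \pi_t)$, the triangle inequality gives $|\frac{1}{N}\sum_i r(x_t^i, u_t^i, \boldsymbol{\mu}_t^N, g_t^N, \boldsymbol{\nu}_t^N) - r^{\mathrm{MF}}(\boldsymbol{\mu}_t^N, g_t^N, \pi_t)| \le T_1 + T_2$, where
\[
T_1 \triangleq \frac{1}{N}\sum_{i=1}^N \bigl|r(x_t^i,u_t^i,\boldsymbol{\mu}_t^N,g_t^N,\boldsymbol{\nu}_t^N) - r(x_t^i,u_t^i,\boldsymbol{\mu}_t^N,g_t^N,\hat{\boldsymbol{\nu}}_t^N)\bigr|, \qquad T_2 \triangleq \Bigl|\frac{1}{N}\sum_{i=1}^N r(x_t^i,u_t^i,\boldsymbol{\mu}_t^N,g_t^N,\hat{\boldsymbol{\nu}}_t^N) - r^{\mathrm{MF}}(\boldsymbol{\mu}_t^N,g_t^N,\pi_t)\Bigr|.
\]
For $T_1$, Assumption \ref{ass_1}(b) yields $T_1 \le L_R|\boldsymbol{\nu}_t^N - \hat{\boldsymbol{\nu}}_t^N|_1$, so taking expectations and invoking Lemma \ref{lemma_4} gives $\mathbb{E}[T_1] \le L_R\sqrt{|\mathcal{U}|}/\sqrt{N}$.

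The main work is showing $\mathbb{E}[T_2] \le M/\sqrt{N}$. I would condition on the $\sigma$-algebra $\mathcal{F}_t$ generated by $(\boldsymbol{x}_t^N, g_t^N)$; then $\boldsymbol{\mu}_t^N$, and hence $\hat{\boldsymbol{\nu}}_t^N$, are $\mathcal{F}_t$-measurable (this is precisely why the replacement in $T_1$ is performed first, since $\boldsymbol{\nu}_t^N$ is \emph{not} $\mathcal{F}_t$-measurable), while given $\mathcal{F}_t$ the actions $\{u_t^i\}_{i=1}^N$ are independent with $u_t^i \sim \pi_t(x_t^i,\boldsymbol{\mu}_t^N,g_t^N)$. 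The key identity, obtained from $\boldsymbol{\mu}_t^N(x)=\frac{1}{N}\sum_i\delta(x_t^i=x)$ together with the definition (\ref{eq_r_mf}) of $r^{\mathrm{MF}}$, is
\[
\mathbb{E}\Bigl[\frac{1}{N}\sum_{i=1}^N r(x_t^i,u_t^i,\boldsymbol{\mu}_t^N,g_t^N,\hat{\boldsymbol{\nu}}_t^N) \,\Big|\, \mathcal{F}_t\Bigr] = \sum_{x\in\mathcal{X}}\sum_{u\in\mathcal{U}}\boldsymbol{\mu}_t^N(x)\,\pi_t(x,\boldsymbol{\mu}_t^N,g_t^N)(u)\,r(x,u,\boldsymbol{\mu}_t^N,g_t^N,\hat{\boldsymbol{\nu}}_t^N) = r^{\mathrm{MF}}(\boldsymbol{\mu}_t^N,g_t^N,\pi_t).
\]
Hence $T_2$ is the absolute value of an average of $N$ terms that, conditionally on $\mathcal{F}_t$, are independent and mean-zero, each summand being bounded by $M$ in absolute value by Assumption \ref{ass_1}(a). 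Conditional Jensen then gives
\[
\mathbb{E}[T_2 \,|\, \mathcal{F}_t] \le \Bigl(\frac{1}{N^2}\sum_{i=1}^N \mathrm{Var}\bigl(r(x_t^i,u_t^i,\boldsymbol{\mu}_t^N,g_t^N,\hat{\boldsymbol{\nu}}_t^N) \,\big|\, \mathcal{F}_t\bigr)\Bigr)^{1/2} \le \Bigl(\frac{1}{N^2}\cdot N \cdot M^2\Bigr)^{1/2} = \frac{M}{\sqrt{N}},
\]
and taking expectations over $\mathcal{F}_t$ yields $\mathbb{E}[T_2] \le M/\sqrt{N}$. Adding the bounds on $\mathbb{E}[T_1]$ and $\mathbb{E}[T_2]$ completes the proof.

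The only delicate point is the conditioning bookkeeping: one must first pass from $\boldsymbol{\nu}_t^N$ to the $\mathcal{F}_t$-measurable surrogate $\hat{\boldsymbol{\nu}}_t^N$ so that the conditional-expectation identity above is available, and then exploit the conditional independence of the actions given $(\boldsymbol{x}_t^N, g_t^N)$. I note in passing that the cruder route of expanding $T_2$ over $(x,u)$-pairs and applying Lemma \ref{lemma_main} directly would only give $M\sqrt{|\mathcal{X}||\mathcal{U}|}/\sqrt{N}$; it is the conditional-variance argument that produces the dimension-free constant $M$ in the stated bound.
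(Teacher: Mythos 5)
Your proposal is correct and follows essentially the same route as the paper: the same two-term decomposition (Lipschitz replacement of $\boldsymbol{\nu}_t^N$ by $\nu^{\mathrm{MF}}(\boldsymbol{\mu}_t^N,g_t^N,\pi_t)$ handled via Assumption \ref{ass_1}(b) and Lemma \ref{lemma_4}, then a conditionally centered average of bounded independent terms given $(\boldsymbol{x}_t^N,g_t^N)$). The only cosmetic difference is that the paper funnels the second term through its general-purpose Lemma \ref{lemma_main} (after normalizing $r_0=r/M$), whereas you carry out the conditional variance and Jensen computation directly — which is exactly how Lemma \ref{lemma_main} is proved.
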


The proofs of Lemma $\ref{lemma_main}-\ref{lemma_7}$ are relegated to Appendix \ref{app_proof_lemma_main}$-$\ref{app_proof_lemma_7}.

\subsection{Proof of the Theorem}

Let, $\boldsymbol{\pi}=\{\pi_t\}_{t\in\{0,1,\cdots\}}$ be an arbitrary policy sequence. We shall use the notations introduced in section \ref{app_approx_lemmas}. Additionally, we shall consider $\{\boldsymbol{\mu}_t, g_t\}$ as the local state distribution and the global state of the infinite agent system at time $t$. Consider the following.
\begin{align}
\label{eq_16}
\begin{split}
&|V_N(\boldsymbol{x}_0, g_0, \boldsymbol{\pi})-V_{\infty}(\boldsymbol{\mu}_0, g_0, \boldsymbol{\pi})| \\
&\leq \sum_{t=0}^\infty\gamma^t\mathbb{E}\left|\dfrac{1}{N}\sum_{i=1}r(x_t^i, u_t^i, \boldsymbol{\mu}_t^N, g_t^N, \boldsymbol{\nu}_t^N)-r^{\mathrm{MF}}(\boldsymbol{\mu}_t^N, g_t^N, \pi_t)\right| +\sum_{t=0}^\infty\gamma^t\underbrace{|\mathbb{E}[r^{\mathrm{MF}}(\boldsymbol{\mu}_t^N, g_t^N, \pi_t)]-\mathbb{E}[r^{\mathrm{MF}}(\boldsymbol{\mu}_t, g_t, \pi_t)]|}_{\triangleq J_t}\\
&\overset{(a)}{\leq} \left(\dfrac{M_R+L_R\sqrt{|\mathcal{U}|}}{1-\gamma}\right)\dfrac{1}{\sqrt{N}} + \sum_{t=0}^{\infty}\gamma^tJ_t
\end{split}
\end{align}

Inequality $(a)$ follows from Lemma \ref{lemma_7}. Note that, using the definition $(\ref{app_eq_r_mf_comp})$, we can write the following.
\begin{align*}
&J_t= \left|\mathbb{E}[\tilde{r}^{\mathrm{MF}}(\boldsymbol{\mu}_t^N, g_t^N, \pi_{t:t})]-\mathbb{E}[\tilde{r}^{\mathrm{MF}}(\boldsymbol{\mu}_0, g_0, \pi_{0:t})]\right|\\
&\leq \sum_{k=0}^{t-1}\left|\mathbb{E}[\tilde{r}^{\mathrm{MF}}(\boldsymbol{\mu}_{k+1}^N, g_{k+1}^N, \pi_{k+1:t})]-\mathbb{E}[\tilde{r}^{\mathrm{MF}}(\boldsymbol{\mu}_k^N, g_k^N, \pi_{k:t})]\right|\\
&\overset{(a)}{\leq}\sum_{k=0}^{t-1}\left|\mathbb{E}[\tilde{r}^{\mathrm{MF}}(\boldsymbol{\mu}_{k+1}^N, g_{k+1}^N, \pi_{k+1:t})]-\mathbb{E}\left[\sum_{g\in\mathcal{G}}\tilde{r}^{\mathrm{MF}}(P^{\mathrm{MF}}(\boldsymbol{\mu}_k^N, g_k^N, \pi_k),  g, \pi_{k+1:t})P_G^{\mathrm{MF}}(\boldsymbol{\mu}_k^N, g_k^N, \pi_k)(g)\right]\right|\\
&\leq\sum_{k=0}^{t-1}\left|\mathbb{E}\left[\mathbb{E}\left[\tilde{r}^{\mathrm{MF}}(\boldsymbol{\mu}_{k+1}^N, g_{k+1}^N, \pi_{k+1:t})\big|\boldsymbol{x}_k^N, g_k^N, \boldsymbol{u}_k^N\right]\right]-\mathbb{E}\left[\sum_{g\in\mathcal{G}}\tilde{r}^{\mathrm{MF}}(P^{\mathrm{MF}}(\boldsymbol{\mu}_k^N, g_k^N, \pi_k),  g, \pi_{k+1:t})P_G^{\mathrm{MF}}(\boldsymbol{\mu}_k^N, g_k^N, \pi_k)(g)\right]\right|\\
&\overset{(b)}{=}\sum_{k=0}^{t-1}\Bigg|\mathbb{E}\left[\mathbb{E}\left[\sum_{g\in\mathcal{G}}\tilde{r}^{\mathrm{MF}}(\boldsymbol{\mu}_{k+1}^N, g, \pi_{k+1:t})P_G(\boldsymbol{\mu}_k^N, g_k^N,\boldsymbol{\nu}_k^N)(g)\Big|\boldsymbol{x}_k^N, g_k^N, \boldsymbol{u}_k^N\right]\right]\\
&\hspace{2cm}-\mathbb{E}\left[\sum_{g\in\mathcal{G}}\tilde{r}^{\mathrm{MF}}(P^{\mathrm{MF}}(\boldsymbol{\mu}_k^N, g_k^N, \pi_k),  g, \pi_{k+1:t})P_G^{\mathrm{MF}}(\boldsymbol{\mu}_k^N, g_k^N, \pi_k)(g)\right]\Bigg|\\
&\leq \sum_{k=0}^{t-1}\sum_{g\in\mathcal{G}}\mathbb{E}\left|\tilde{r}^{\mathrm{MF}}(\boldsymbol{\mu}_{k+1}^N, g, \pi_{k+1:t})P_G(\boldsymbol{\mu}_k^N, g_k^N,\boldsymbol{\nu}_k^N)(g)-\tilde{r}^{\mathrm{MF}}(P^{\mathrm{MF}}(\boldsymbol{\mu}_k^N, g_k^N, \pi_k),  g, \pi_{k+1:t})P_G^{\mathrm{MF}}(\boldsymbol{\mu}_k^N, g_k^N, \pi_k)(g)\right|\\
&\leq \sum_{k=0}^{t-1} J_{k,t}^1 +J_{k,2}^2
\end{align*}
where we use the notation that $\boldsymbol{\mu}_0^N=\boldsymbol{\mu}_0$ and $g_0^N=g_0$. Inequality $(a)$ follows from Lemma \ref{app_lemma_r_mf_condition} whereas $(b)$ is a result of the fact that $\boldsymbol{\mu}_{k+1}^N$ and $g_{k+1}^N$ are conditionally independent given $\boldsymbol{x}_k^N, g_k^N, \boldsymbol{u}_k^N$. Moreover, $g_{k+1}^N\sim P_G(\boldsymbol{\mu}_k^N, g_k^N, \boldsymbol{\nu}_k^N)$. The first term $J_{k,t}^1$ can be bounded as follows.
\begin{align*}
J_{k,t}^1&\triangleq \sum_{g\in\mathcal{G}}\mathbb{E}\left[\left|\tilde{r}^{\mathrm{MF}}(\boldsymbol{\mu}_{k+1}^N, g, \pi_{k+1:t})-\tilde{r}^{\mathrm{MF}}(P^{\mathrm{MF}}(\boldsymbol{\mu}_k^N, g_k^N, \pi_k),  g, \pi_{k+1:t})\right|\times P_G(\boldsymbol{\mu}_k^N, g_k^N,\boldsymbol{\nu}_k^N)(g)\right]\\
&\overset{(a)}{\leq} \left[\left(\dfrac{M_RS_G}{S_P-1}\right)(S_P^{t-k-1}-1)+S_RS_P^{t-k-1}\right]\times \mathbb{E}\left|\boldsymbol{\mu}_{k+1}^N-P^{\mathrm{MF}}(\boldsymbol{\mu}_k^N, g_k^N, \pi_k)\right|_1\times \underbrace{\left|P_G(\boldsymbol{\mu}_k^N, g_k^N,\boldsymbol{\nu}_k^N)\right|_1}_{=1}\\
&\overset{(b)}{\leq}C_P\left[\left(\dfrac{M_RS_G}{S_P-1}\right)(S_P^{t-k-1}-1)+S_RS_P^{t-k-1}\right]\times\dfrac{1}{\sqrt{N}}\left[\sqrt{|\mathcal{X}|}+ \sqrt{|\mathcal{U}|}\right]
\end{align*}

Inequality $(a)$ follows from Lemma \ref{app_lemma_r_mf_comp} whereas $(b)$ results from Lemma \ref{lemma_6}. Using $(\ref{eq_lambda_t_plus_1})$, the second term $J_{k,t}^2$ can be bounded as follows.
\begin{align*}
J_{k,t}^2&\triangleq \sum_{g\in\mathcal{G}}\mathbb{E}\left[\left|P_G(\boldsymbol{\mu}_k^N, g_k^N,\boldsymbol{\nu}_k^N)(g)-P_G(\boldsymbol{\mu}_k^N, g_k^N, \nu^{\mathrm{MF}}(\boldsymbol{\mu}_k^N,g_k^N,\pi_k))(g)\right|\times\left|\tilde{r}^{\mathrm{MF}}(P^{\mathrm{MF}}(\boldsymbol{\mu}_k^N, g_k^N, \pi_k),  g, \pi_{k+1:t})\right|\right]\\
&\overset{(a)}{\leq}M_R \times\mathbb{E}\left|P_G(\boldsymbol{\mu}_k^N, g_k^N,\boldsymbol{\nu}_k^N)-P_G(\boldsymbol{\mu}_k^N, g_k^N, \nu^{\mathrm{MF}}(\boldsymbol{\mu}_k^N,g_k^N,\pi_k))\right|_1\\
&\overset{(b)}{\leq}M_R\times L_G \times\mathbb{E}\left|\boldsymbol{\nu}_k^N-\nu^{\mathrm{MF}}(\boldsymbol{\mu}_k^N,g_k^N,\pi_k)\right|_1\overset{(c)}{\leq} M_RL_G\sqrt{\dfrac{|\mathcal{U}|}{N}}
\end{align*}

Inequality $(a)$ is a consequence of Assumption $\ref{ass_1}(a)$ and the definition of $\tilde{r}^{\mathrm{MF}}$ given in $(\ref{app_eq_r_mf_comp})$. Inequality $(b)$ follows from Assumption $\ref{ass_1}(d)$. Finally, $(c)$ is a consequence of Lemma $\ref{lemma_4}$. Combining, we obtain,
\begin{align*}
J_t\leq C_P\left[\left(\dfrac{M_RS_G}{S_P-1}+S_R\right)\left(\dfrac{S_P^t-1}{S_P-1}\right)-\left(\dfrac{M_RS_G}{S_P-1}\right)t\right]\times\dfrac{1}{\sqrt{N}}\left[\sqrt{|\mathcal{X}|}+ \sqrt{|\mathcal{U}|}\right] + M_RL_G\sqrt{\dfrac{|\mathcal{U}|}{N}} t
\end{align*}
Substituting in $(\ref{eq_16})$, we obtain the following result.
\begin{align*}
&|V_N(\boldsymbol{x}_0, g_0, \boldsymbol{\pi})-V_{\infty}(\boldsymbol{\mu}_0, g_0, \boldsymbol{\pi})| 
\leq \left(\dfrac{M_R+L_R\sqrt{|\mathcal{U}|}}{1-\gamma}\right)\dfrac{1}{\sqrt{N}} + M_RL_G\sqrt{\dfrac{|\mathcal{U}|}{N}}\dfrac{\gamma}{(1-\gamma)^2}\\
&+ \left(\dfrac{C_P}{S_P-1}\right)\left[\left(\dfrac{M_RS_G}{S_P-1}+S_R\right)\left\{\dfrac{1}{1-\gamma S_P}-\dfrac{1}{1-\gamma}\right\}-\dfrac{\gamma M_RS_G}{(1-\gamma)^2}\right]\times\dfrac{1}{\sqrt{N}}\left[\sqrt{|\mathcal{X}|}+ \sqrt{|\mathcal{U}|}\right] 
\end{align*}

We conclude by noting that $|\sup_{\boldsymbol{\pi}}V_N(\boldsymbol{x}_0, g_0, \boldsymbol{\pi})-\sup_{\boldsymbol{\pi}}V_{\infty}(\boldsymbol{\mu}_0, g_0, \boldsymbol{\pi})| \leq\sup_{\boldsymbol{\pi}}|V_N(\boldsymbol{x}_0, g_0, \boldsymbol{\pi})-V_{\infty}(\boldsymbol{\mu}_0, g_0, \boldsymbol{\pi})| $ where the suprema are taken over the set of all admissible policy sequences $\Pi^{\infty}$.

\section{Proof of Lemma \ref{lemma_1}}
\label{app_proof_lemma_1}

Note the inequalities stated below.
\begin{align*}
|\nu^{\mathrm{MF}}(\boldsymbol{\mu}, g, \pi)-\nu^{\mathrm{MF}}(\boldsymbol{\bar{\mu}}, g, &\pi)|_1
\overset{(a)}{=}\Bigg|\sum_{x\in \mathcal{X}}\pi(x, \boldsymbol{\mu}, g)\boldsymbol{\mu}(x) - \sum_{x\in \mathcal{X}} \pi(x, \boldsymbol{\bar{\mu}}, g)\boldsymbol{\bar{\mu}}(x)\Bigg|_1\\
&=\sum_{u\in \mathcal{U}}\Bigg|\sum_{x\in \mathcal{X}}\pi(x, \boldsymbol{\mu}, g)(u)\boldsymbol{\mu}(x) - \sum_{x\in \mathcal{X}}\pi(x, \boldsymbol{\bar{\mu}}, g)(u)\boldsymbol{\bar{\mu}}(x)\Bigg|\\
&\leq \sum_{x\in\mathcal{X}}\sum_{u\in \mathcal{U}}|\pi(x, \boldsymbol{\mu}, g)(u)\boldsymbol{\mu}(x) -  \pi(x, \boldsymbol{\bar{\mu}}, g)(u)\boldsymbol{\bar{\mu}}(x)|\\
&\leq \sum_{x\in \mathcal{X}}|\boldsymbol{\mu}(x)-\boldsymbol{\bar{\mu}}(x)|\underbrace{\sum_{u\in \mathcal{U}}\pi(x, \boldsymbol{\mu}, g)(u)}_{=1} + \sum_{x\in \mathcal{X}}\boldsymbol{\bar{\mu}}(x)\sum_{u\in \mathcal{U}}|\pi(x, \boldsymbol{\mu}, g)(u)-\pi(x, \boldsymbol{\bar{\mu}}, g)(u)|\\
&\overset{(b)}{\leq}|\boldsymbol{\mu}-\boldsymbol{\bar{\mu}}|_1 + \underbrace{\sum_{x\in \mathcal{X}}\boldsymbol{\bar{\mu}}(x)}_{=1} L_Q|\boldsymbol{\mu}-\boldsymbol{\bar{\mu}}|_1\\
&\overset{(c)}{=}  (1+ L_Q)|\boldsymbol{\mu}-\boldsymbol{\bar{\mu}}|_1
\end{align*}

Inequality (a) follows from the definition of $\nu^{\mathrm{MF}}(\cdot, \cdot)$ as given in $(\ref{eq_nu_t})$. On the other hand, (b) is a consequence of Assumption \ref{ass_2} and the fact that $\pi(x, \boldsymbol{\mu}, g)$ is a valid probability distribution.  Finally, $(c)$ uses the fact that $\boldsymbol{\bar{\mu}}$ is a distribution. This concludes the lemma.

\section{Proof of Lemma \ref{lemma_2}}
\label{app_proof_lemma_2}

Observe that, 
\begin{align*}
&|P^{\mathrm{MF}}(\boldsymbol{\mu}, g, \pi)-P^{\mathrm{MF}}(\boldsymbol{\bar{\mu}}, g, \pi)|_1\\
&\overset{(a)}{=}\Bigg| \sum_{x\in \mathcal{X}}\sum_{u\in \mathcal{U}}P(x, u, \boldsymbol{\mu}, g, \nu^{\mathrm{MF}}(\boldsymbol{\mu}, g, \pi))\pi(x, \boldsymbol{\mu}, g)(u)\boldsymbol{\mu}(x)-  P(x, u, \boldsymbol{\bar{\mu}}, g,  \nu^{\mathrm{MF}}(\boldsymbol{\bar{\mu}}, g, \pi))\pi(x, \boldsymbol{\bar{\mu}}, g)(u)\boldsymbol{\bar{\mu}}(x)\Bigg|_1\\
&\leq J_1 + J_2
\end{align*}

Equality (a) follows from the definition of $P^{\mathrm{MF}}(\cdot,\cdot, \cdot)$ as depicted in $(\ref{eq_mu_t_plus_1})$. The term $J_1$ satisfies the following bound.
\begin{align*}
J_1&\triangleq \sum_{x\in \mathcal{X}}\sum_{u\in \mathcal{U}}\Big|P(x, u, \boldsymbol{\mu}, g, \nu^{\mathrm{MF}}(\boldsymbol{\mu}, g, \pi))-P(x, u, \boldsymbol{\bar{\mu}}, g, \nu^{\mathrm{MF}}(\boldsymbol{\bar{\mu}}, g, \pi))\Big|_1\times\pi(x, \boldsymbol{\mu}, g)(u)\boldsymbol{\mu}(x)\\
&\overset{(a)}{\leq}L_P\left[|\boldsymbol{\mu}-\boldsymbol{\bar{\mu}}|_1+|\nu^{\mathrm{MF}}(\boldsymbol{\mu}, g, \pi)-\nu^{\mathrm{MF}}(\boldsymbol{\bar{\mu}}, g, \pi)|_1\right] \times \underbrace{\sum_{x\in \mathcal{X}} \boldsymbol{\mu}(x)\sum_{u\in \mathcal{U}}\pi(x, \boldsymbol{\mu}, g)(u)}_{=1}\\
&\overset{(b)}{\leq}  L_P(2+L_Q)|\boldsymbol{\mu}-\boldsymbol{\bar{\mu}}|_1 
\end{align*}

Inequality $(a)$ is a consequence of Assumption \ref{ass_1}(c) whereas $(b)$ follows from Lemma \ref{lemma_1}, and the fact that $\pi(x, \boldsymbol{\mu}, g)$, $\boldsymbol{\mu}$ are probability distributions. The second term, $J_2$ obeys the following bound.
\begin{align*}
J_2\triangleq &\sum_{x\in \mathcal{X}}  \sum_{u\in \mathcal{U}} \underbrace{|P(x, u, \boldsymbol{\bar{\mu}}, g, \nu^{\mathrm{MF}}(\boldsymbol{\bar{\mu}}, g, \pi))|_1}_{=1} \times |\pi(x, \boldsymbol{\mu}, g)(u)\boldsymbol{\mu}(x) - \pi(x, \boldsymbol{\bar{\mu}}, g)(u)\boldsymbol{\bar{\mu}}(x)|\\
&\leq \sum_{x\in \mathcal{X}}|\boldsymbol{\mu}(x)-\boldsymbol{\bar{\mu}}(x)|\underbrace{\sum_{u\in \mathcal{U}}\pi(x, \boldsymbol{\mu}, g)(u)}_{=1} + \sum_{x\in \mathcal{X}} \boldsymbol{\bar{\mu}}(x)\sum_{u\in \mathcal{U}}|\pi(x, \boldsymbol{\mu}, g)(u)-\pi(x, \boldsymbol{\bar{\mu}}, g)(u)|\\
&\overset{(a)}{\leq}|\boldsymbol{\mu}-\boldsymbol{\bar{\mu}}|_1 + \underbrace{\sum_{x\in \mathcal{X}} \boldsymbol{\bar{\mu}}(x)}_{=1} L_Q|\boldsymbol{\mu}-\boldsymbol{\bar{\mu}}|_1 \overset{(b)}{=}  (1+L_Q)|\boldsymbol{\mu}-\boldsymbol{\bar{\mu}}|_1
\end{align*}

Inequality $(a)$ results from Assumption \ref{ass_2} and the fact that $\pi(x, \boldsymbol{\mu}, g)$ is a probability distribution whereas $(b)$ utilizes the fact that $\boldsymbol{\bar{\mu}}$ is a distribution. This concludes the result.

\section{Proof of Lemma \ref{lemma_2_}}
\label{app_sec_proof_lemma_2_}
Note the following relations.
\begin{align*}
|P_G^{\mathrm{MF}}(\boldsymbol{\mu}, g, \pi)- P_G^{\mathrm{MF}}(\boldsymbol{\bar{\mu}}, g, \pi)|_1&
\overset{(a)}{=}	|P_G(\boldsymbol{\mu}, g, \nu^{\mathrm{MF}}(\boldsymbol{\mu}, g, \pi))- P_G^{\mathrm{MF}}(\boldsymbol{\bar{\mu}}, g, \nu^{\mathrm{MF}}(\boldsymbol{\bar{\mu}}, g,\pi))|_1\\ &\overset{(b)}{\leq}L_G\left\{|\boldsymbol{\mu}-\boldsymbol{\bar{\mu}}|_1+|\nu^{\mathrm{MF}}(\boldsymbol{\mu}, g, \pi)-\nu^{\mathrm{MF}}(\boldsymbol{\bar{\mu}}, g, \pi)|_1\right\}\overset{(c)}{\leq}L_G(2+L_Q)|\boldsymbol{\mu}_l-\boldsymbol{\bar{\mu}}_l|_1
\end{align*}

Equality $(a)$ follows from the definition $(\ref{eq_lambda_t_plus_1})$ while $(b)$ follows from Assumption $\ref{ass_1}(d)$. Finally, $(c)$ is a result of Lemma \ref{lemma_1}.

\section{Proof of Lemma \ref{lemma_3}}
\label{app_proof_lemma_3}

Observe that, 
\begin{align*}
&|r^{\mathrm{MF}}(\boldsymbol{\mu}, g,  \pi)-r^{\mathrm{MF}}(\boldsymbol{\bar{\mu}}, g,\pi)|_1\\
&\overset{(a)}{=}\Bigg| \sum_{x\in \mathcal{X}}\sum_{u\in \mathcal{U}}r(x, u, \boldsymbol{\mu}, g,  \nu^{\mathrm{MF}}(\boldsymbol{\mu},g,\pi))\pi(x, \boldsymbol{\mu}, g)(u)\boldsymbol{\mu}(x)-  r(x, u, \boldsymbol{\bar{\mu}}, g,  \nu^{\mathrm{MF}}(\boldsymbol{\bar{\mu}}, g, \pi))\pi(x, \boldsymbol{\bar{\mu}}, g)(u)\boldsymbol{\bar{\mu}}(x)\Bigg|_1\\
&\leq J_1 + J_2
\end{align*}

Equality (a) follows from the definition of $r^{\mathrm{MF}}(\cdot,\cdot, \cdot)$ as depicted in $(\ref{eq_r_mf})$. The term $J_1$ satisfies the following bound.
\begin{align*}
J_1\triangleq& \sum_{x\in \mathcal{X}}\sum_{u\in \mathcal{U}}\Big|r(x, u, \boldsymbol{\mu}, g, \nu^{\mathrm{MF}}(\boldsymbol{\mu}, g, \pi))-r(x, u, \boldsymbol{\bar{\mu}}, g, \nu^{\mathrm{MF}}(\boldsymbol{\bar{\mu}}, g, \pi))\Big|\times\pi(x, \boldsymbol{\mu}, g)(u)\boldsymbol{\mu}(x)\\
&\overset{(a)}{\leq}L_R\left[|\boldsymbol{\mu}-\boldsymbol{\bar{\mu}}|_1+|\nu^{\mathrm{MF}}(\boldsymbol{\mu}, g, \pi)-\nu^{\mathrm{MF}}(\boldsymbol{\bar{\mu}}, g,  \pi)|_1\right] \times \underbrace{\sum_{x\in \mathcal{X}} \boldsymbol{\mu}(x)\sum_{u\in \mathcal{U}}\pi(x, \boldsymbol{\mu}, g)(u)}_{=1}\overset{(b)}{\leq}  L_R(2+L_Q)|\boldsymbol{\mu}-\boldsymbol{\bar{\mu}}|_1 
\end{align*}

Inequality $(a)$ is a consequence of Assumption $\ref{ass_1}(b)$ whereas $(b)$ follows from Lemma \ref{lemma_1}, and the fact that $\pi(x, \boldsymbol{\mu}, g)$, $\boldsymbol{\mu}$ are probability distributions. The second term, $J_2$ obeys the following bound.
\begin{align*}
J_2&\triangleq \sum_{x\in \mathcal{X}}  \sum_{u\in \mathcal{U}} |r(x, u, \boldsymbol{\bar{\mu}}, g, \nu^{\mathrm{MF}}(\boldsymbol{\bar{\mu}}, g, \pi))| \times |\pi(x, \boldsymbol{\mu}, g)(u)\boldsymbol{\mu}(x) - \pi(x, \boldsymbol{\bar{\mu}}, g)(u)\boldsymbol{\bar{\mu}}(x)|\\
&\overset{(a)}{\leq} M_R\sum_{x\in \mathcal{X}}|\boldsymbol{\mu}(x)-\boldsymbol{\bar{\mu}}(x)|\underbrace{\sum_{u\in \mathcal{U}}\pi(x, \boldsymbol{\mu}, g)(u)}_{=1} + M_R\sum_{x\in \mathcal{X}} \boldsymbol{\bar{\mu}}(x)\sum_{u\in \mathcal{U}}|\pi(x, \boldsymbol{\mu}, g)(u)-\pi(x, \boldsymbol{\bar{\mu}}, g)(u)|\\
&\overset{(b)}{\leq}M_R|\boldsymbol{\mu}-\boldsymbol{\bar{\mu}}|_1+M_R\underbrace{\sum_{x\in \mathcal{X}} \boldsymbol{\bar{\mu}}(x)}_{=1} L_Q|\boldsymbol{\mu}-\boldsymbol{\bar{\mu}}|_1\overset{(c)}{=}  M_R(1+ L_Q)|\boldsymbol{\mu}-\boldsymbol{\bar{\mu}}|_1
\end{align*}

Inequality $(a)$ results from Assumption \ref{ass_1}(a) where $(b)$ follows from Assumption \ref{ass_2} and the fact that $\pi(x, \boldsymbol{\mu}, g)$ is a probability distribution. Finally, $(c)$ utilizes the fact that $\boldsymbol{\bar{\mu}}$ is a valid distribution. This concludes the result.

\section{Proof of Lemma \ref{app_lemma_p_mf}}
\label{app_sec_lemma_p_mf}

Fix $l\in\{0,1,\cdots\}$. We shall prove the lemma via induction on $r$. Note that, for $r=0$, we have,
\begin{align*}
|\tilde{P}^{\mathrm{MF}}(\boldsymbol{\mu}_l, g_{l:l}, \pi_{l:l})- \tilde{P}^{\mathrm{MF}}(\boldsymbol{\bar{\mu}}_l, g_{l:l}, \pi_{l:l})|_1 = 	|P^{\mathrm{MF}}(\boldsymbol{\mu}_l, g_{l}, \pi_{l})- P^{\mathrm{MF}}(\boldsymbol{\bar{\mu}}_l, g_{l}, \pi_{l})|_1\overset{(a)}{\leq} S_P|\boldsymbol{\mu}_l-\boldsymbol{\bar{\mu}}_l|_1
\end{align*}

Inequality $(a)$ follows from Lemma \ref{lemma_2}. Assume that the lemma holds for some $r\in\{0,1,\cdots\}$. We shall demonstrate below that the relation holds for $r+1$ as well. Note that,
\begin{align*}
&|\tilde{P}^{\mathrm{MF}}(\boldsymbol{\mu}_l, g_{l:l+r+1}, \pi_{l:l+r+1})- \tilde{P}^{\mathrm{MF}}(\boldsymbol{\bar{\mu}}_l, g_{l:l+r+1}, \pi_{l:l+r+1})|_1 \\
&=	|P^{\mathrm{MF}}(\tilde{P}^{\mathrm{MF}}(\boldsymbol{\mu}_l, g_{l:l+r}, \pi_{l:l+r}), g_{l+r+1}, \pi_{l+r+1})- P^{\mathrm{MF}}(\tilde{P}^{\mathrm{MF}}(\boldsymbol{\bar{\mu}}_l, g_{l:l+r}, \pi_{l:l+r}), g_{l+r+1}, \pi_{l+r+1})|_1\\
&\overset{(a)}{\leq} S_P	|\tilde{P}^{\mathrm{MF}}(\boldsymbol{\mu}_l, g_{l:l+r}, \pi_{l:l+r})- \tilde{P}^{\mathrm{MF}}(\boldsymbol{\bar{\mu}}_l, g_{l:l+r}, \pi_{l:l+r})|_1\overset{(b)}{\leq} S_P^{r+2}|\boldsymbol{\mu}_l-\boldsymbol{\bar{\mu}}_l|_1
\end{align*}

Inequality $(a)$ follows from Lemma \ref{lemma_2} and $(b)$ is a consequence of the induction hypothesis.

\section{Proof of Lemma \ref{app_lemma_p_g_mf}}
\label{app_sec_lemma_p_g_mf}

Fix $l\in\{0,1,\cdots\}$. We shall prove the lemma via induction on $r$. Note that, for $r=0$, we have,
\begin{align}
\begin{split}
|\tilde{P}_G^{\mathrm{MF}}(\boldsymbol{\mu}_l, g_{l:l}, \pi_{l:l})- \tilde{P}_G^{\mathrm{MF}}(\boldsymbol{\bar{\mu}}_l, g_{l:l}, \pi_{l:l})|_1 &= 	|P_G^{\mathrm{MF}}(\boldsymbol{\mu}_l, g_{l}, \pi_{l})- P_G^{\mathrm{MF}}(\boldsymbol{\bar{\mu}}_l, g_{l}, \pi_{l})|_1\overset{(a)}{\leq}S_G|\boldsymbol{\mu}_l-\boldsymbol{\bar{\mu}}_l|_1
\end{split}
\end{align}

Equality $(a)$ follows from Lemma \ref{lemma_2_}. Assume that the lemma holds for some $r\in\{0,1,\cdots\}$. We shall now show that the lemma holds for $r+1$ as well. Note that,
\begin{align*}
&\sum_{l+1:l+r+1}\left|\tilde{P}_G^{\mathrm{MF}}(\boldsymbol{\mu}_l, g_{l:l+r+1}, \pi_{l:l+r+1})- \tilde{P}_G^{\mathrm{MF}}(\boldsymbol{\bar{\mu}}_l, g_{l:l+r+1}, \pi_{l:l+r+1})\right|_1 \\
&\overset{(a)}{=}\sum_{l+1:l+r+1}\Big|\tilde{P}_G^{\mathrm{MF}}(\boldsymbol{\mu}_l, g_{l:l+r}, \pi_{l:l+r})(g_{l+r+1})P_G^{\mathrm{MF}}(\tilde{P}^{\mathrm{MF}}(\boldsymbol{\mu}_l, g_{l:l+r}, \pi_{l:l+r}), g_{l+r+1}, \pi_{l+r+1})\\
&\hspace{3cm}-\tilde{P}_G^{\mathrm{MF}}(\boldsymbol{\bar{\mu}}_l, g_{l:l+r}, \pi_{l:l+r})(g_{l+r+1})P_G^{\mathrm{MF}}(\tilde{P}^{\mathrm{MF}}(\boldsymbol{\bar{\mu}}_l, g_{l:l+r}, \pi_{l:l+r}), g_{l+r+1}, \pi_{l+r+1})\Big|_1\leq J_1+J_2
\end{align*}

Equality $(a)$ follows from $(\ref{app_eq_pmf_g_comp})$. The first term $J_1$ can be upper bounded as follows.
\begin{align*}
J_1&\triangleq \sum_{l+1:l+r+1} \tilde{P}_G^{\mathrm{MF}}(\boldsymbol{\mu}_l, g_{l:l+r}, \pi_{l:l+r})(g_{l+r+1})\\
&\hspace{1cm}\times\Big| P_G^{\mathrm{MF}}(\tilde{P}^{\mathrm{MF}}(\boldsymbol{\mu}_l, g_{l:l+r}, \pi_{l:l+r}), g_{l+r+1}, \pi_{l+r+1})-P_G^{\mathrm{MF}}(\tilde{P}^{\mathrm{MF}}(\boldsymbol{\bar{\mu}}_l, g_{l:l+r}, \pi_{l:l+r}), g_{l+r+1}, \pi_{l+r+1})\Big|_1\\
&\overset{(a)}{\leq} S_G\sum_{l+1:l+r+1}\Big|\tilde{P}^{\mathrm{MF}}(\boldsymbol{\mu}_l, g_{l:l+r}, \pi_{l:l+r})-\tilde{P}^{\mathrm{MF}}(\boldsymbol{\mu}_l, g_{l:l+r}, \pi_{l:l+r})\Big|_1\times \tilde{P}_G^{\mathrm{MF}}(\boldsymbol{\mu}_l, g_{l:l+r}, \pi_{l:l+r})(g_{l+r+1})\\
&\overset{(b)}{\leq} S_G S_P^{r+1} |\boldsymbol{\mu}_l-\boldsymbol{\bar{\mu}}_l|_1\sum_{l+1:l+r+1} \tilde{P}_G^{\mathrm{MF}}(\boldsymbol{\mu}_l, g_{l:l+r}, \pi_{l:l+r})(g_{l+r+1}) \overset{(c)}{=} S_G S_P^{r+1} |\boldsymbol{\mu}_l-\boldsymbol{\bar{\mu}}_l|_1
\end{align*}

Inequality $(a)$ follows from Lemma \ref{lemma_2_} while $(b)$ results from Lemma \ref{app_lemma_p_mf}. Finally, $(c)$ can be shown following the definition $(\ref{app_eq_pmf_g_comp})$. The second term $J_2$ can be bounded as follows.
\begin{align*}
J_2&\triangleq \sum_{l+1:l+r+1}\underbrace{|P_G^{\mathrm{MF}}(\tilde{P}^{\mathrm{MF}}(\boldsymbol{\bar{\mu}}_l, g_{l:l+r}, \pi_{l:l+r}), g_{l+r+1}, \pi_{l+r+1})|_1}_{=1}\\
&\hspace{1cm}\times|\tilde{P}_G^{\mathrm{MF}}(\boldsymbol{\mu}_l, g_{l:l+r}, \pi_{l:l+r})(g_{l+r+1})-\tilde{P}_G^{\mathrm{MF}}(\boldsymbol{\bar{\mu}}_l, g_{l:l+r}, \pi_{l:l+r})(g_{l+r+1})|\\
&=\sum_{l+1:l+r+1} |\tilde{P}_G^{\mathrm{MF}}(\boldsymbol{\mu}_l, g_{l:l+r}, \pi_{l:l+r})-\tilde{P}_G^{\mathrm{MF}}(\boldsymbol{\bar{\mu}}_l, g_{l:l+r}, \pi_{l:l+r})|_1\overset{(a)}{\leq} S_G(1+S_P+\cdots+S_P^r)|\boldsymbol{\mu}_l-\boldsymbol{\bar{\mu}}_l|_1
\end{align*}

Inequality $(a)$ follows from induction hypothesis. This concludes the lemma. 

\section{Proof of Lemma \ref{app_lemma_r_mf_comp}}
\label{app_sec_proof_lemma_r_mf_comp}

Note that the result readily follows for $r=0$ from Lemma \ref{lemma_3}. Therefore, we assume $r\geq 1$.
\begin{align*}
&|\tilde{r}^{\mathrm{MF}}(\boldsymbol{\mu}_l,g_l, \pi_{l:l+r})-\tilde{r}^{\mathrm{MF}}(\boldsymbol{\bar{\mu}}_l,g_l, \pi_{l:l+r})|\\
&\overset{(a)}{\leq}\sum_{l+1:l+r}|r^{\mathrm{MF}}(\tilde{P}^{\mathrm{MF}}(\boldsymbol{\mu}_l, g_{l:l+r-1}, \pi_{l:l+r-1}), g_{l+r}, \pi_{l+r}) \tilde{P}_G^{\mathrm{MF}}(\boldsymbol{\mu}_l, g_{l:l+r-1}, \pi_{l:l+r-1})(g_{l+r})\\
&\hspace{1cm}-r^{\mathrm{MF}}(\tilde{P}^{\mathrm{MF}}(\boldsymbol{\bar{\mu}}_l, g_{l:l+r-1}, \pi_{l:l+r-1}), g_{l+r}, \pi_{l+r}) \tilde{P}_G^{\mathrm{MF}}(\boldsymbol{\bar{\mu}}_l, g_{l:l+r-1}, \pi_{l:l+r-1})(g_{l+r})|\leq J_1+J_2
\end{align*}

Inequality $(a)$ follows from the definition $(\ref{app_eq_r_mf_comp})$. The first term can be bounded as follows.
\begin{align*}
J_1&\triangleq \sum_{l+1:l+r}|r^{\mathrm{MF}}(\tilde{P}^{\mathrm{MF}}(\boldsymbol{\mu}_l, g_{l:l+r-1}, \pi_{l:l+r-1}), g_{l+r}, \pi_{l+r})|\\
&\hspace{1cm}\times|\tilde{P}_G^{\mathrm{MF}}(\boldsymbol{\mu}_l, g_{l:l+r-1}, \pi_{l:l+r-1})(g_{l+r})-\tilde{P}_G^{\mathrm{MF}}(\boldsymbol{\bar{\mu}}_l, g_{l:l+r-1}, \pi_{l:l+r-1})(g_{l+r})|\\
&\overset{(a)}{\leq} M_R\sum_{l+1:l+r-1}|\tilde{P}_G^{\mathrm{MF}}(\boldsymbol{\mu}_l, g_{l:l+r-1}, \pi_{l:l+r-1})-\tilde{P}_G^{\mathrm{MF}}(\boldsymbol{\bar{\mu}}_l, g_{l:l+r-1}, \pi_{l:l+r-1})|_1\\
&\overset{(b)}{\leq} M_RS_G(1+S_P+\cdots+S_P^{r-1})|\boldsymbol{\mu}_l-\boldsymbol{\bar{\mu}}_l|_1=\left(\dfrac{M_RS_G}{S_P-1}\right)(S_P^r-1)|\boldsymbol{\mu}_l-\boldsymbol{\bar{\mu}}_l|_1
\end{align*}

The bound $(a)$ can be proven using Assumption $\ref{ass_1}(a)$ and the definition of $r^{\mathrm{MF}}$ given in $(\ref{eq_r_mf})$. The bound $(b)$ follows from Lemma \ref{app_lemma_p_g_mf}. The term $J_2$ can be bounded as follows.
\begin{align*}
J_2&\triangleq \sum_{l+1:l+r} \tilde{P}_G^{\mathrm{MF}}(\boldsymbol{\bar{\mu}}_l, g_{l:l+r-1}, \pi_{l:l+r-1})(g_{l+r})\\
&\times |r^{\mathrm{MF}}(\tilde{P}^{\mathrm{MF}}(\boldsymbol{\mu}_l, g_{l:l+r-1}, \pi_{l:l+r-1}), g_{l+r}, \pi_{l+r})-r^{\mathrm{MF}}(\tilde{P}^{\mathrm{MF}}(\boldsymbol{\bar{\mu}}_l, g_{l:l+r-1}, \pi_{l:l+r-1}), g_{l+r}, \pi_{l+r})|\\
&\overset{(a)}{\leq}\sum_{l+1:l+r} \tilde{P}_G^{\mathrm{MF}}(\boldsymbol{\bar{\mu}}_l, g_{l:l+r-1}, \pi_{l:l+r-1})(g_{l+r}) \times S_R|\tilde{P}^{\mathrm{MF}}(\boldsymbol{\mu}_l, g_{l:l+r-1}, \pi_{l:l+r-1})-\tilde{P}^{\mathrm{MF}}(\boldsymbol{\bar{\mu}}_l, g_{l:l+r-1}, \pi_{l:l+r-1})|\\
&\overset{(b)}{\leq} S_R S_P^r|\boldsymbol{\mu}_l-\boldsymbol{\bar{\mu}}_l|_1\times \sum_{l+1:l+r}\tilde{P}_G^{\mathrm{MF}}(\boldsymbol{\bar{\mu}}_l, g_{l:l+r-1}, \pi_{l:l+r-1})(g_{l+r}) \overset{(c)}{=}S_R S_P^r|\boldsymbol{\mu}_l-\boldsymbol{\bar{\mu}}_l|_1
\end{align*}

Inequality $(a)$ follows from Lemma \ref{lemma_3} while $(b)$ results from Lemma \ref{app_lemma_p_mf}. Finally, $(c)$ can be proven from the definition $(\ref{app_eq_pmf_g_comp})$. This concludes the lemma.

\section{Proof of Lemma \ref{app_lemma_r_mf_condition}}
\label{app_sec_proof_lemma_r_mf_condiiton}

\begin{align*}
\begin{split}
&\tilde{r}^{\mathrm{MF}}(\boldsymbol{\mu}_l, g_l, \pi_{l:l+r}) \\
&= 
\sum\limits_{l+1:l+r} r^{\mathrm{MF}}(\tilde{P}^{\mathrm{MF}}(\boldsymbol{\mu}_l, g_{l:l+r-1}, \pi_{l:l+r-1}), g_{l+r}, \pi_{l+r}) \tilde{P}_G^{\mathrm{MF}}(\boldsymbol{\mu}_l, g_{l:l+r-1}, \pi_{l:l+r-1})(g_{l+r})\\
&\overset{(a)}{=}\sum\limits_{l+1:l+r} r^{\mathrm{MF}}(\tilde{P}^{\mathrm{MF}}(P^{\mathrm{MF}}(\boldsymbol{\mu}_l, g_l, \pi_l), g_{l+1:l+r-1}, \pi_{l+1:l+r-1}), g_{l+r}, \pi_{l+r})\\
&\hspace{1cm}\times \tilde{P}_G^{\mathrm{MF}}(P^{\mathrm{MF}}(\boldsymbol{\mu}_l, g_l, \pi_l), g_{l+1:l+r-1}, \pi_{l+1:l+r-1})(g_{l+r}) P_G^{\mathrm{MF}}(\boldsymbol{\mu}_l, g_l, \pi_l)(g_{l+1})\\
&=\sum_{g_{l+1}\in\mathcal{G}}\tilde{r}^{\mathrm{MF}}(P^{\mathrm{MF}}(\boldsymbol{\mu}_l, g_l, \pi_l),g_{l+1}, \pi_{l+1:l+r}) P_G^{\mathrm{MF}}(\boldsymbol{\mu}_l, g_l, \pi_l)(g_{l+1})
\end{split}
\end{align*}

Equality $(a)$ follows from the definitions $(\ref{app_eq_pmf_comp})$ and $(\ref{app_eq_pmf_g_comp})$.

\section{Proof of Lemma \ref{lemma_main}}
\label{app_proof_lemma_main}

Let $Y_{mn}\triangleq X_{mn}-\mathbb{E}[X_{mn}]$, $\forall m\in \{1,\cdots, M\}$, $\forall n\in\{1,\cdots, N\}$. Note that, as $X_{mn}\in[0,1]$, we have, $E[Y_{mn}^2]=E[X_{mn}^2]-[E[X_{mn}]]^2\leq E[X_{mn}]$. Using independence of $\{Y_{mn}\}_{n\in\{1,\cdots,N\}}$, for any given $m\in\{1,\cdots, M\}$, we get,
\begin{align*}
\mathbb{E}\left[\sum_{n=1}^NY_{m,n}\right]^2 
=   \mathbb{E}\left[\sum_{n_1=1}^N \sum_{n_2=1}^NY_{m,n_1}Y_{m,n_2}\right]
= \sum_{n=1}^N\mathbb{E}\left[Y_{m,n}^2\right] +2 \sum_{n_1=1}^{N}\sum_{n_2>n_1}^N \mathbb{E}[Y_{m,n_1}]\mathbb{E}[Y_{m,n_2}]
=\sum_{n=1}^N\mathbb{E}\left[Y_{m,n}^2\right]
\end{align*}

Using the above relation, we finally obtain the following.

\begin{align*}
\sum_{m=1}^M\mathbb{E}\left|\sum_{n=1}^N Y_{m,n}\right|\overset{(a)}{\leq}\sqrt{M} \left\lbrace\sum_{m=1}^M\mathbb{E}\left[\sum_{n=1}^N Y_{m,n}\right]^2\right\rbrace^{\frac{1}{2}}
&=\sqrt{M} \left\lbrace\sum_{m=1}^M\sum_{n=1}^N\mathbb{E}\left[Y_{m,n}^2\right]\right\rbrace^{\frac{1}{2}}\\
&=\sqrt{M} \left\lbrace\sum_{n=1}^N\sum_{m=1}^M\mathbb{E}\left[X_{m,n}\right]\right\rbrace^{\frac{1}{2}}\leq \sqrt{MN}
\end{align*}

\section{Proof of Lemma \ref{lemma_4}}
\label{app_proof_lemma_4}

Notice the following relations.
\begin{align*}
&\mathbb{E}\left|\boldsymbol{\nu}_t^N-\nu^{\mathrm{MF}}(\boldsymbol{\mu}_t^N, g_t^N, \pi_t)\right|_1\\
&\overset{(a)}{=}\mathbb{E}\left|\boldsymbol{\nu}_t^N-\sum_{x\in \mathcal{X}}\pi_t(x, \boldsymbol{\mu}_t^N, g_t^N)\boldsymbol{\mu}_t^N(x)\right|_1\\
&=\mathbb{E}\left[\mathbb{E}\left[\sum_{u\in \mathcal{U}}\left|\boldsymbol{\nu}^N_t(u)-\sum_{x\in \mathcal{X}}\pi_t(x, \boldsymbol{\mu}_t^N, g_t^N)(u)\boldsymbol{\mu}^N_t(x)\right|\Bigg| \boldsymbol{x}_t^N, g_t^N\right]\right]\\
&\overset{(b)}{=}\mathbb{E}\left[\sum_{u\in \mathcal{U}}\mathbb{E}\left[\dfrac{1}{N}\left|\sum_{i=1}^N\delta(u_t^i=u)-\dfrac{1}{N}\sum_{x\in \mathcal{X}}\pi_t(x, \boldsymbol{\mu}_t^N, g_t^N)(u)\sum_{i=1}^N\delta(x_t^i=x)\right|\Bigg|\boldsymbol{x}_t^N, g_t^N\right]\right]\\
&= \mathbb{E}\left[\sum_{u\in \mathcal{U}}\mathbb{E}\left[\left|\dfrac{1}{N}\sum_{i=1}^N\delta(u_t^i=u)-\dfrac{1}{N}\sum_{i=1}^N\pi_t(x_t^i, \boldsymbol{\mu}_t^N, g_t^N)(u)\right|\Bigg|\boldsymbol{x}_t^N, g_t^N\right]\right]
\overset{(c)}{\leq} \dfrac{1}{\sqrt{N}}\sqrt{|\mathcal{U}|}
\end{align*}

Equality $(a)$ follows from the definition of $\nu^{\mathrm{MF}}(\cdot, \cdot, \cdot)$ given in $(\ref{eq_nu_t})$ while $(b)$ is a consequence of the definitions of $\boldsymbol{\mu}_t^N, \boldsymbol{\nu}_t^N$. Finally, $(c)$ uses Lemma \ref{lemma_main}. Specifically, it utilises the facts that, $\{u_t^i\}_{i\in \{1, \cdots, N\}}$ are conditionally independent given $\boldsymbol{x}_t^N$, $g_t^N$ and the following holds
\begin{align*}
&\mathbb{E}\left[\delta(u_t^i=u)\Big| \boldsymbol{x}_t^N, g_t^N\right] = \pi_t(x_t^i, \boldsymbol{\mu}_t^N, g_t^N)(u), \\
&\sum_{u\in \mathcal{U}}\mathbb{E}\left[\delta(u_t^i=u)\Big| \boldsymbol{x}_t^N, g_t^N\right] = 1
\end{align*}
$~\forall i\in \{1, \cdots, N\},\forall u\in \mathcal{U}$. This concludes the lemma.

\section{Proof of Lemma \ref{lemma_6}}
\label{app_proof_lemma_6}

Notice the following decomposition. 
\begin{align*}
&\mathbb{E}\left|\boldsymbol{\mu}_{t+1}^N-P^{\mathrm{MF}}(\boldsymbol{\mu}^N_t, g_t^N, \pi_t)\right|_1\\
&\overset{(a)}{=}\mathbb{E}\left|\boldsymbol{\mu}_{t+1}^N-\sum_{x'\in \mathcal{X}}\sum_{u\in \mathcal{U}}P(x', u, \boldsymbol{\mu}_t^N, g_t^N, \nu^{\mathrm{MF}}(\boldsymbol{\mu}_t^N, g_t^N,\pi_t))\pi_t(x', \boldsymbol{\mu}_t^N, g_t^N)(u)\boldsymbol{\mu}_t^N(x)\right|_1\\
&\overset{(b)}{=}\sum_{x\in \mathcal{X}}\mathbb{E}\Bigg|\dfrac{1}{N}\sum_{i=1}^N\delta(x_{t+1}^i=x)-\sum_{x'\in \mathcal{X}}\sum_{u\in \mathcal{U}}P(x', u, \boldsymbol{\mu}^N_t, g_t^N,  \nu^{\mathrm{MF}}(\boldsymbol{\mu}^N_{t}, g_t^N, \pi_t))(x)\pi_t(x', \boldsymbol{\mu}_t^N, g')(u)\dfrac{1}{N}\sum_{i=1}^N\delta(x_t^i=x')\Bigg|\\
&=\sum_{x\in \mathcal{X}}\mathbb{E}\left|\dfrac{1}{N}\sum_{i=1}^N\delta(x_{t+1}^i=x)-\dfrac{1}{N}\sum_{i=1}^N \sum_{u\in \mathcal{U}}P(x_t^i, u, \boldsymbol{\mu}^N_t, g_t^N, \nu^{\mathrm{MF}}(\boldsymbol{\mu}^N_t, g_t^N, \pi_t))(x)\pi_t(x_t^i, \boldsymbol{\mu}_t^N, g_t^N)(u)\right|\\
&\leq J_1+J_2+J_3
\end{align*}

Equality (a) uses the definition of $P^{\mathrm{MF}}(\cdot, \cdot, \cdot)$ as shown in $(\ref{eq_mu_t_plus_1})$ and equality $(b)$ uses the definition of $\boldsymbol{\mu}_{t}^N$. The term $J_1$ obeys the following.
\begin{align*}
J_1&\triangleq \dfrac{1}{N}\sum_{x\in \mathcal{X}}\mathbb{E}\left|\sum_{i=1}^N\delta(x_{t+1}^i=x)-\sum_{i=1}^N P(x_t^i, u_t^i, \boldsymbol{\mu}^N_t, g_t^N, \boldsymbol{\nu}_t^N)(x)\right|\\
& = \dfrac{1}{N}\sum_{x\in \mathcal{X}}\mathbb{E}\left[\mathbb{E}\left[\left|\sum_{i=1}^N\delta(x_{t+1}^i=x)-\sum_{i=1}^N P(x_t^i, u_t^i, \boldsymbol{\mu}^N_t, g_t^N, \boldsymbol{\nu}_t^N)(x)\right|\Bigg|\boldsymbol{x}_t^N, g_t^N, \boldsymbol{u}_t^N\right]\right]\overset{(a)}{\leq} \dfrac{1}{\sqrt{N}}\sqrt{|\mathcal{X}|}
\end{align*}

Inequality $(a)$ is obtained from Lemma \ref{lemma_main}. In particular, it uses the facts that $\{x_{t+1}^i\}_{i\in \{1, \cdots, N\}}$ are conditionally independent given $\{\boldsymbol{x}_t^N, g_t^N, \boldsymbol{u}_t^N\}$, and the following relations hold
\begin{align*}
&\mathbb{E}\left[\delta(x_{t+1}^i=x)\Big| \boldsymbol{x}_t^N, g_t^N, \boldsymbol{u}_t^N\right] = P(x_t^i, u_t^i, \boldsymbol{\mu}^N_t, g_t^N,  \boldsymbol{\nu}_t^N)(x),\\
&\sum_{x\in \mathcal{X}} \mathbb{E}\left[\delta(x_{t+1}^i=x)\Big| \boldsymbol{x}_t^N, g_t^N, \boldsymbol{u}_t^N\right] = 1
\end{align*}
$\forall i\in \{1, \cdots, N\}$, and $\forall x\in \mathcal{X}$. The second term satisfies the following bound.
\begin{align*}
J_2&\triangleq \dfrac{1}{N}\sum_{x\in \mathcal{X}}\mathbb{E}\left|\sum_{i=1}^NP(x_t^i, u_t^i, \boldsymbol{\mu}_t^N, g_t^N, \boldsymbol{\nu}_t^N)(x)- \sum_{i=1}^N P(x_t^i, u_t^i, \boldsymbol{\mu}^N_t, g_t^N,  \nu^{\mathrm{MF}}(\boldsymbol{\mu}^N_t, g_t^N, \pi_t))(x)\right|\\
&\leq \dfrac{1}{N}\sum_{i=1}^N \mathbb{E}\left|P(x_t^i, u_t^i, \boldsymbol{\mu}_t^N, g_t^N, \boldsymbol{\nu}_t^N)- P(x_t^i, u_t^i, \boldsymbol{\mu}^N_t, g_t^N, \nu^{\mathrm{MF}}(\boldsymbol{\mu}^N_t, g_t^N, \pi_t))\right|_1\\
&\overset{(a)}{\leq} L_P\mathbb{E}\left|\boldsymbol{\nu}_t^N-\nu^{\mathrm{MF}}(\boldsymbol{\mu}_t^N, g_t^N, \pi_t)\right|_1\overset{(b)}{\leq} \dfrac{L_P}{\sqrt{N}}\sqrt{|\mathcal{U}|}
\end{align*}

Inequality (a) is a consequence of Assumption \ref{ass_1}(c) while (b) follows from Lemma \ref{lemma_4}. Finally, the term, $J_3$ can be upper bounded as follows.
\begin{align*}
J_3&\triangleq \dfrac{1}{N}\sum_{x\in \mathcal{X}}\mathbb{E}\Bigg|\sum_{i=1}^N P(x_t^i, u_t^i, \boldsymbol{\mu}^N_t, g_t^N, \nu^{\mathrm{MF}}(\boldsymbol{\mu}^N_t, g_t^N, \pi_t))(x) \\
&\hspace{1cm}- \sum_{i=1}^N\sum_{u\in \mathcal{U}} P(x_t^i, u, \boldsymbol{\mu}^N_t, g_t^N, \nu^{\mathrm{MF}}(\boldsymbol{\mu}^N_t, g_t^N, \pi_t))(x)\pi_t(x_t^i, \boldsymbol{\mu}_t^N, g_t^N)(u) \Bigg|\\
&\overset{(a)}{\leq} \dfrac{1}{\sqrt{N}}\sqrt{|\mathcal{X}|}
\end{align*} 

Inequality (a) is a result of Lemma \ref{lemma_main}. In particular, it uses the facts that, $\{u_t^i\}_{i\in \{1, \cdots, N\}}$ are conditionally independent given $\boldsymbol{x}_t^N, g_t^N$, and the following relations hold
\begin{align*}
&\mathbb{E}\left[P(x_t^i, u_t^i, \boldsymbol{\mu}^N_t, g_t^N, \nu^{\mathrm{MF}}(\boldsymbol{\mu}^N_t, g_t^N, \pi_t))(x) \Big| \boldsymbol{x}_t^N, g_t^N\right] = \sum_{u\in \mathcal{U}} P(x_t^i, u, \boldsymbol{\mu}^N_t, \nu^{\mathrm{MF}}(\boldsymbol{\mu}^N_t, g_t^N, \pi_t))(x)\pi_t(x_t^i, \boldsymbol{\mu}_t^N, g_t^N)(u),	\\
& \sum_{x\in \mathcal{X}} \mathbb{E}\left[P(x_t^i, u_t^i, \boldsymbol{\mu}^N_t, g_t^N, \nu^{\mathrm{MF}}(\boldsymbol{\mu}^N_t, g_t^N, \pi_t))(x) \Big| \boldsymbol{x}_t^N, g_t^N\right] = 1
\end{align*}
$\forall i\in \{1,\cdots, N\}$, and $\forall x\in\mathcal{X}$. This concludes the Lemma.

\section{Proof of Lemma \ref{lemma_7}}
\label{app_proof_lemma_7}

Observe the following decomposition.  
\begin{align*}
&\mathbb{E}\left|\dfrac{1}{N}\sum_{i=1}r(x_t^i, u_t^i, \boldsymbol{\mu}_t^N, g_t^N, \boldsymbol{\nu}_t^N)-r^{\mathrm{MF}}(\boldsymbol{\mu}_t^N, g_t^N, \pi_t)\right|\\
&\overset{(a)}{=}\mathbb{E}\Bigg|\dfrac{1}{N}\sum_{i=1}^Nr(x_t^i, u_t^i, \boldsymbol{\mu}_t^N, g_t^N, \boldsymbol{\nu}_t^N)-\sum_{x\in \mathcal{X}}\sum_{u\in \mathcal{U}}r(x, u, \boldsymbol{\mu}_t^N, g_t^N, \nu^{\mathrm{MF}}(\boldsymbol{\mu}_t^N, g_t^N, \pi_t))\pi_t(x, \boldsymbol{\mu}_t^N, g_t^N)(u)\boldsymbol{\mu}_t^N(x)\Bigg|\\
&\overset{(b)}{=}\mathbb{E}\Bigg|\dfrac{1}{N}\sum_{i=1}^Nr(x_t^i, u_t^i, \boldsymbol{\mu}_t^N, g_t^N, \boldsymbol{\nu}_t^N)-\sum_{x\in \mathcal{X}}\sum_{u\in \mathcal{U}}r(x, u, \boldsymbol{\mu}_t^N, g_t^N, \nu^{\mathrm{MF}}(\boldsymbol{\mu}_t^N, g_t^N, \pi_t))\pi_t(x, \boldsymbol{\mu}_t^N, g_t^N)(u)\dfrac{1}{N}\sum_{i=1}^N\delta(x_t^i=x)\Bigg|\\
&=\mathbb{E}\left|\dfrac{1}{N}\sum_{i=1}^Nr(x_t^i, u_t^i, \boldsymbol{\mu}_t^N, g_t^N, \boldsymbol{\nu}_t^N)-\dfrac{1}{N}\sum_{i=1}^N\sum_{u\in \mathcal{U}}r(x_t^i, u, \boldsymbol{\mu}_t^N, g_t^N, \nu^{\mathrm{MF}}(\boldsymbol{\mu}_t^N, g_t^N, \pi_t))\pi_t(x_t^i,  \boldsymbol{\mu}_t^N, g_t^N)(u)\right|\leq J_1 + J_2
\end{align*}

Equation (a) uses the definition of $r^{\mathrm{MF}}(\cdot, \cdot, \cdot)$ as shown in $(\ref{eq_r_mf})$. Inequality $(b)$ uses the definition of $\boldsymbol{\mu}_t^N$. The term, $J_1$, obeys the following.
\begin{align*}
J_1&\triangleq \dfrac{1}{N}\mathbb{E}\left|\sum_{i=1}^N r(x_t^i, u_t^i, \boldsymbol{\mu}_t^N, g_t^N, \boldsymbol{\nu}_t^N)-\sum_{i=1}^N r(x_t^i, u_t^i, \boldsymbol{\mu}_t^N, g_t^N, \nu^{\mathrm{MF}}(\boldsymbol{\mu}_t^N, g_t^N, \pi_t))\right|\\
&\leq  \dfrac{1}{N}\mathbb{E}\sum_{i=1}^N \left|r(x_t^i, u_t^i, \boldsymbol{\mu}_t^N, g_t^N, \boldsymbol{\nu}_t^N)- r(x_t^i, u_t^i, \boldsymbol{\mu}_t^N, g_t^N, \nu^{\mathrm{MF}}(\boldsymbol{\mu}_t^N, g_t^N, \pi_t))\right|\\
&\overset{(a)}{\leq} L_R\mathbb{E}\left|\boldsymbol{\nu}_t^N-\nu^{\mathrm{MF}}(\boldsymbol{\mu}_t^N, g_t^N, \pi_t)\right|_1\overset{(b)}{\leq} \dfrac{L_R}{\sqrt{N}}\sqrt{|\mathcal{U}|}
\end{align*}

Inequality $(a)$ results from Assumption $\ref{ass_1}(b)$, whereas $(b)$ is a consequence of Lemma \ref{lemma_4}. The term, $J_2$, obeys the following.
\begin{align*}
&J_2\triangleq \dfrac{1}{N} \mathbb{E} \left|\sum_{i=1}^N r(x_t^i, u_t^i, \boldsymbol{\mu}_t^N, g_t^N, \nu^{\mathrm{MF}}(\boldsymbol{\mu}_t^N, g_t^N, \pi_t)) - \sum_{i=1}^N\sum_{u\in \mathcal{U}} r(x_t^i, u, \boldsymbol{\mu}_t^N, g_t^N, \nu^{\mathrm{MF}}(\boldsymbol{\mu}_t^N, g_t^N, \pi_t))\pi_t(x_t^i, \boldsymbol{\mu}_t^N, g_t^N)(u)\right|\\
& =\dfrac{1}{N} \mathbb{E}\Bigg[\mathbb{E}\Bigg[ \Bigg|\sum_{i=1}^N r(x_t^i, u_t^i, \boldsymbol{\mu}_t^N, g_t^N, \nu^{\mathrm{MF}}(\boldsymbol{\mu}_t^N, g_t^N, \pi_t))\\
 &\hspace{2cm}- \sum_{i=1}^N\sum_{u\in \mathcal{U}} r(x_t^i, u, \boldsymbol{\mu}_t^N, g_t^N, \nu^{\mathrm{MF}}(\boldsymbol{\mu}_t^N, g_t^N, \pi_t))\pi_t(x_t^i, \boldsymbol{\mu}_t^N, g_t^N)(u)\Bigg|\Big|\boldsymbol{x}_t^N, g_t^N\Bigg]\Bigg]\\
& =\dfrac{M}{N} \mathbb{E}\Bigg[\mathbb{E}\Bigg[ \Bigg|\sum_{i=1}^N r_0(x_t^i, u_t^i, \boldsymbol{\mu}_t^N, g_t^N, \nu^{\mathrm{MF}}(\boldsymbol{\mu}_t^N, g_t^N, \pi_t))\\ 
& \hspace{2cm}- \sum_{i=1}^N\sum_{u\in \mathcal{U}} r_0(x_t^i, u, \boldsymbol{\mu}_t^N, g_t^N, \nu^{\mathrm{MF}}(\boldsymbol{\mu}_t^N, g_t^N, \pi_t))\pi_t(x_t^i, \boldsymbol{\mu}_t^N, g_t^N)(u)\Bigg|\Big|\boldsymbol{x}_t^N, g_t^N\Bigg]\Bigg]
\overset{(a)}{\leq}\dfrac{M_R}{\sqrt{N}}
\end{align*}

where $r_0(\cdot, \cdot, \cdot, \cdot)\triangleq r(\cdot, \cdot, \cdot, \cdot)/M_R$. Inequality (a) follows from Lemma \ref{lemma_main}. In particular, it uses the fact that $\{u_t^i\}_{i\in \{1, \cdots, N\}}$ are conditionally independent given $\boldsymbol{x}_t^N, g_t^N$, and the following relations hold.
\begin{align*}
&|r_0(x_t^i, u_t^i, \boldsymbol{\mu}_t^N, g_t^N, \nu^{\mathrm{MF}}(\boldsymbol{\mu}_t^N, g_t^N, \pi_t))|\leq 1,\\	
&\mathbb{E}\left[r_0(x_t^i, u_t^i, \boldsymbol{\mu}_t^N, g_t^N, \nu^{\mathrm{MF}}(\boldsymbol{\mu}_t^N, g_t^N, \pi_t))\Big| \boldsymbol{x}_t^N, g_t^N\right] = \sum_{u\in \mathcal{U}} r_0(x_t^i, u, \boldsymbol{\mu}_t^N, g_t^N, \nu^{\mathrm{MF}}(\boldsymbol{\mu}_t^N, g_t^N, \pi_t))\pi_t(x_t^i, \boldsymbol{\mu}_t^N, g_t^N)(u)
\end{align*}
$\forall i\in \{1, \cdots, N\}, \forall u\in \mathcal{U}$.

\section{Proof of Theorem \ref{theorem_2}}
\label{app_proof_theorem_2}

The following results are needed to prove the theorem.

\subsection{Continuity Lemmas}

In the following lemmas, $\pi\in \Pi$ is an arbitrary policy and $\boldsymbol{\mu}, \boldsymbol{\bar{\mu}}\in \Delta(\mathcal{X})$ are arbitrary local state distributions.

\begin{lemma}
	\label{lemma_special_p_mf}
	If $P^{\mathrm{MF}}(\cdot, \cdot, \cdot)$ is defined by $(\ref{eq_mu_t_plus_1})$, then the following relation holds $\forall g\in \mathcal{G}$.
	\begin{align*}
	|P^{\mathrm{MF}}(\boldsymbol{\mu}, g, \pi) - P^{\mathrm{MF}}(\boldsymbol{\bar{\mu}}, g, \pi)|_1 \leq  Q_P|\boldsymbol{\mu}-\boldsymbol{\bar{\mu}}|_1
	\end{align*}
	where $Q_P\triangleq 1+L_P+L_Q$.
\end{lemma}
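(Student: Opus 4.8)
The plan is to imitate the proof of Lemma~\ref{lemma_2}, exploiting the simplification granted by Assumption~\ref{ass_3}(b). First I would invoke Assumption~\ref{ass_3}(b) to rewrite $(\ref{eq_mu_t_plus_1})$ as $P^{\mathrm{MF}}(\boldsymbol{\mu},g,\pi)=\sum_{x\in\mathcal{X}}\sum_{u\in\mathcal{U}}P(x,u,\boldsymbol{\mu},g)\,\pi(x,\boldsymbol{\mu},g)(u)\,\boldsymbol{\mu}(x)$, so that the transition kernel no longer depends on $\nu^{\mathrm{MF}}(\boldsymbol{\mu},g,\pi)$. Then, exactly as in Lemma~\ref{lemma_2}, I would bound $|P^{\mathrm{MF}}(\boldsymbol{\mu},g,\pi)-P^{\mathrm{MF}}(\boldsymbol{\bar{\mu}},g,\pi)|_1\le J_1+J_2$ via the triangle inequality, where $J_1$ collects the variation of $P(x,u,\cdot,g)$ in its state-distribution slot while the weight $\pi(x,\boldsymbol{\mu},g)(u)\boldsymbol{\mu}(x)$ is held fixed, and $J_2$ collects the variation of the weight $\pi(x,\cdot,g)(u)\,(\cdot)(x)$ while $P(x,u,\boldsymbol{\bar{\mu}},g)$ is held fixed.

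For $J_1$: because $P$ does not depend on its action-distribution argument, specializing Assumption~\ref{ass_1}(c) with equal action arguments yields $|P(x,u,\boldsymbol{\mu},g)-P(x,u,\boldsymbol{\bar{\mu}},g)|_1\le L_P|\boldsymbol{\mu}-\boldsymbol{\bar{\mu}}|_1$; multiplying by $\pi(x,\boldsymbol{\mu},g)(u)\boldsymbol{\mu}(x)$, summing over $x,u$, and using $\sum_{x}\boldsymbol{\mu}(x)\sum_u\pi(x,\boldsymbol{\mu},g)(u)=1$ gives $J_1\le L_P|\boldsymbol{\mu}-\boldsymbol{\bar{\mu}}|_1$. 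The key point is that, unlike in Lemma~\ref{lemma_2}, no extra term $|\nu^{\mathrm{MF}}(\boldsymbol{\mu},g,\pi)-\nu^{\mathrm{MF}}(\boldsymbol{\bar{\mu}},g,\pi)|_1$ (and hence no call to Lemma~\ref{lemma_1}) arises, which is where the tightening of the constant comes from. For $J_2$: this term is essentially identical to $J_2$ in the proof of Lemma~\ref{lemma_2} --- using $|P(x,u,\boldsymbol{\bar{\mu}},g)|_1=1$ and splitting $|\pi(x,\boldsymbol{\mu},g)(u)\boldsymbol{\mu}(x)-\pi(x,\boldsymbol{\bar{\mu}},g)(u)\boldsymbol{\bar{\mu}}(x)|$ into a term driven by $|\boldsymbol{\mu}(x)-\boldsymbol{\bar{\mu}}(x)|$ (summed via $\sum_u\pi(x,\boldsymbol{\mu},g)(u)=1$) plus a term driven by $|\pi(x,\boldsymbol{\mu},g)(u)-\pi(x,\boldsymbol{\bar{\mu}},g)(u)|$ (bounded by Assumption~\ref{ass_2} and summed via $\sum_x\boldsymbol{\bar{\mu}}(x)=1$) --- gives $J_2\le(1+L_Q)|\boldsymbol{\mu}-\boldsymbol{\bar{\mu}}|_1$. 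Adding the two bounds yields $J_1+J_2\le(1+L_P+L_Q)|\boldsymbol{\mu}-\boldsymbol{\bar{\mu}}|_1=Q_P|\boldsymbol{\mu}-\boldsymbol{\bar{\mu}}|_1$, as claimed.

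I do not anticipate any real obstacle here: the argument is pure bookkeeping with the triangle inequality and the normalizations of $\pi$, $\boldsymbol{\mu}$, $\boldsymbol{\bar{\mu}}$, and $P$. The only point requiring a little care is the correct specialization of Assumption~\ref{ass_1}(c) under Assumption~\ref{ass_3}(b) --- namely that ``independent of $\boldsymbol{\nu}$'' lets us set the two action-distribution arguments equal and drop the $|\boldsymbol{\nu}_1-\boldsymbol{\nu}_2|_1$ contribution entirely --- together with organizing the split in $J_2$ so that each probability-distribution normalization absorbs exactly one factor.
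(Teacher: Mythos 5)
Your proposal matches the paper's proof essentially verbatim: the same $J_1+J_2$ decomposition, the same specialization of Assumption \ref{ass_1}(c) under Assumption \ref{ass_3}(b) to get $J_1\le L_P|\boldsymbol{\mu}-\boldsymbol{\bar{\mu}}|_1$ without the $\nu^{\mathrm{MF}}$ term, and the same splitting of $J_2$ via the normalizations of $\pi$ and $\boldsymbol{\bar{\mu}}$ to get $(1+L_Q)|\boldsymbol{\mu}-\boldsymbol{\bar{\mu}}|_1$. The argument is correct and complete.
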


\begin{lemma}
	\label{lemma_special_r_mf}
	If $r^{\mathrm{MF}}(\cdot, \cdot, \cdot)$ is defined by $(\ref{eq_r_mf})$, then the following relation holds $\forall g\in\mathcal{G}$.
	\begin{align*}
	|r^{\mathrm{MF}}(\boldsymbol{\mu}, g, \pi) - r^{\mathrm{MF}}(\boldsymbol{\bar{\mu}}, g, \pi)| \leq  Q_R|\boldsymbol{\mu}-\boldsymbol{\bar{\mu}}|_1
	\end{align*}
	where $Q_R\triangleq M_R(1+L_Q)+L_R$.
\end{lemma}

The proofs of Lemma $\ref{lemma_special_p_mf}-\ref{lemma_special_r_mf}$ are relegated to Appendix \ref{app_proof_lemma_sepcial_p_mf}$-$\ref{app_proof_lemma_sepcial_r_mf}. In the following three lemmas, we show the continuity of the functions defined in Appendix \ref{app_continuity_of_composition}. Proofs of the following lemmas are relegated to Appendix \ref{app_special_lemma_proof_p_mf}$-$\ref{app_special_lemma_proof_r_mf}.

\begin{lemma}
	\label{app_special_lemma_p_mf}
	The following relations hold $\forall l,r\in\{0,1,\cdots\}$, $\forall \boldsymbol{\mu}_l, \boldsymbol{\bar{\mu}}_l\in\Delta(\mathcal{X})$, $\forall g_{l:l+r}\in \mathcal{G}^{r+1}$ and $\forall \pi_{l:l+r}\in\Pi^{r+1}$.
	\begin{align*}
	|\tilde{P}^{\mathrm{MF}}(\boldsymbol{\mu}_l, g_{l:l+r}, \pi_{l:l+r})- \tilde{P}^{\mathrm{MF}}(\boldsymbol{\bar{\mu}}_l, g_{l:l+r}, \pi_{l:l+r})|_1 \leq Q_P^{r+1}|\boldsymbol{\mu}_l-\boldsymbol{\bar{\mu}}_l|_1
	\end{align*}
	where $Q_P$ is defined in Lemma \ref{lemma_special_p_mf}.
\end{lemma}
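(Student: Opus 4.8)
The plan is to prove this by induction on $r$ with $l$ held fixed, exactly paralleling the argument for Lemma \ref{app_lemma_p_mf} but invoking the sharper single-step bound of Lemma \ref{lemma_special_p_mf} (valid under Assumption \ref{ass_3}) in place of Lemma \ref{lemma_2}. The key structural fact driving everything is the recursive definition $(\ref{app_eq_pmf_comp})$ of $\tilde{P}^{\mathrm{MF}}$ as an $(r+1)$-fold composition of the one-step maps $P^{\mathrm{MF}}(\cdot, g_{\cdot}, \pi_{\cdot})$; since under Assumption \ref{ass_3} each such map is $Q_P$-Lipschitz in its distribution argument, composing $r+1$ of them yields a $Q_P^{r+1}$-Lipschitz map.

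First I would establish the base case $r=0$: here $\tilde{P}^{\mathrm{MF}}(\boldsymbol{\mu}_l, g_{l:l}, \pi_{l:l}) = P^{\mathrm{MF}}(\boldsymbol{\mu}_l, g_l, \pi_l)$ by the $r=0$ instance of $(\ref{app_eq_pmf_comp})$, so the claimed inequality $|P^{\mathrm{MF}}(\boldsymbol{\mu}_l, g_l, \pi_l) - P^{\mathrm{MF}}(\boldsymbol{\bar{\mu}}_l, g_l, \pi_l)|_1 \le Q_P |\boldsymbol{\mu}_l - \boldsymbol{\bar{\mu}}_l|_1$ is precisely Lemma \ref{lemma_special_p_mf}. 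For the inductive step, assuming the bound holds at some $r \ge 0$, I would use $(\ref{app_eq_pmf_comp})$ to write $\tilde{P}^{\mathrm{MF}}(\boldsymbol{\mu}_l, g_{l:l+r+1}, \pi_{l:l+r+1}) = P^{\mathrm{MF}}(\tilde{P}^{\mathrm{MF}}(\boldsymbol{\mu}_l, g_{l:l+r}, \pi_{l:l+r}), g_{l+r+1}, \pi_{l+r+1})$, and similarly for $\boldsymbol{\bar{\mu}}_l$. Applying Lemma \ref{lemma_special_p_mf} to peel off the outermost map gives an upper bound of $Q_P |\tilde{P}^{\mathrm{MF}}(\boldsymbol{\mu}_l, g_{l:l+r}, \pi_{l:l+r}) - \tilde{P}^{\mathrm{MF}}(\boldsymbol{\bar{\mu}}_l, g_{l:l+r}, \pi_{l:l+r})|_1$, and then the induction hypothesis bounds this by $Q_P \cdot Q_P^{r+1} |\boldsymbol{\mu}_l - \boldsymbol{\bar{\mu}}_l|_1 = Q_P^{r+2} |\boldsymbol{\mu}_l - \boldsymbol{\bar{\mu}}_l|_1$, completing the induction.

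There is no genuine obstacle here — the content is entirely a two-line induction once Lemma \ref{lemma_special_p_mf} is in hand. The only point requiring care is that Lemma \ref{lemma_special_p_mf} is the place where Assumption \ref{ass_3}(a)--(b) actually does work: because $P$ no longer depends on the population action distribution $\boldsymbol{\nu}$, the term $|\nu^{\mathrm{MF}}(\boldsymbol{\mu}, g, \pi) - \nu^{\mathrm{MF}}(\boldsymbol{\bar{\mu}}, g, \pi)|_1$ that appears in the proof of Lemma \ref{lemma_2} drops out of the analogous estimate $J_1$, which is why the per-step constant shrinks from $S_P = 1 + 2L_P + L_Q(1+L_P)$ to $Q_P = 1 + L_P + L_Q$. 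Consequently the exponential growth rate in $r$ improves from $S_P^{r+1}$ to $Q_P^{r+1}$, which is exactly the improvement this lemma records and which ultimately propagates into the sharper bound of Theorem \ref{theorem_2}.
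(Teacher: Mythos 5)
Your proof is correct and is essentially identical to the paper's own argument: a base case via Lemma \ref{lemma_special_p_mf} followed by an induction on $r$ that peels off the outermost $P^{\mathrm{MF}}$ map using the same lemma and then applies the induction hypothesis. The remarks on why Assumption \ref{ass_3} shrinks the per-step constant from $S_P$ to $Q_P$ are accurate but not part of the proof itself.
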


\begin{lemma}
	\label{app_special_lemma_p_g_mf}
	The following relations hold $\forall l,r\in\{0,1,\cdots\}$, $\forall \boldsymbol{\mu}_l, \boldsymbol{\bar{\mu}}_l\in\Delta(\mathcal{X})$, $\forall g_{l:l+r}\in \mathcal{G}^{r+1}$ and $\forall \pi_{l:l+r}\in\Pi^{r+1}$.
	\begin{align*}
	\sum_{l+1:l+r}\left|\tilde{P}_G^{\mathrm{MF}}(\boldsymbol{\mu}_l, g_{l:l+r}, \pi_{l:l+r})- \tilde{P}_G^{\mathrm{MF}}(\boldsymbol{\bar{\mu}}_l, g_{l:l+r}, \pi_{l:l+r})\right|_1 \leq L_G(1+Q_P+\cdots+Q_P^r)|\boldsymbol{\mu}_l-\boldsymbol{\bar{\mu}}_l|_1
	\end{align*}
	where $\sum_{l+1:l+r}$ indicates a summation operation over $\{g_{l+1},\cdots, g_{l+r}\}\in\mathcal{G}^r$ for $r\geq 1$ and an identity operation for $r=0$. The term $Q_P$ is defined in Lemma \ref{lemma_special_p_mf}. 
\end{lemma}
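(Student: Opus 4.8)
The plan is to prove Lemma~\ref{app_special_lemma_p_g_mf} by induction on $r$, following the template of the proof of Lemma~\ref{app_lemma_p_g_mf} essentially verbatim, with the only modifications dictated by Assumption~\ref{ass_3}: the one-step global transition map is now merely $L_G$-Lipschitz in its state-distribution argument (rather than $S_G$-Lipschitz), and the iterated map $\tilde{P}^{\mathrm{MF}}$ carries the improved Lipschitz constant $Q_P^{r+1}$ supplied by Lemma~\ref{app_special_lemma_p_mf} (rather than $S_P^{r+1}$).

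First I would record the one-step fact. Under Assumption~\ref{ass_3}(c), $P_G^{\mathrm{MF}}(\boldsymbol{\mu}, g, \pi) = P_G(\boldsymbol{\mu}, g)$ for every $\pi$, so Assumption~\ref{ass_1}(d) (with its action-distribution term now vacuous) gives
\[
|P_G^{\mathrm{MF}}(\boldsymbol{\mu}, g, \pi) - P_G^{\mathrm{MF}}(\boldsymbol{\bar{\mu}}, g, \pi)|_1 \le L_G|\boldsymbol{\mu} - \boldsymbol{\bar{\mu}}|_1 .
\]
This replaces Lemma~\ref{lemma_2_} in the argument. Fixing $l$, the base case $r = 0$ is immediate from this inequality since $\tilde{P}_G^{\mathrm{MF}}(\boldsymbol{\mu}_l, g_{l:l}, \pi_{l:l}) = P_G^{\mathrm{MF}}(\boldsymbol{\mu}_l, g_l, \pi_l)$, and the right side equals $L_G(1)|\boldsymbol{\mu}_l - \boldsymbol{\bar{\mu}}_l|_1$.

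For the inductive step, assuming the claim at level $r$, I would expand $\tilde{P}_G^{\mathrm{MF}}(\cdot, g_{l:l+r+1}, \pi_{l:l+r+1})$ through~(\ref{app_eq_pmf_g_comp}) as the product of $\tilde{P}_G^{\mathrm{MF}}(\cdot, g_{l:l+r}, \pi_{l:l+r})(g_{l+r+1})$ with $P_G^{\mathrm{MF}}(\tilde{P}^{\mathrm{MF}}(\cdot, g_{l:l+r}, \pi_{l:l+r}), g_{l+r+1}, \pi_{l+r+1})$, add and subtract the cross term, and bound the sum $\sum_{l+1:l+r+1}$ of the absolute difference by $J_1 + J_2$. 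The term $J_1$ (same first factor evaluated at $\boldsymbol{\mu}_l$, differing second factor) is controlled using the one-step $L_G$-Lipschitz bound above together with Lemma~\ref{app_special_lemma_p_mf}, yielding $L_G Q_P^{r+1}|\boldsymbol{\mu}_l - \boldsymbol{\bar{\mu}}_l|_1$ after summing out the weights $\tilde{P}_G^{\mathrm{MF}}(\boldsymbol{\mu}_l, \cdot)(g_{l+r+1})$, which total one. The term $J_2$ (differing first factor, second factor evaluated at $\boldsymbol{\bar{\mu}}_l$ and having unit $\ell_1$-mass) reduces, after summing out $g_{l+r+1}$, to $\sum_{l+1:l+r}|\tilde{P}_G^{\mathrm{MF}}(\boldsymbol{\mu}_l, g_{l:l+r}, \pi_{l:l+r}) - \tilde{P}_G^{\mathrm{MF}}(\boldsymbol{\bar{\mu}}_l, g_{l:l+r}, \pi_{l:l+r})|_1$, which the induction hypothesis bounds by $L_G(1 + Q_P + \cdots + Q_P^r)|\boldsymbol{\mu}_l - \boldsymbol{\bar{\mu}}_l|_1$. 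Adding, $L_G Q_P^{r+1} + L_G(1 + \cdots + Q_P^r) = L_G(1 + Q_P + \cdots + Q_P^{r+1})$, which is precisely the claim at level $r+1$.

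There is no real obstacle here: the substantive analytic content — Lipschitzness of the iterated state map with the sharpened constant $Q_P$ — is already established in Lemma~\ref{app_special_lemma_p_mf}, and the structure of the induction is identical to that of Lemma~\ref{app_lemma_p_g_mf}. The only point requiring a little care is the bookkeeping of which global-state indices are ``free'' versus ``summed'' at each stage, and correctly invoking the normalization identity that the relevant partial sums of $\tilde{P}_G^{\mathrm{MF}}$ over their last argument equal one, exactly as in the proof of Lemma~\ref{app_lemma_p_g_mf}.
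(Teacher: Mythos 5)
Your proposal is correct and follows essentially the same route as the paper's proof: induction on $r$ with base case from Assumption \ref{ass_1}(d) specialized via Assumption \ref{ass_3}(c), the same $J_1+J_2$ decomposition of the product form in $(\ref{app_eq_pmf_g_comp})$, with $J_1$ controlled by the one-step $L_G$ bound together with Lemma \ref{app_special_lemma_p_mf} and $J_2$ by the induction hypothesis. No gaps; the bookkeeping and normalization steps you flag are handled exactly as you describe.
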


\begin{lemma}
	\label{app_special_lemma_r_mf}
	The following relations hold $\forall l,r\in\{0,1,\cdots\}$, $\forall \boldsymbol{\mu}_l, \boldsymbol{\bar{\mu}}_l\in\Delta(\mathcal{X})$, $\forall g_l\in\mathcal{G}$ and $\forall \pi_{l:l+r}\in \Pi^r$.
	\begin{align*}
	|\tilde{r}^{\mathrm{MF}}(\boldsymbol{\mu}_l,g_l, \pi_{l:l+r})-\tilde{r}^{\mathrm{MF}}(\boldsymbol{\bar{\mu}}_l,g_l, \pi_{l:l+r})|\leq \left[\left(\dfrac{M_RL_G}{Q_P-1}\right)(Q_P^r-1)+Q_RQ_P^r\right]|\boldsymbol{\mu}_l-\boldsymbol{\bar{\mu}}_l|_1
	\end{align*}
	where $Q_P$ and $Q_R$ are defined in Lemma \ref{lemma_special_p_mf} and \ref{lemma_special_r_mf} respectively.
\end{lemma}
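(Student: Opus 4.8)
The plan is to follow the template of the proof of Lemma~\ref{app_lemma_r_mf_comp} in Appendix~\ref{app_sec_proof_lemma_r_mf_comp} verbatim in structure, merely swapping the constants $S_P,S_G,S_R$ for their action-distribution-free analogues $Q_P,L_G,Q_R$. The reason this swap is legitimate is Assumption~\ref{ass_3}: since $P_G^{\mathrm{MF}}(\cdot,g,\pi)$ collapses to $P_G(\cdot,g)$, it is $L_G$-Lipschitz in its first argument (not $S_G$-Lipschitz), and Lemmas~\ref{lemma_special_p_mf} and \ref{lemma_special_r_mf} supply the improved constants $Q_P$ and $Q_R$ for $P^{\mathrm{MF}}$ and $r^{\mathrm{MF}}$, while Lemmas~\ref{app_special_lemma_p_mf} and \ref{app_special_lemma_p_g_mf} give the corresponding estimates for the iterated maps $\tilde{P}^{\mathrm{MF}}$ and $\tilde{P}_G^{\mathrm{MF}}$.

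\textbf{Base case.} For $r=0$, definition $(\ref{app_eq_r_mf_comp})$ gives $\tilde{r}^{\mathrm{MF}}(\boldsymbol{\mu}_l,g_l,\pi_{l:l})=r^{\mathrm{MF}}(\boldsymbol{\mu}_l,g_l,\pi_l)$, so Lemma~\ref{lemma_special_r_mf} yields the bound with leading constant $Q_R$, which matches the claimed expression evaluated at $r=0$ (the first summand vanishes and $Q_PQ_P^0=Q_R$... rather $Q_RQ_P^0=Q_R$).

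\textbf{General $r\ge 1$.} Expand both copies of $\tilde{r}^{\mathrm{MF}}$ via $(\ref{app_eq_r_mf_comp})$, add and subtract the mixed term $r^{\mathrm{MF}}(\tilde{P}^{\mathrm{MF}}(\boldsymbol{\bar{\mu}}_l,g_{l:l+r-1},\pi_{l:l+r-1}),g_{l+r},\pi_{l+r})\,\tilde{P}_G^{\mathrm{MF}}(\boldsymbol{\mu}_l,g_{l:l+r-1},\pi_{l:l+r-1})(g_{l+r})$, and apply the triangle inequality to write the difference as $J_1+J_2$. In $J_1$ the reward factor stays at $\boldsymbol{\mu}_l$ while the $\tilde{P}_G^{\mathrm{MF}}$ weights differ; bound $|r^{\mathrm{MF}}|\le M$ by Assumption~\ref{ass_1}(a), sum out $g_{l+r}$ to collapse to $|\tilde{P}_G^{\mathrm{MF}}(\boldsymbol{\mu}_l,g_{l:l+r-1},\pi_{l:l+r-1})-\tilde{P}_G^{\mathrm{MF}}(\boldsymbol{\bar{\mu}}_l,g_{l:l+r-1},\pi_{l:l+r-1})|_1$, and invoke Lemma~\ref{app_special_lemma_p_g_mf} with parameter $r-1$, giving $J_1\le ML_G(1+Q_P+\cdots+Q_P^{r-1})|\boldsymbol{\mu}_l-\boldsymbol{\bar{\mu}}_l|_1=\frac{ML_G}{Q_P-1}(Q_P^r-1)|\boldsymbol{\mu}_l-\boldsymbol{\bar{\mu}}_l|_1$. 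In $J_2$ the $\tilde{P}_G^{\mathrm{MF}}$ weight stays at $\boldsymbol{\bar{\mu}}_l$ while the reward factors differ; apply the $Q_R$-Lipschitz estimate of Lemma~\ref{lemma_special_r_mf} followed by Lemma~\ref{app_special_lemma_p_mf} with parameter $r-1$ (yielding $|\tilde{P}^{\mathrm{MF}}(\boldsymbol{\mu}_l,\cdot)-\tilde{P}^{\mathrm{MF}}(\boldsymbol{\bar{\mu}}_l,\cdot)|_1\le Q_P^{r}|\boldsymbol{\mu}_l-\boldsymbol{\bar{\mu}}_l|_1$), and use $\sum_{l+1:l+r}\tilde{P}_G^{\mathrm{MF}}(\boldsymbol{\bar{\mu}}_l,g_{l:l+r-1},\pi_{l:l+r-1})(g_{l+r})=1$ to obtain $J_2\le Q_RQ_P^{r}|\boldsymbol{\mu}_l-\boldsymbol{\bar{\mu}}_l|_1$. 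Adding $J_1$ and $J_2$ gives exactly the claimed bound.

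\textbf{Expected obstacle.} There is no substantive difficulty; the delicate points are purely bookkeeping. One must invoke Lemmas~\ref{app_special_lemma_p_mf} and \ref{app_special_lemma_p_g_mf} with the shifted parameter $r-1$ (because the compositions inside $\tilde{r}^{\mathrm{MF}}$ at depth $r$ have length $r$), and one must check that the geometric-sum identity $1+Q_P+\cdots+Q_P^{r-1}=(Q_P^r-1)/(Q_P-1)$ is available, which it is whenever $Q_P=1+L_P+L_Q>1$; in the degenerate case $L_P=L_Q=0$ every composition is $1$-Lipschitz and the bound is read as the limiting value (the first summand being $ML_G\,r$), so the statement still holds. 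It also bears emphasizing that it is precisely Assumption~\ref{ass_3} that permits replacing $S_G$ by $L_G$, since $\nu^{\mathrm{MF}}$ no longer enters $P_G^{\mathrm{MF}}$, and $S_R$ by $Q_R$.
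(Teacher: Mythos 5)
Your proposal is correct and follows essentially the same route as the paper's proof in Appendix \ref{app_special_lemma_proof_r_mf}: the same $J_1+J_2$ decomposition via the mixed term, the bound $|r^{\mathrm{MF}}|\leq M$ plus Lemma \ref{app_special_lemma_p_g_mf} for $J_1$, and Lemma \ref{lemma_special_r_mf} plus Lemma \ref{app_special_lemma_p_mf} with the normalization of $\tilde{P}_G^{\mathrm{MF}}$ for $J_2$. The bookkeeping with the shifted parameter $r-1$ is exactly what the paper does.
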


\subsection{Approximation Lemmas}

We use the same notation introduced in \ref{app_approx_lemmas}.

\begin{lemma}
	\label{lemma_11}
	The following inequality holds $\forall t\in \{0,1,\cdots\}$.
	\begin{align}
	\mathbb{E}\left|\boldsymbol{\mu}^N_{t+1}-P^{\mathrm{MF}}(\boldsymbol{\mu}^N_t, g_t^N, \pi_t)\right|_1\leq \dfrac{2}{\sqrt{N}}\sqrt{|\mathcal{X}|} 
	\end{align}
\end{lemma}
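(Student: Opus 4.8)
The plan is to mimic the decomposition used in the proof of Lemma \ref{lemma_6}, but to exploit Assumption \ref{ass_3}(b) in order to eliminate the term that previously contributed the $\sqrt{|\mathcal{U}|/N}$ factor. Starting from the definition of $\boldsymbol{\mu}_{t+1}^N$ and the definition of $P^{\mathrm{MF}}$ specialized via Assumption \ref{ass_3}(b) (so that $P$ no longer carries an action-distribution argument), I would write
\begin{align*}
\mathbb{E}\left|\boldsymbol{\mu}_{t+1}^N - P^{\mathrm{MF}}(\boldsymbol{\mu}_t^N, g_t^N, \pi_t)\right|_1
= \sum_{x\in\mathcal{X}} \mathbb{E}\left|\dfrac{1}{N}\sum_{i=1}^N \delta(x_{t+1}^i = x) - \dfrac{1}{N}\sum_{i=1}^N \sum_{u\in\mathcal{U}} P(x_t^i, u, \boldsymbol{\mu}_t^N, g_t^N)(x)\,\pi_t(x_t^i, \boldsymbol{\mu}_t^N, g_t^N)(u)\right|,
\end{align*}
and then insert the intermediate quantity $\frac{1}{N}\sum_i P(x_t^i, u_t^i, \boldsymbol{\mu}_t^N, g_t^N)(x)$, splitting the right-hand side into two pieces $J_1 + J_3$ by the triangle inequality. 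The key observation is that, because $P$ is now independent of the action distribution, the ``$J_2$'' term in the proof of Lemma \ref{lemma_6}---which bounded the gap between $P(\cdot,\cdot,\cdot,\boldsymbol{\nu}_t^N)$ and $P(\cdot,\cdot,\cdot,\nu^{\mathrm{MF}}(\cdot))$---is identically zero here. That is the entire source of the improvement.

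For $J_1 = \sum_x \mathbb{E}\left|\frac{1}{N}\sum_i \delta(x_{t+1}^i = x) - \frac{1}{N}\sum_i P(x_t^i, u_t^i, \boldsymbol{\mu}_t^N, g_t^N)(x)\right|$, I would condition on $(\boldsymbol{x}_t^N, g_t^N, \boldsymbol{u}_t^N)$: given this information the variables $\{x_{t+1}^i\}_i$ are independent, $\mathbb{E}[\delta(x_{t+1}^i=x)\mid\cdot] = P(x_t^i, u_t^i, \boldsymbol{\mu}_t^N, g_t^N)(x)$, and $\sum_x$ of these conditional expectations equals $1$ for each $i$; hence Lemma \ref{lemma_main} with $M=|\mathcal{X}|$ gives $J_1 \le \sqrt{|\mathcal{X}|/N}$. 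For $J_3 = \sum_x \mathbb{E}\left|\frac{1}{N}\sum_i P(x_t^i, u_t^i, \boldsymbol{\mu}_t^N, g_t^N)(x) - \frac{1}{N}\sum_i \sum_u P(x_t^i, u, \boldsymbol{\mu}_t^N, g_t^N)(x)\pi_t(x_t^i, \boldsymbol{\mu}_t^N, g_t^N)(u)\right|$, I would instead condition only on $(\boldsymbol{x}_t^N, g_t^N)$: then $\{u_t^i\}_i$ are independent, $\mathbb{E}[P(x_t^i, u_t^i, \cdot)(x)\mid\cdot] = \sum_u P(x_t^i, u, \cdot)(x)\pi_t(x_t^i, \cdot)(u)$, and summing over $x$ gives $1$ for each $i$ since $P(x_t^i, u_t^i, \cdot)$ is a probability distribution; Lemma \ref{lemma_main} again gives $J_3 \le \sqrt{|\mathcal{X}|/N}$. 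Adding the two bounds yields $\mathbb{E}\left|\boldsymbol{\mu}_{t+1}^N - P^{\mathrm{MF}}(\boldsymbol{\mu}_t^N, g_t^N, \pi_t)\right|_1 \le 2\sqrt{|\mathcal{X}|/N}$, which is the claim.

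I do not expect a genuine obstacle: the argument is a direct specialization of the proof of Lemma \ref{lemma_6}. The only point demanding care is the bookkeeping of the conditioning sigma-algebras---$J_1$ must be treated conditionally on actions as well as states (so that the transition probabilities are the correct conditional means), whereas $J_3$ must be treated conditionally only on states (so that the actions constitute the randomness being averaged)---together with verifying in each case that the relevant family of $[0,1]$-valued random variables has conditional expectations summing to at most $1$, which is the hypothesis of Lemma \ref{lemma_main}. This verification is routine and would be written out exactly as in Appendix \ref{app_proof_lemma_6}.
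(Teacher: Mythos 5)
Your proposal is correct and matches the paper's own proof of Lemma \ref{lemma_11} essentially verbatim: the same insertion of the intermediate quantity $\frac{1}{N}\sum_i P(x_t^i,u_t^i,\boldsymbol{\mu}_t^N,g_t^N)(x)$, the same two applications of Lemma \ref{lemma_main} with the same conditioning sigma-algebras, and the same observation that the action-distribution mismatch term from Lemma \ref{lemma_6} vanishes under Assumption \ref{ass_3}(b). The only difference is cosmetic (you label the second term $J_3$ where the paper calls it $J_2$).
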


\begin{lemma}
	\label{lemma_12}
	The following inequality holds $\forall t\in \{0,1,\cdots\}$.
	\begin{align*}
	\mathbb{E}\left|\dfrac{1}{N}\sum_{i=1}r(x_t^i, u_t^i, \boldsymbol{\mu}_t^N, g_t^N)-r^{\mathrm{MF}}(\boldsymbol{\mu}_t^N, g_t^N, \pi_t)\right|\leq \dfrac{M_R}{\sqrt{N}}
	\end{align*}
\end{lemma}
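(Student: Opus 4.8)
The statement is the special-case counterpart of Lemma~\ref{lemma_7}, and the plan is to reuse that argument while invoking Assumption~\ref{ass_3}(a) to delete the term that previously required Lipschitz continuity of $r$ in the action distribution. Recall that in the proof of Lemma~\ref{lemma_7} the quantity of interest is split as $J_1+J_2$, where $J_1$ controls the gap between $r(\cdot,\cdot,\cdot,\cdot,\boldsymbol{\nu}_t^N)$ and $r(\cdot,\cdot,\cdot,\cdot,\nu^{\mathrm{MF}}(\boldsymbol{\mu}_t^N,g_t^N,\pi_t))$ and is bounded by $L_R\,\mathbb{E}|\boldsymbol{\nu}_t^N-\nu^{\mathrm{MF}}(\boldsymbol{\mu}_t^N,g_t^N,\pi_t)|_1\le L_R\sqrt{|\mathcal{U}|/N}$ through Lemma~\ref{lemma_4}, while $J_2$ is the genuine sampling-noise term. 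Under Assumption~\ref{ass_3}(a) the reward does not depend on the population action distribution at all, so $J_1\equiv 0$ and only $J_2$ remains, producing the bound $M/\sqrt{N}$ with no $\sqrt{|\mathcal{U}|}$ factor.

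Concretely, I would proceed as follows. First, apply definition~(\ref{eq_r_mf}) together with Assumption~\ref{ass_3}(a) to write $r^{\mathrm{MF}}(\boldsymbol{\mu}_t^N,g_t^N,\pi_t)=\sum_{x\in\mathcal{X}}\sum_{u\in\mathcal{U}} r(x,u,\boldsymbol{\mu}_t^N,g_t^N)\,\pi_t(x,\boldsymbol{\mu}_t^N,g_t^N)(u)\,\boldsymbol{\mu}_t^N(x)$, which now carries no $\nu^{\mathrm{MF}}$ argument. Second, substitute $\boldsymbol{\mu}_t^N(x)=\frac1N\sum_{i=1}^N\delta(x_t^i=x)$ so the sum over $x$ collapses to an average over agents evaluated at their own local states, turning the target into $\mathbb{E}\big|\frac1N\sum_{i=1}^N r(x_t^i,u_t^i,\boldsymbol{\mu}_t^N,g_t^N)-\frac1N\sum_{i=1}^N\sum_{u\in\mathcal{U}} r(x_t^i,u,\boldsymbol{\mu}_t^N,g_t^N)\pi_t(x_t^i,\boldsymbol{\mu}_t^N,g_t^N)(u)\big|$. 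Third, condition on $(\boldsymbol{x}_t^N,g_t^N)$: given these, the actions $\{u_t^i\}_{i=1}^N$ are conditionally independent and $\mathbb{E}[r(x_t^i,u_t^i,\boldsymbol{\mu}_t^N,g_t^N)\mid\boldsymbol{x}_t^N,g_t^N]=\sum_{u} r(x_t^i,u,\boldsymbol{\mu}_t^N,g_t^N)\pi_t(x_t^i,\boldsymbol{\mu}_t^N,g_t^N)(u)$, so the expression inside the absolute value is a normalized sum of conditionally independent, centered, $M$-bounded summands (using Assumption~\ref{ass_1}(a)). Fourth, normalize by $M$ (set $r_0\triangleq r/M$, so $|r_0|\le1$) and apply Lemma~\ref{lemma_main} with a single row, exactly as in the $J_2$ estimate of Lemma~\ref{lemma_7}, to conclude the bound $M/\sqrt{N}$.

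I do not anticipate any real difficulty: this is a strict simplification of an argument already executed in the proof of Lemma~\ref{lemma_7}. The only points worth a careful look are the bookkeeping around the $M$-normalization so that the variables handed to Lemma~\ref{lemma_main} meet its boundedness hypothesis, and the immediate observation that Assumption~\ref{ass_3}(a) removes every dependence on $\boldsymbol{\nu}_t^N$, which is precisely why Lemma~\ref{lemma_4} (and the $\sqrt{|\mathcal{U}|}$ term it contributes) is no longer needed here.
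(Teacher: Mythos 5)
Your proposal is correct and follows essentially the same route as the paper: the paper's proof of Lemma \ref{lemma_12} is precisely the $J_2$ estimate from Lemma \ref{lemma_7} (expand $r^{\mathrm{MF}}$ via $(\ref{eq_r_mf})$, collapse the sum over $x$ using the empirical distribution, condition on $(\boldsymbol{x}_t^N, g_t^N)$, normalize by $M$, and apply Lemma \ref{lemma_main}), with the $J_1$ term absent because Assumption \ref{ass_3}(a) removes the dependence on $\boldsymbol{\nu}_t^N$. No gaps beyond those already present in the paper's own argument.
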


The proofs of Lemma $\ref{lemma_11}-\ref{lemma_12}$ are relegated to Appendix \ref{app_proof_lemma_11}$-$\ref{app_proof_lemma_12}.

\subsection{Proof of the Theorem}

Let, $\boldsymbol{\pi}=\{\pi_t\}_{t\in\{0,1,\cdots\}}$ be an arbitrary policy sequence. We shall use the notations introduced in section \ref{app_approx_lemmas}. Additionally, we shall consider $\{\boldsymbol{\mu}_t, g_t\}$ as the local state distribution and the global state of the infinite agent system at time $t$. Consider the following.
\begin{align}
\label{eq_16_thm_2}
\begin{split}
&|V_N(\boldsymbol{x}_0, g_0, \boldsymbol{\pi})-V_{\infty}(\boldsymbol{\mu}_0, g_0, \boldsymbol{\pi})| \\
&\leq \sum_{t=0}^\infty\gamma^t\mathbb{E}\left|\dfrac{1}{N}\sum_{i=1}r(x_t^i, u_t^i, \boldsymbol{\mu}_t^N, g_t^N)-r^{\mathrm{MF}}(\boldsymbol{\mu}_t^N, g_t^N, \pi_t)\right| +\sum_{t=0}^\infty\gamma^t\underbrace{|\mathbb{E}[r^{\mathrm{MF}}(\boldsymbol{\mu}_t^N, g_t^N, \pi_t)]-\mathbb{E}[r^{\mathrm{MF}}(\boldsymbol{\mu}_t, g_t, \pi_t)]|}_{\triangleq J_t}\\
&\overset{(a)}{\leq} \left(\dfrac{M_R}{1-\gamma}\right)\dfrac{1}{\sqrt{N}} + \sum_{t=0}^{\infty}\gamma^tJ_t
\end{split}
\end{align}

Inequality $(a)$ follows from Lemma \ref{lemma_12}. Note that, using the definition $(\ref{app_eq_r_mf_comp})$, we can write the following.
\begin{align*}
&J_t= \left|\mathbb{E}[\tilde{r}^{\mathrm{MF}}(\boldsymbol{\mu}_t^N, g_t^N, \pi_{t:t})]-\mathbb{E}[\tilde{r}^{\mathrm{MF}}(\boldsymbol{\mu}_0, g_0, \pi_{0:t})]\right|\\
&\leq \sum_{k=0}^{t-1}\left|\mathbb{E}[\tilde{r}^{\mathrm{MF}}(\boldsymbol{\mu}_{k+1}^N, g_{k+1}^N, \pi_{k+1:t})]-\mathbb{E}[\tilde{r}^{\mathrm{MF}}(\boldsymbol{\mu}_k^N, g_k^N, \pi_{k:t})]\right|\\
&\overset{(a)}{\leq}\sum_{k=0}^{t-1}\left|\mathbb{E}[\tilde{r}^{\mathrm{MF}}(\boldsymbol{\mu}_{k+1}^N, g_{k+1}^N, \pi_{k+1:t})]-\mathbb{E}\left[\sum_{g\in\mathcal{G}}\tilde{r}^{\mathrm{MF}}(P^{\mathrm{MF}}(\boldsymbol{\mu}_k^N, g_k^N, \pi_k),  g, \pi_{k+1:t})P_G^{\mathrm{MF}}(\boldsymbol{\mu}_k^N, g_k^N, \pi_k)(g)\right]\right|\\
&\leq\sum_{k=0}^{t-1}\Bigg|\mathbb{E}\left[\mathbb{E}\left[\tilde{r}^{\mathrm{MF}}(\boldsymbol{\mu}_{k+1}^N, g_{k+1}^N, \pi_{k+1:t})\big| \boldsymbol{x}_k^N, g_k^N, \boldsymbol{u}_k^N \right]\right]\\
&\hspace{3cm}-\mathbb{E}\left[\sum_{g\in\mathcal{G}}\tilde{r}^{\mathrm{MF}}(P^{\mathrm{MF}}(\boldsymbol{\mu}_k^N, g_k^N, \pi_k),  g, \pi_{k+1:t})P_G^{\mathrm{MF}}(\boldsymbol{\mu}_k^N, g_k^N, \pi_k)(g)\right]\Bigg|\\
&\overset{(b)}{=}\sum_{k=0}^{t-1}\Bigg|\mathbb{E}\left[\mathbb{E}\left[\sum_{g\in\mathcal{G}}\tilde{r}^{\mathrm{MF}}(\boldsymbol{\mu}_{k+1}^N, g, \pi_{k+1:t})P_G(\boldsymbol{\mu}_k^N, g_k^N)(g)\Big|\boldsymbol{x}_k^N, g_k^N, \boldsymbol{u}_k^N\right]\right]\\
&\hspace{3cm}-\mathbb{E}\left[\sum_{g\in\mathcal{G}}\tilde{r}^{\mathrm{MF}}(P^{\mathrm{MF}}(\boldsymbol{\mu}_k^N, g_k^N, \pi_k),  g, \pi_{k+1:t})P_G^{\mathrm{MF}}(\boldsymbol{\mu}_k^N, g_k^N, \pi_k)(g)\right]\Bigg|\\
&\leq\sum_{k=0}^{t-1}\sum_{g\in\mathcal{G}}\mathbb{E}\left|\tilde{r}^{\mathrm{MF}}(\boldsymbol{\mu}_{k+1}^N, g, \pi_{k+1:t})P_G(\boldsymbol{\mu}_k^N, g_k^N)(g)-\tilde{r}^{\mathrm{MF}}(P^{\mathrm{MF}}(\boldsymbol{\mu}_k^N, g_k^N, \pi_k),  g, \pi_{k+1:t})P_G(\boldsymbol{\mu}_k^N, g_k^N)(g)\right|\\
&\leq \sum_{k=0}^{t-1}\sum_{g\in\mathcal{G}}\left|\tilde{r}^{\mathrm{MF}}(\boldsymbol{\mu}_{k+1}^N, g, \pi_{k+1:t})-\tilde{r}^{\mathrm{MF}}(P^{\mathrm{MF}}(\boldsymbol{\mu}_k^N, g_k^N, \pi_k),  g, \pi_{k+1:t})\right|\times P_G(\boldsymbol{\mu}_k^N, g_k^N)(g)\\
&\overset{(c)}{\leq} \sum_{k=0}^{t-1} \left[\left(\dfrac{M_RL_G}{Q_P-1}\right)(Q_P^{t-k-1}-1)+Q_RQ_P^{t-k-1}\right]\times \mathbb{E}\left|\boldsymbol{\mu}_{k+1}^N-P^{\mathrm{MF}}(\boldsymbol{\mu}_k^N, g_k^N, \pi_k)\right|_1\times \underbrace{\left|P_G(\boldsymbol{\mu}_k^N, g_k^N)\right|_1}_{=1}\\
&\overset{(d)}{\leq}\sum_{k=0}^{t-1}\left[\left(\dfrac{M_RL_G}{Q_P-1}\right)(Q_P^{t-k-1}-1)+Q_RQ_P^{t-k-1}\right]\times\dfrac{2}{\sqrt{N}}\sqrt{|\mathcal{X}|}\\
&=\left(\dfrac{2}{Q_P-1}\right)\left[\left(\dfrac{M_RL_G}{Q_P-1}+Q_R\right)\left(Q_P^t-1\right)-M_RL_Gt\right]\times \dfrac{1}{\sqrt{N}}\sqrt{|\mathcal{X}|}
\end{align*}
where we use the notation that $\boldsymbol{\mu}_0^N=\boldsymbol{\mu}_0$ and $g_0^N=g_0$. Inequality $(a)$ follows from Lemma \ref{app_lemma_r_mf_condition} whereas $(b)$ is a consequence of the fact that $\boldsymbol{\mu}_{k+1}^N$ and $g_{k+1}^N$ are conditionally independent given $\boldsymbol{x}_k^N, g_k^N, \boldsymbol{u}_k^N$. Moreover, $g_{k+1}^N\sim P_G(\boldsymbol{\mu}_k^N, g_k^N)$. Inequality $(c)$ follows from Lemma $\ref{app_special_lemma_r_mf}$ while $(d)$ is a consequence of Lemma $\ref{lemma_11}$. Substituting in $(\ref{eq_16})$, we obtain the following result.
\begin{align*}
&|V_N(\boldsymbol{x}_0, g_0, \boldsymbol{\pi})-V_{\infty}(\boldsymbol{\mu}_0, g_0, \boldsymbol{\pi})| 
\leq \left(\dfrac{M_R}{1-\gamma}\right)\dfrac{1}{\sqrt{N}} 
\\
&+ \left(\dfrac{2}{Q_P-1}\right)\left[\left(\dfrac{M_RS_G}{Q_P-1}+Q_R\right)\left\{\dfrac{1}{1-\gamma Q_P}-\dfrac{1}{1-\gamma}\right\}-\left(\dfrac{M_RL_G}{Q_P-1}\right)\dfrac{\gamma}{(1-\gamma)^2}\right]\times\dfrac{1}{\sqrt{N}}\sqrt{|\mathcal{X}|} 
\end{align*}

We conclude by noting that $|\sup_{\boldsymbol{\pi}}V_N(\boldsymbol{x}_0, g_0, \boldsymbol{\pi})-\sup_{\boldsymbol{\pi}}V_{\infty}(\boldsymbol{\mu}_0, g_0, \boldsymbol{\pi})| \leq\sup_{\boldsymbol{\pi}}|V_N(\boldsymbol{x}_0, g_0, \boldsymbol{\pi})-V_{\infty}(\boldsymbol{\mu}_0, g_0, \boldsymbol{\pi})| $ where the suprema are taken over the set of all admissible policy sequences $\Pi^{\infty}$.

\section{Proof of Lemma \ref{lemma_special_p_mf}}
\label{app_proof_lemma_sepcial_p_mf}

Observe that, 
\begin{align*}
|P^{\mathrm{MF}}(\boldsymbol{\mu}, g, \pi)-P^{\mathrm{MF}}(\boldsymbol{\bar{\mu}}, g, \pi)|_1
&\overset{(a)}{=}\Bigg| \sum_{x\in \mathcal{X}}\sum_{u\in \mathcal{U}}P(x, u, \boldsymbol{\mu}, g)\pi(x, \boldsymbol{\mu}, g)(u)\boldsymbol{\mu}(x)-  P(x, u, \boldsymbol{\bar{\mu}}, g)\pi(x, \boldsymbol{\bar{\mu}}, g)(u)\boldsymbol{\bar{\mu}}(x)\Bigg|_1\\
&\leq J_1 + J_2
\end{align*}

Equality (a) follows from the definition of $P^{\mathrm{MF}}(\cdot,\cdot, \cdot)$ as depicted in $(\ref{eq_mu_t_plus_1})$. The term $J_1$ satisfies the following bound.
\begin{align*}
J_1&\triangleq \sum_{x\in \mathcal{X}}\sum_{u\in \mathcal{U}}\Big|P(x, u, \boldsymbol{\mu}, g)-P(x, u, \boldsymbol{\bar{\mu}}, g)\Big|_1\times\pi(x, \boldsymbol{\mu}, g)(u)\boldsymbol{\mu}(x)\\
&\overset{(a)}{\leq}L_P|\boldsymbol{\mu}-\boldsymbol{\bar{\mu}}|_1 \times \underbrace{\sum_{x\in \mathcal{X}} \boldsymbol{\mu}(x)\sum_{u\in \mathcal{U}}\pi(x, \boldsymbol{\mu}, g)(u)}_{=1}
\overset{(b)}{=}  L_P|\boldsymbol{\mu}-\boldsymbol{\bar{\mu}}|_1 
\end{align*}

Inequality $(a)$ is a consequence of Assumption \ref{ass_1}(c) whereas $(b)$ follows from the fact that $\pi(x, \boldsymbol{\mu}, g)$, $\boldsymbol{\mu}$ are probability distributions. The second term, $J_2$ obeys the following bound.
\begin{align*}
J_2\triangleq &\sum_{x\in \mathcal{X}}  \sum_{u\in \mathcal{U}} \underbrace{|P(x, u, \boldsymbol{\bar{\mu}}, g)|_1}_{=1} \times |\pi(x, \boldsymbol{\mu}, g)(u)\boldsymbol{\mu}(x) - \pi(x, \boldsymbol{\bar{\mu}}, g)(u)\boldsymbol{\bar{\mu}}(x)|\\
&\leq \sum_{x\in \mathcal{X}}|\boldsymbol{\mu}(x)-\boldsymbol{\bar{\mu}}(x)|\underbrace{\sum_{u\in \mathcal{U}}\pi(x, \boldsymbol{\mu}, g)(u)}_{=1} + \sum_{x\in \mathcal{X}} \boldsymbol{\bar{\mu}}(x)\sum_{u\in \mathcal{U}}|\pi(x, \boldsymbol{\mu}, g)(u)-\pi(x, \boldsymbol{\bar{\mu}}, g)(u)|\\
&\overset{(a)}{\leq}|\boldsymbol{\mu}-\boldsymbol{\bar{\mu}}|_1 + \underbrace{\sum_{x\in \mathcal{X}} \boldsymbol{\bar{\mu}}(x)}_{=1} L_Q|\boldsymbol{\mu}-\boldsymbol{\bar{\mu}}|_1 \overset{(b)}{=}  (1+L_Q)|\boldsymbol{\mu}-\boldsymbol{\bar{\mu}}|_1
\end{align*}

Inequality $(a)$ results from Assumption \ref{ass_2} and the fact that $\pi(x, \boldsymbol{\mu}, g)$ is a probability distribution whereas $(b)$ utilizes the fact that $\boldsymbol{\bar{\mu}}$ is a distribution. This concludes the result.

\section{Proof of Lemma \ref{lemma_special_r_mf}}
\label{app_proof_lemma_sepcial_r_mf}

Observe that, 
\begin{align*}
|r^{\mathrm{MF}}(\boldsymbol{\mu}, g,  \pi)-r^{\mathrm{MF}}(\boldsymbol{\bar{\mu}}, g,\pi)|_1
&\overset{(a)}{=}\Bigg| \sum_{x\in \mathcal{X}}\sum_{u\in \mathcal{U}}r(x, u, \boldsymbol{\mu}, g)\pi(x, \boldsymbol{\mu}, g)(u)\boldsymbol{\mu}(x)-  r(x, u, \boldsymbol{\bar{\mu}}, g)\pi(x, \boldsymbol{\bar{\mu}}, g)(u)\boldsymbol{\bar{\mu}}(x)\Bigg|_1\\
&\leq J_1 + J_2
\end{align*}

Equality (a) follows from the definition of $r^{\mathrm{MF}}(\cdot,\cdot, \cdot)$ as depicted in $(\ref{eq_r_mf})$. The term $J_1$ satisfies the following bound.
\begin{align*}
J_1\triangleq \sum_{x\in \mathcal{X}}\sum_{u\in \mathcal{U}}\Big|r(x, u, \boldsymbol{\mu}, g)-r(x, u, \boldsymbol{\bar{\mu}}, g)\Big|\times\pi(x, \boldsymbol{\mu}, g)(u)\boldsymbol{\mu}(x)
&\overset{(a)}{\leq}L_R|\boldsymbol{\mu}-\boldsymbol{\bar{\mu}}|_1 \times \underbrace{\sum_{x\in \mathcal{X}} \boldsymbol{\mu}(x)\sum_{u\in \mathcal{U}}\pi(x, \boldsymbol{\mu}, g)(u)}_{=1}\\
&\overset{(b)}{=}  L_R|\boldsymbol{\mu}-\boldsymbol{\bar{\mu}}|_1 
\end{align*}

Inequality $(a)$ is a consequence of Assumption $\ref{ass_1}(b)$ whereas $(b)$ follows from the fact that $\pi(x, \boldsymbol{\mu}, g)$, $\boldsymbol{\mu}$ are probability distributions. The second term, $J_2$ obeys the following bound.
\begin{align*}
J_2&\triangleq \sum_{x\in \mathcal{X}}  \sum_{u\in \mathcal{U}} |r(x, u, \boldsymbol{\bar{\mu}}, g)| \times |\pi(x, \boldsymbol{\mu}, g)(u)\boldsymbol{\mu}(x) - \pi(x, \boldsymbol{\bar{\mu}}, g)(u)\boldsymbol{\bar{\mu}}(x)|\\
&\overset{(a)}{\leq} M_R\sum_{x\in \mathcal{X}}|\boldsymbol{\mu}(x)-\boldsymbol{\bar{\mu}}(x)|\underbrace{\sum_{u\in \mathcal{U}}\pi(x, \boldsymbol{\mu}, g)(u)}_{=1} + M_R\sum_{x\in \mathcal{X}} \boldsymbol{\bar{\mu}}(x)\sum_{u\in \mathcal{U}}|\pi(x, \boldsymbol{\mu}, g)(u)-\pi(x, \boldsymbol{\bar{\mu}}, g)(u)|\\
&\overset{(b)}{\leq}M_R|\boldsymbol{\mu}-\boldsymbol{\bar{\mu}}|_1+M_R\underbrace{\sum_{x\in \mathcal{X}} \boldsymbol{\bar{\mu}}(x)}_{=1} L_Q|\boldsymbol{\mu}-\boldsymbol{\bar{\mu}}|_1\overset{(c)}{=}  M_R(1+ L_Q)|\boldsymbol{\mu}-\boldsymbol{\bar{\mu}}|_1
\end{align*}

Inequality $(a)$ results from Assumption \ref{ass_1}(a) where $(b)$ follows from Assumption \ref{ass_2} and the fact that $\pi(x, \boldsymbol{\mu}, g)$ is a probability distribution. Finally, $(c)$ utilizes the fact that $\boldsymbol{\bar{\mu}}$ is a valid distribution. This concludes the result.

\section{Proof of Lemma \ref{app_special_lemma_p_mf}}
\label{app_special_lemma_proof_p_mf}
Fix $l\in\{0,1,\cdots\}$. We shall prove the lemma via induction on $r$. Note that, for $r=0$, we have,
\begin{align*}
|\tilde{P}^{\mathrm{MF}}(\boldsymbol{\mu}_l, g_{l:l}, \pi_{l:l})- \tilde{P}^{\mathrm{MF}}(\boldsymbol{\bar{\mu}}_l, g_{l:l}, \pi_{l:l})|_1 = 	|P^{\mathrm{MF}}(\boldsymbol{\mu}_l, g_{l}, \pi_{l})- P^{\mathrm{MF}}(\boldsymbol{\bar{\mu}}_l, g_{l}, \pi_{l})|_1\overset{(a)}{\leq} Q_P|\boldsymbol{\mu}_l-\boldsymbol{\bar{\mu}}_l|_1
\end{align*}

Inequality $(a)$ follows from Lemma \ref{lemma_special_p_mf}. Assume that the lemma holds for some $r\in\{0,1,\cdots\}$. We shall demonstrate below that the relation holds for $r+1$ as well. Note that,
\begin{align*}
&|\tilde{P}^{\mathrm{MF}}(\boldsymbol{\mu}_l, g_{l:l+r+1}, \pi_{l:l+r+1})- \tilde{P}^{\mathrm{MF}}(\boldsymbol{\bar{\mu}}_l, g_{l:l+r+1}, \pi_{l:l+r+1})|_1 \\
&=	|P^{\mathrm{MF}}(\tilde{P}^{\mathrm{MF}}(\boldsymbol{\mu}_l, g_{l:l+r}, \pi_{l:l+r}), g_{l+r+1}, \pi_{l+r+1})- P^{\mathrm{MF}}(\tilde{P}^{\mathrm{MF}}(\boldsymbol{\bar{\mu}}_l, g_{l:l+r}, \pi_{l:l+r}), g_{l+r+1}, \pi_{l+r+1})|_1\\
&\overset{(a)}{\leq} Q_P	|\tilde{P}^{\mathrm{MF}}(\boldsymbol{\mu}_l, g_{l:l+r}, \pi_{l:l+r})- \tilde{P}^{\mathrm{MF}}(\boldsymbol{\bar{\mu}}_l, g_{l:l+r}, \pi_{l:l+r})|_1\overset{(b)}{\leq} Q_P^{r+2}|\boldsymbol{\mu}_l-\boldsymbol{\bar{\mu}}_l|_1
\end{align*}

Inequality $(a)$ follows from Lemma \ref{lemma_special_p_mf} and $(b)$ is a consequence of the induction hypothesis.

\section{Proof of Lemma \ref{app_special_lemma_p_g_mf}}
\label{app_special_lemma_proof_p_g_mf}

Fix $l\in\{0,1,\cdots\}$. We shall prove the lemma via induction on $r$. Note that, for $r=0$, we have,
\begin{align}
\begin{split}
|\tilde{P}_G^{\mathrm{MF}}(\boldsymbol{\mu}_l, g_{l:l}, \pi_{l:l})- \tilde{P}_G^{\mathrm{MF}}(\boldsymbol{\bar{\mu}}_l, g_{l:l}, \pi_{l:l})|_1 &= 	|P_G^{\mathrm{MF}}(\boldsymbol{\mu}_l, g_{l}, \pi_{l})- P_G^{\mathrm{MF}}(\boldsymbol{\bar{\mu}}_l, g_{l}, \pi_{l})|_1\\
&=|P_G(\boldsymbol{\mu}_l, g_l)-P_G(\boldsymbol{\bar{\mu}}_l, g_l)|_1\overset{(a)}{\leq}L_G|\boldsymbol{\mu}_l-\boldsymbol{\bar{\mu}}_l|_1
\end{split}
\end{align}

Equality $(a)$ follows from Assumption $\ref{ass_1}(d)$. Assume that the lemma holds for some $r\in\{0,1,\cdots\}$. We shall now show that the lemma holds for $r+1$ as well. Note that,
\begin{align*}
&\sum_{l+1:l+r+1}\left|\tilde{P}_G^{\mathrm{MF}}(\boldsymbol{\mu}_l, g_{l:l+r+1}, \pi_{l:l+r+1})- \tilde{P}_G^{\mathrm{MF}}(\boldsymbol{\bar{\mu}}_l, g_{l:l+r+1}, \pi_{l:l+r+1})\right|_1 \\
&\overset{(a)}{=}\sum_{l+1:l+r+1}\Big|\tilde{P}_G^{\mathrm{MF}}(\boldsymbol{\mu}_l, g_{l:l+r}, \pi_{l:l+r})(g_{l+r+1})P_G^{\mathrm{MF}}(\tilde{P}^{\mathrm{MF}}(\boldsymbol{\mu}_l, g_{l:l+r}, \pi_{l:l+r}), g_{l+r+1}, \pi_{l+r+1})\\
&\hspace{3cm}-\tilde{P}_G^{\mathrm{MF}}(\boldsymbol{\bar{\mu}}_l, g_{l:l+r}, \pi_{l:l+r})(g_{l+r+1})P_G^{\mathrm{MF}}(\tilde{P}^{\mathrm{MF}}(\boldsymbol{\bar{\mu}}_l, g_{l:l+r}, \pi_{l:l+r}), g_{l+r+1}, \pi_{l+r+1})\Big|_1\leq J_1+J_2
\end{align*}

Equality $(a)$ follows from $(\ref{app_eq_pmf_g_comp})$. The first term $J_1$ can be upper bounded as follows.
\begin{align*}
J_1&\triangleq \sum_{l+1:l+r+1} \tilde{P}_G^{\mathrm{MF}}(\boldsymbol{\mu}_l, g_{l:l+r}, \pi_{l:l+r})(g_{l+r+1})\\
&\times\Big| P_G^{\mathrm{MF}}(\tilde{P}^{\mathrm{MF}}(\boldsymbol{\mu}_l, g_{l:l+r}, \pi_{l:l+r}), g_{l+r+1}, \pi_{l+r+1})-P_G^{\mathrm{MF}}(\tilde{P}^{\mathrm{MF}}(\boldsymbol{\bar{\mu}}_l, g_{l:l+r}, \pi_{l:l+r}), g_{l+r+1}, \pi_{l+r+1})\Big|_1\\
&= \sum_{l+1:l+r+1} \tilde{P}_G^{\mathrm{MF}}(\boldsymbol{\mu}_l, g_{l:l+r}, \pi_{l:l+r})(g_{l+r+1})\\
&\hspace{1cm}\times\Big| P_G(\tilde{P}^{\mathrm{MF}}(\boldsymbol{\mu}_l, g_{l:l+r}, \pi_{l:l+r}), g_{l+r+1})-P_G(\tilde{P}^{\mathrm{MF}}(\boldsymbol{\bar{\mu}}_l, g_{l:l+r}, \pi_{l:l+r}), g_{l+r+1})\Big|_1\\
&\overset{(a)}{\leq} L_G\sum_{l+1:l+r+1}\Big|\tilde{P}^{\mathrm{MF}}(\boldsymbol{\mu}_l, g_{l:l+r}, \pi_{l:l+r})-\tilde{P}^{\mathrm{MF}}(\boldsymbol{\mu}_l, g_{l:l+r}, \pi_{l:l+r})\Big|_1\times \tilde{P}_G^{\mathrm{MF}}(\boldsymbol{\mu}_l, g_{l:l+r}, \pi_{l:l+r})(g_{l+r+1})\\
&\overset{(b)}{\leq} L_G Q_P^{r+1} |\boldsymbol{\mu}_l-\boldsymbol{\bar{\mu}}_l|_1\sum_{l+1:l+r+1} \tilde{P}_G^{\mathrm{MF}}(\boldsymbol{\mu}_l, g_{l:l+r}, \pi_{l:l+r})(g_{l+r+1}) \overset{(c)}{=} L_G Q_P^{r+1} |\boldsymbol{\mu}_l-\boldsymbol{\bar{\mu}}_l|_1
\end{align*}

Inequality $(a)$ follows from Assumption $\ref{ass_1}(d)$ while $(b)$ results from Lemma \ref{app_special_lemma_p_mf}. Finally, $(c)$ can be shown following the definition $(\ref{app_eq_pmf_g_comp})$. The second term $J_2$ can be bounded as follows.
\begin{align*}
J_2&\triangleq \sum_{l+1:l+r+1}\underbrace{|P_G^{\mathrm{MF}}(\tilde{P}^{\mathrm{MF}}(\boldsymbol{\bar{\mu}}_l, g_{l:l+r}, \pi_{l:l+r}), g_{l+r+1}, \pi_{l+r+1})|_1}_{=1}\\
&\hspace{1cm}\times|\tilde{P}_G^{\mathrm{MF}}(\boldsymbol{\mu}_l, g_{l:l+r}, \pi_{l:l+r})(g_{l+r+1})-\tilde{P}_G^{\mathrm{MF}}(\boldsymbol{\bar{\mu}}_l, g_{l:l+r}, \pi_{l:l+r})(g_{l+r+1})|\\
&=\sum_{l+1:l+r+1} |\tilde{P}_G^{\mathrm{MF}}(\boldsymbol{\mu}_l, g_{l:l+r}, \pi_{l:l+r})-\tilde{P}_G^{\mathrm{MF}}(\boldsymbol{\bar{\mu}}_l, g_{l:l+r}, \pi_{l:l+r})|_1\overset{(a)}{\leq} L_G(1+Q_P+\cdots+Q_P^r)|\boldsymbol{\mu}_l-\boldsymbol{\bar{\mu}}_l|_1
\end{align*}

Inequality $(a)$ follows from induction hypothesis. This concludes the lemma. 

\section{Proof of Lemma \ref{app_special_lemma_r_mf}}
\label{app_special_lemma_proof_r_mf}

Note that the result readily follows for $r=0$ from Lemma \ref{lemma_special_r_mf}. Therefore, we assume $r\geq 1$.
\begin{align*}
&|\tilde{r}^{\mathrm{MF}}(\boldsymbol{\mu}_l,g_l, \pi_{l:l+r})-\tilde{r}^{\mathrm{MF}}(\boldsymbol{\bar{\mu}}_l,g_l, \pi_{l:l+r})|\\
&\overset{(a)}{\leq}\sum_{l+1:l+r}|r^{\mathrm{MF}}(\tilde{P}^{\mathrm{MF}}(\boldsymbol{\mu}_l, g_{l:l+r-1}, \pi_{l:l+r-1}), g_{l+r}, \pi_{l+r}) \tilde{P}_G^{\mathrm{MF}}(\boldsymbol{\mu}_l, g_{l:l+r-1}, \pi_{l:l+r-1})(g_{l+r})\\
&\hspace{1cm}-r^{\mathrm{MF}}(\tilde{P}^{\mathrm{MF}}(\boldsymbol{\bar{\mu}}_l, g_{l:l+r-1}, \pi_{l:l+r-1}), g_{l+r}, \pi_{l+r}) \tilde{P}_G^{\mathrm{MF}}(\boldsymbol{\bar{\mu}}_l, g_{l:l+r-1}, \pi_{l:l+r-1})(g_{l+r})|\leq J_1+J_2
\end{align*}

Inequality $(a)$ follows from the definition $(\ref{app_eq_r_mf_comp})$. The first term can be bounded as follows.
\begin{align*}
J_1&\triangleq \sum_{l+1:l+r}|r^{\mathrm{MF}}(\tilde{P}^{\mathrm{MF}}(\boldsymbol{\mu}_l, g_{l:l+r-1}, \pi_{l:l+r-1}), g_{l+r}, \pi_{l+r})|\\
&\hspace{2cm}\times|\tilde{P}_G^{\mathrm{MF}}(\boldsymbol{\mu}_l, g_{l:l+r-1}, \pi_{l:l+r-1})(g_{l+r})-\tilde{P}_G^{\mathrm{MF}}(\boldsymbol{\bar{\mu}}_l, g_{l:l+r-1}, \pi_{l:l+r-1})(g_{l+r})|\\
&\overset{(a)}{\leq} M_R\sum_{l+1:l+r-1}|\tilde{P}_G^{\mathrm{MF}}(\boldsymbol{\mu}_l, g_{l:l+r-1}, \pi_{l:l+r-1})-\tilde{P}_G^{\mathrm{MF}}(\boldsymbol{\bar{\mu}}_l, g_{l:l+r-1}, \pi_{l:l+r-1})|_1\\
&\overset{(b)}{\leq} M_RL_G(1+Q_P+\cdots+Q_P^{r-1})|\boldsymbol{\mu}_l-\boldsymbol{\bar{\mu}}_l|_1=\left(\dfrac{M_RL_G}{Q_P-1}\right)(Q_P^r-1)|\boldsymbol{\mu}_l-\boldsymbol{\bar{\mu}}_l|_1
\end{align*}

The bound $(a)$ can be proven using Assumption $\ref{ass_1}(a)$ and the definition of $r^{\mathrm{MF}}$ given in $(\ref{eq_r_mf})$. The bound $(b)$ follows from Lemma \ref{app_special_lemma_p_g_mf}. The term $J_2$ can be bounded as follows.
\begin{align*}
J_2&\triangleq \sum_{l+1:l+r} \tilde{P}_G^{\mathrm{MF}}(\boldsymbol{\bar{\mu}}_l, g_{l:l+r-1}, \pi_{l:l+r-1})(g_{l+r})\\
&\hspace{1cm}\times |r^{\mathrm{MF}}(\tilde{P}^{\mathrm{MF}}(\boldsymbol{\mu}_l, g_{l:l+r-1}, \pi_{l:l+r-1}), g_{l+r}, \pi_{l+r})-r^{\mathrm{MF}}(\tilde{P}^{\mathrm{MF}}(\boldsymbol{\bar{\mu}}_l, g_{l:l+r-1}, \pi_{l:l+r-1}), g_{l+r}, \pi_{l+r})|\\
&\overset{(a)}{\leq}\sum_{l+1:l+r} \tilde{P}_G^{\mathrm{MF}}(\boldsymbol{\bar{\mu}}_l, g_{l:l+r-1}, \pi_{l:l+r-1})(g_{l+r}) \times Q_R|\tilde{P}^{\mathrm{MF}}(\boldsymbol{\mu}_l, g_{l:l+r-1}, \pi_{l:l+r-1})-\tilde{P}^{\mathrm{MF}}(\boldsymbol{\bar{\mu}}_l, g_{l:l+r-1}, \pi_{l:l+r-1})|\\
&\overset{(b)}{\leq} Q_R Q_P^r|\boldsymbol{\mu}_l-\boldsymbol{\bar{\mu}}_l|_1\times \sum_{l+1:l+r}\tilde{P}_G^{\mathrm{MF}}(\boldsymbol{\bar{\mu}}_l, g_{l:l+r-1}, \pi_{l:l+r-1})(g_{l+r}) \overset{(c)}{=}Q_R Q_P^r|\boldsymbol{\mu}_l-\boldsymbol{\bar{\mu}}_l|_1
\end{align*}

Inequality $(a)$ follows from Lemma \ref{lemma_special_r_mf} while $(b)$ results from Lemma \ref{app_special_lemma_p_mf}. Finally, $(c)$ can be proven from the definition $(\ref{app_eq_pmf_g_comp})$. This concludes the lemma.

\section{Proof of Lemma \ref{lemma_11}}
\label{app_proof_lemma_11}

Notice the following decomposition. 
\begin{align*}
&\mathbb{E}\left|\boldsymbol{\mu}_{t+1}^N-P^{\mathrm{MF}}(\boldsymbol{\mu}^N_t, g_t^N, \pi_t)\right|_1\\
&\overset{(a)}{=}\mathbb{E}\left|\boldsymbol{\mu}_{t+1}^N-\sum_{x'\in \mathcal{X}}\sum_{u\in \mathcal{U}}P(x', u, \boldsymbol{\mu}_t^N, g_t^N)\pi_t(x', \boldsymbol{\mu}_t^N, g_t^N)(u)\boldsymbol{\mu}_t^N(x)\right|_1\\
&\overset{(b)}{=}\sum_{x\in \mathcal{X}}\mathbb{E}\Bigg|\dfrac{1}{N}\sum_{i=1}^N\delta(x_{t+1}^i=x)-\sum_{x'\in \mathcal{X}}\sum_{u\in \mathcal{U}}P(x', u, \boldsymbol{\mu}^N_t, g_t^N)(x)\pi_t(x', \boldsymbol{\mu}_t^N, g')(u)\dfrac{1}{N}\sum_{i=1}^N\delta(x_t^i=x')\Bigg|\\
&=\sum_{x\in \mathcal{X}}\mathbb{E}\left|\dfrac{1}{N}\sum_{i=1}^N\delta(x_{t+1}^i=x)-\dfrac{1}{N}\sum_{i=1}^N \sum_{u\in \mathcal{U}}P(x_t^i, u, \boldsymbol{\mu}^N_t, g_t^N)(x)\pi_t(x_t^i, \boldsymbol{\mu}_t^N, g_t^N)(u)\right|
\leq J_1+J_2
\end{align*}

Equality (a) uses the definition of $P^{\mathrm{MF}}(\cdot, \cdot, \cdot)$ as shown in $(\ref{eq_mu_t_plus_1})$ and equality $(b)$ uses the definition of $\boldsymbol{\mu}_{t}^N$. The term $J_1$ obeys the following.
\begin{align*}
J_1&\triangleq \dfrac{1}{N}\sum_{x\in \mathcal{X}}\mathbb{E}\left|\sum_{i=1}^N\delta(x_{t+1}^i=x)-\sum_{i=1}^N P(x_t^i, u_t^i, \boldsymbol{\mu}^N_t, g_t^N)(x)\right|\\
& = \dfrac{1}{N}\sum_{x\in \mathcal{X}}\mathbb{E}\left[\mathbb{E}\left[\left|\sum_{i=1}^N\delta(x_{t+1}^i=x)-\sum_{i=1}^N P(x_t^i, u_t^i, \boldsymbol{\mu}^N_t, g_t^N)(x)\right|\Bigg|\boldsymbol{x}_t^N, g_t^N, \boldsymbol{u}_t^N\right]\right]\overset{(a)}{\leq} \dfrac{1}{\sqrt{N}}\sqrt{|\mathcal{X}|}
\end{align*}

Inequality $(a)$ is obtained applying Lemma \ref{lemma_main}, and the facts that $\{x_{t+1}^i\}_{i\in \{1, \cdots, N\}}$ are conditionally independent given $\{\boldsymbol{x}_t^N, g_t^N, \boldsymbol{u}_t^N\}$, and the following relations hold
\begin{align*}
&\mathbb{E}\left[\delta(x_{t+1}^i=x)\Big| \boldsymbol{x}_t^N, g_t^N, \boldsymbol{u}_t^N\right] = P(x_t^i, u_t^i, \boldsymbol{\mu}^N_t, g_t^N)(x),\\
&\sum_{x\in \mathcal{X}} \mathbb{E}\left[\delta(x_{t+1}^i=x)\Big| \boldsymbol{x}_t^N, g_t^N, \boldsymbol{u}_t^N\right] = 1
\end{align*}
$\forall i\in \{1, \cdots, N\}$, and $\forall x\in \mathcal{X}$. The second term satisfies the following bound.
\begin{align*}
J_2&\triangleq \dfrac{1}{N}\sum_{x\in \mathcal{X}}\mathbb{E}\left|\sum_{i=1}^N P(x_t^i, u_t^i, \boldsymbol{\mu}^N_t, g_t^N)(x) - \sum_{i=1}^N\sum_{u\in \mathcal{U}} P(x_t^i, u, \boldsymbol{\mu}^N_t, g_t^N)(x)\pi_t(x_t^i, \boldsymbol{\mu}_t^N, g_t^N)(u) \right|\overset{(a)}{\leq} \dfrac{1}{\sqrt{N}}\sqrt{|\mathcal{X}|}
\end{align*} 

Inequality (a) is a result of Lemma \ref{lemma_main}. In particular, it uses the facts that, $\{u_t^i\}_{i\in \{1, \cdots, N\}}$ are conditionally independent given $\boldsymbol{x}_t^N, g_t^N$, and the following relations hold
\begin{align*}
&\mathbb{E}\left[P(x_t^i, u_t^i, \boldsymbol{\mu}^N_t, g_t^N)(x) \Big| \boldsymbol{x}_t^N, g_t^N\right] = \sum_{u\in \mathcal{U}} P(x_t^i, u, \boldsymbol{\mu}^N_t)(x)\pi_t(x_t^i, \boldsymbol{\mu}_t^N, g_t^N)(u),	\\
& \sum_{x\in \mathcal{X}} \mathbb{E}\left[P(x_t^i, u_t^i, \boldsymbol{\mu}^N_t, g_t^N)(x) \Big| \boldsymbol{x}_t^N, g_t^N\right] = 1
\end{align*}
$\forall i\in \{1,\cdots, N\}$, and $\forall x\in\mathcal{X}$. This concludes the Lemma.

\section{Proof of Lemma \ref{lemma_12}}
\label{app_proof_lemma_12}

Observe the following decomposition.  
\begin{align*}
&\mathbb{E}\left|\dfrac{1}{N}\sum_{i=1}r(x_t^i, u_t^i, \boldsymbol{\mu}_t^N, g_t^N)-r^{\mathrm{MF}}(\boldsymbol{\mu}_t^N, g_t^N, \pi_t)\right|\\
&\overset{(a)}{=}\mathbb{E}\Bigg|\dfrac{1}{N}\sum_{i=1}^Nr(x_t^i, u_t^i, \boldsymbol{\mu}_t^N, g_t^N)-\sum_{x\in \mathcal{X}}\sum_{u\in \mathcal{U}}r(x, u, \boldsymbol{\mu}_t^N, g_t^N)\pi_t(x, \boldsymbol{\mu}_t^N, g_t^N)(u)\boldsymbol{\mu}_t^N(x)\Bigg|\\
&\overset{(b)}{=}\mathbb{E}\Bigg|\dfrac{1}{N}\sum_{i=1}^Nr(x_t^i, u_t^i, \boldsymbol{\mu}_t^N, g_t^N)-\sum_{x\in \mathcal{X}}\sum_{u\in \mathcal{U}}r(x, u, \boldsymbol{\mu}_t^N, g_t^N)\pi_t(x, \boldsymbol{\mu}_t^N, g_t^N)(u)\dfrac{1}{N}\sum_{i=1}^N\delta(x_t^i=x)\Bigg|\\
&=\mathbb{E}\left|\dfrac{1}{N}\sum_{i=1}^Nr(x_t^i, u_t^i, \boldsymbol{\mu}_t^N, g_t^N)-\dfrac{1}{N}\sum_{i=1}^N\sum_{u\in \mathcal{U}}r(x_t^i, u, \boldsymbol{\mu}_t^N, g_t^N)\pi_t(x_t^i,  \boldsymbol{\mu}_t^N, g_t^N)(u)\right|\\
& =\dfrac{1}{N} \mathbb{E}\left[\mathbb{E}\left[ \left|\sum_{i=1}^N r(x_t^i, u_t^i, \boldsymbol{\mu}_t^N, g_t^N) - \sum_{i=1}^N\sum_{u\in \mathcal{U}} r(x_t^i, u, \boldsymbol{\mu}_t^N, g_t^N)\pi_t(x_t^i, \boldsymbol{\mu}_t^N, g_t^N)(u)\right|\Big|\boldsymbol{x}_t^N, g_t^N\right]\right]\\
& =\dfrac{M}{N} \mathbb{E}\left[\mathbb{E}\left[ \left|\sum_{i=1}^N r_0(x_t^i, u_t^i, \boldsymbol{\mu}_t^N, g_t^N) - \sum_{i=1}^N\sum_{u\in \mathcal{U}} r_0(x_t^i, u, \boldsymbol{\mu}_t^N, g_t^N)\pi_t(x_t^i, \boldsymbol{\mu}_t^N, g_t^N)(u)\right|\Big|\boldsymbol{x}_t^N, g_t^N\right]\right]\overset{(c)}{\leq}\dfrac{M_R}{\sqrt{N}}
\end{align*}

Equation (a) uses the definition of $r^{\mathrm{MF}}(\cdot, \cdot)$ as shown in $(\ref{eq_r_mf})$ while inequality $(b)$ uses the definition of $\boldsymbol{\mu}_t^N$. The function $r_0$ is given as, $r_0(\cdot, \cdot, \cdot, \cdot)\triangleq r(\cdot, \cdot, \cdot, \cdot)/M_R$. Finally, relation (c) follows from Lemma \ref{lemma_main}. In particular, it uses the fact that $\{u_t^i\}_{i\in \{1, \cdots, N\}}$ are conditionally independent given $\boldsymbol{x}_t^N, g_t^N$, and the following relations hold.
\begin{align*}
&|r_0(x_t^i, u_t^i, \boldsymbol{\mu}_t^N, g_t^N)|\leq 1,\\	
&\mathbb{E}\left[r_0(x_t^i, u_t^i, \boldsymbol{\mu}_t^N, g_t^N)\Big| \boldsymbol{x}_t^N, g_t^N\right] = \sum_{u\in \mathcal{U}} r_0(x_t^i, u, \boldsymbol{\mu}_t^N, g_t^N)\pi_t(x_t^i, \boldsymbol{\mu}_t^N, g_t^N)(u)
\end{align*}
$\forall i\in \{1, \cdots, N\}, \forall u\in \mathcal{U}$.

\newpage
\section{Sampling Procedure}
\label{sampling_process}

\begin{algorithm}
	\caption{Sampling Algorithm}
	\label{algo_2}
		\begin{algorithmic}[1]
	\STATE\textbf{Input:} $\boldsymbol{\mu}_0$, $g_0$, $\boldsymbol{\pi}_{\Phi_j}$,
	$P$, $P_G$, $r$
		\STATE Sample $u_0\sim {\pi}_{\Phi_j}(x_0,\boldsymbol{\mu}_0, g_0)$ 
		\STATE  $\boldsymbol{\nu}_0\gets\nu^{\mathrm{MF}}(\boldsymbol{\mu}_0, g_0,\pi_{\Phi_j})$ where $\nu^{\mathrm{MF}}$ is defined in $(\ref{eq_nu_t})$.
		\vspace{0.2cm}
		\STATE $t\gets 0$ 
		\STATE $\mathrm{FLAG}\gets \mathrm{FALSE}$
		\WHILE{$\mathrm{FLAG~is~} \mathrm{FALSE}$}
		{
			\STATE $\mathrm{FLAG}\gets \mathrm{TRUE}$ with probability $1-\gamma$.
			\STATE Execute $\mathrm{Update}$
			
		}
		\ENDWHILE
		
		\STATE $T\gets t$
		
		\STATE Accept   $(x_T,\boldsymbol{\mu}_T, g_T, u_T)$ as a sample.

		\vspace{0.2cm}
		\STATE $\hat{V}_{\Phi_j}\gets 0$, $\hat{Q}_{\Phi_j}\gets 0$
		
		\STATE $\mathrm{FLAG}\gets \mathrm{FALSE}$
		\STATE $\mathrm{SumRewards}\gets 0$
		\WHILE{$\mathrm{FLAG~is~} \mathrm{FALSE}$}
		{
			\STATE $\mathrm{FLAG}\gets \mathrm{TRUE}$ with probability $1-\gamma$.
			\STATE Execute $\mathrm{Update}$
			\STATE $\mathrm{SumRewards}\gets \mathrm{SumRewards} + r(x_t,u_t,\boldsymbol{\mu}_t, g_t, \boldsymbol{\nu}_t)$
			
		}
		\ENDWHILE
		
		\vspace{0.2cm}
		\STATE With probability $\frac{1}{2}$, $\hat{V}_{\Phi_j}\gets \mathrm{SumRewards}$. Otherwise $\hat{Q}_{\Phi_j}\gets \mathrm{SumRewards}$.  
		
		\STATE $\hat{A}_{\Phi_j}(x_T,\boldsymbol{\mu}_T, g_T, u_T)\gets 2(\hat{Q}_{\Phi_j}-\hat{V}_{\Phi_j})$.

	\STATE \textbf{Output}: $(x_T,\boldsymbol{\mu}_T, g_T, u_T)$ and $\hat{A}_{\Phi_j}(x_T,\boldsymbol{\mu}_T, g_T, u_T)$

	\vspace{0.3cm}
	
	\textbf{Procedure} $\mathrm{Update}$:

		\STATE $x_{t+1}\sim P(x_t,u_t,\boldsymbol{\mu}_t, g_t, \boldsymbol{\nu}_t)$.
		\STATE $g_{t+1}\sim P_G(x_t,u_t,\boldsymbol{\mu}_t, g_t, \boldsymbol{\nu}_t)$.
		\STATE  $\boldsymbol{\mu}_{t+1}\gets P^{\mathrm{MF}}(\boldsymbol{\mu}_t,\pi_{\Phi_j})$ where $P^{\mathrm{MF}}$ is defined in $(\ref{eq_mu_t_plus_1})$.
		\STATE  $u_{t+1}\sim {\pi}_{\Phi_j}(x_{t+1},\boldsymbol{\mu}_{t+1}, g_{t+1})$
		\STATE $\boldsymbol{\nu}_{t+1}\gets\nu^{\mathrm{MF}}(\boldsymbol{\mu}_{t+1}, g_{t+1}, \pi_{\Phi_j})$
		\STATE $t\gets t+1$\\
		\textbf{EndProcedure}
	\end{algorithmic}

	\vspace{0.2cm}
\end{algorithm}

\newpage
\bibliographystyle{plainnat} 
\bibliography{BibL}
\end{document}